\let\mathbb\varmathbb
\crefname{lemma}{Lemma}{Lemmas}
\crefname{fact}{Fact}{Facts}
\crefname{theorem}{Theorem}{Theorems}
\crefname{corollary}{Corollary}{Corollaries}
\crefname{claim}{Claim}{Claims}
\crefname{example}{Example}{Examples}
\crefname{algorithm}{Algorithm}{Algorithms}
\crefname{problem}{Problem}{Problems}
\crefname{definition}{Definition}{Definitions}
\crefname{exercise}{Exercise}{Exercises}
\newtheorem{theorem}{Theorem}[section]
\newtheorem*{theorem*}{Theorem}
\newtheorem{lemma}[theorem]{Lemma}
\newtheorem*{lemma*}{Lemma}
\newtheorem*{fact*}{Fact}
\newtheorem{proposition}[theorem]{Proposition}
\newtheorem*{proposition*}{Proposition}
\newtheorem{corollary}[theorem]{Corollary}
\newtheorem*{corollary*}{Corollary}
\newtheorem*{hypothesis*}{Hypothesis}
\newtheorem{conjecture}[theorem]{Conjecture}
\newtheorem*{conjecture*}{Conjecture}
\theoremstyle{definition}
\newtheorem{definition}[theorem]{Definition}
\newtheorem*{definition*}{Definition}
\newtheorem*{construction*}{Construction}
\newtheorem{example}[theorem]{Example}
\newtheorem*{example*}{Example}
\newtheorem*{question*}{Question}
\newtheorem{algorithm}[theorem]{Algorithm}
\newtheorem*{algorithm*}{Algorithm}
\newtheorem*{assumption*}{Assumption}
\newtheorem{problem}[theorem]{Problem}
\newtheorem*{problem*}{Problem}
\newtheorem*{openquestion*}{Open Question}
\newtheorem*{model*}{Model}
\theoremstyle{remark}
\newtheorem*{claim*}{Claim}
\newtheorem{remark}[theorem]{Remark}
\newtheorem*{remark*}{Remark}
\newtheorem*{observation*}{Observation}
\let\originalleft\left
\let\originalright\right
\renewcommand{\left}{\mathopen{}\mathclose\bgroup\originalleft}
\renewcommand{\right}{\aftergroup\egroup\originalright}
\let\latexparagraph\paragraph
\RenewDocumentCommand{\paragraph}{som}{%
  \IfBooleanTF{#1}
  {\latexparagraph*{#3}}
  {\IfNoValueTF{#2}
    {\latexparagraph{\maybe@addperiod{#3}}}
    {\latexparagraph[#2]{\maybe@addperiod{#3}}}%
  }%
}
\newcommand{\maybe@addperiod}[1]{%
  #1\@addpunct{.}%
}
\newcommand{\paren}[1]{(#1)}
\newcommand{\Paren}[1]{\left(#1\right)}
\newcommand{\abs}[1]{\lvert#1\rvert}
\newcommand{\Abs}[1]{\left\lvert#1\right\rvert}
\newcommand{\card}[1]{\lvert#1\rvert}
\newcommand{\Set}[1]{\left\{#1\right\}}
\newcommand{\norm}[1]{\lVert#1\rVert}
\newcommand{\Norm}[1]{\left\lVert#1\right\rVert}
\newcommand{\normt}[1]{\norm{#1}_2}
\newcommand{\Normt}[1]{\Norm{#1}_2}
\newcommand{\Snorm}[1]{\Norm{#1}^2}
\newcommand{\iprod}[1]{\langle#1\rangle}
\newcommand{\Esymb}{\mathbb{E}}
\newcommand{\Psymb}{\mathbb{P}}
\newcommand{\Vsymb}{\mathbb{V}}
\DeclareMathOperator*{\E}{\Esymb}
\DeclareMathOperator*{\Var}{\Vsymb}
\DeclareMathOperator*{\ProbOp}{\Psymb}
\renewcommand{\Pr}{\ProbOp}
\newcommand{\sm}{\setminus}
\newcommand\bdot\bullet
\DeclareMathOperator{\poly}{poly}
\DeclareMathOperator{\polylog}{polylog}
\DeclareMathOperator{\supp}{supp}
\newcommand{\iid}{i.i.d.\xspace}
\newcommand{\Hoelder}{H\"{o}lder\xspace}
\newcommand{\Z}{\mathbb Z}
\newcommand{\N}{\mathbb N}
\newcommand{\R}{\mathbb R}
\newcommand{\cA}{\mathcal A}
\newcommand{\bbP}{\mathbb P}
\renewcommand{\leq}{\leqslant}
\renewcommand{\geq}{\geqslant}
\renewcommand{\ge}{\geqslant}
\let\epsilon=\varepsilon
\numberwithin{equation}{section}
\newcommand*{\Id}{\mathrm{Id}}
\newcommand*{\transpose}[1]{{#1}{}^{\mkern-1.5mu\mathsf{T}}}
\newcommand{\kldiv}[2]{D_{\mathrm{KL}}\Paren{#1 \| #2}}
\title{On the well-spread property and its relation to linear regression\thanks{This project has received funding from the European Research Council (ERC) under the European Union's Horizon 2020 research and innovation programme (grant agreement No 815464).}}
\author{
  Hongjie Chen\thanks{ETH Z\"urich.}
  \and
  Tommaso d'Orsi\footnotemark[2]
}
\begin{document}


\maketitle
\begin{abstract}
  We consider the robust linear regression model $\bm y = X \beta^* +\bm \eta$, where an adversary oblivious to the design $X\in \R^{n\times d}$ may choose $\bm \eta$ to corrupt all but a (possibly vanishing) fraction of the observations $\bm y$ in an arbitrary way.
Recent work \cite{d2021consistent, d2021consistentICML} has introduced efficient algorithms for consistent recovery of the parameter vector. These algorithms crucially rely on the design matrix being well-spread (a matrix is well-spread if its column span is far from any sparse vector).
  
In this paper, we show that there exists a family of design matrices lacking well-spreadness such that consistent recovery of the parameter vector in the above robust linear regression model is information-theoretically impossible.

We further investigate the average-case time complexity of certifying well-spreadness of random matrices. We show that it is possible to efficiently certify whether a given $n$-by-$d$ Gaussian matrix is well-spread if the number of observations is quadratic in the ambient dimension. We complement this result by showing rigorous evidence ---in the form of a lower bound against low-degree polynomials--- of the computational hardness of this same certification problem when the number of observations is $o(d^2)$.
\end{abstract}

\clearpage

\microtypesetup{protrusion=false}
\tableofcontents{}
\microtypesetup{protrusion=true}

\clearpage

\section{Introduction}\label{section:introduction}
For a subspace $V\subseteq \R^{n}$, the well-spreadness property describes how close sparse vectors are to it. 

\begin{definition}[Well-spreadness] \label[definition]{definition:spreadness}
  A subspace $V\subseteq \R^{n}$ is $m$-spread if for any $v\in V$ and any $S\subseteq [n]$ of size $\card{S}\geq n-m$, we have
  \begin{align*}
    \normt{v_{S}} \geq \LB{1} \cdot \normt{v}\,,
  \end{align*}
  where $v_s$ denotes the projection of $v$ onto the coordinates in $S$. We say that a matrix is $m$-spread if its column span is.
\end{definition}
 
Due to its connection to distortion \cite{guruswami2010almost}, Euclidean section properties \cite{baraniuk2008simple} and restricted isometry properties (RIP) \cite{allen2016restricted, spread-sparse-matrices}, well-spread subspaces have been studied in the context of error-correction over the reals \cite{candes2005decoding, guruswami2008euclidean}, compressed sensing matrices for low compression factors \cite{kashin2007remark, donoho2006compressed} convex geometry \cite{gluskin1984norms, kashin2007remark} and metric embeddings \cite{indyk2007uncertainty}. 
Recently, an unforeseen connection between well-spreadness and oblivious adversarial regression models  has emerged \cite{d2021consistent, d2021consistentICML}.

While relations between  properties of the design matrix and algorithmic guarantees are not new ---restricted eigenvalue condition, restricted isometry property (RIP) and distortion are all known to be sufficient to design efficient algorithms for recovering the encoded sparse vector (see \cite{kashin2007remark, zhang2014lower})--- the connection between well-spreadness and oblivious regression appears intriguing as: (i) there is currently no significant evidence of the necessity of this property for recovery, and (ii) there is no indication of a gap between exponential time and polynomial time algorithms depending on the well-spreadness of the design. Investigating this relation is the main focus of this paper.

\paragraph{Oblivious regression}
Oblivious adversarial models offer a convenient framework to find the weakest assumptions under which one can efficiently recover  structured signal from noisy data with \textit{vanishing error}.\footnote{\textit{Adaptive} adversarial models, where the adversary has access to the data, are not suitable to study these questions  as part of the signal may be removed and hence impossible to reconstruct.} Once the observations are sampled, an adversary is allowed to add arbitrary noise \textit{without} accessing the data and with the additional constraint that   for an $\alpha$ fraction of the observations (possibly vanishing small) the noise must have small magnitude.
In the context of regression this idea can be formalized into the following problem.

\begin{problem}[Oblivious linear regression] \label[problem]{problem:oblivious-regression}
  Given observations\footnote{We use bold face to denote random variables.} $(X_1, \bm{y}_1), \ldots, (X_n, \bm{y}_n)$ following the linear model $\bm{y}_i= \iprod{X_i, \beta^*} + \bm{\eta}_i$, where $X_i\in \R^d$, $\beta^*\in \R^d$, and $\bm \eta_i$ is a symmetrically distributed random variable with $\min_{i\in[n]} \bbP\Set{\Abs{\bm \eta_i}\leq 1} = \alpha$, the goal is to to find an estimator $\hat{\beta}$ for $\beta^*$ achieving small \textit{squared parameter error} $\normt{\hat{\beta}-\beta^*}^2$.\footnote{We remark that we may analogously ask for small squared prediction error $\tfrac{1}{n}\normt{X(\hat{\beta}-\beta^*)}^2$, at the coarseness of this discussion the two may be considered equivalent. We also remark that for small values of $\alpha$, symmetry of the noise is necessary \cite{d2021consistentICML}.}
\end{problem}

We may conveniently think of the (possibly vanishingly small) $\alpha$ fraction of entries of $\bm y$ with small noise as the uncorrupted observations. Moreover, as moments are not required to exist, this noise model captures heavy-tailed distributions.

A flurry of works \cite{tsakonas2014convergence, bhatia2017consistent,  suggala2019adaptive, pesme2020online, d2021consistent, d2021consistentICML} has led to the design of efficient and consistent\footnote{An estimator is said to be consistent if its error tends to zero as the number of observations grows.} algorithms that achieve provably optimal error guarantees and sample complexity for oblivious regression.
The guarantees of these algorithms are rather surprising.
For classical regression with Gaussian noise $\Gau{0}{\sigma^2}$, it is known that the optimal error convergence is $O(\sigma^2\cdot d/n)$ \cite{wainwright2019high}. 
For oblivious regression, efficient algorithms obtain squared parameter error bounded by $O(d/(\alpha^2\cdot n))$ and thus are consistent for $n\geq\higherorder{d/\alpha^2}$. \footnote{More generally, we may assume in \cref{problem:oblivious-regression} that $\min_{i\in[n]} \bbP\Set{\Abs{\bm \eta_i}\leq\tau}=\alpha$ by introducing another parameter~$\tau~>~0$. In this case, the error bound becomes $O(\tau^2d/(\alpha^2\cdot n))$ while the analysis of the error bound is essentially the same. In this paper, we set~$\tau=1$ for simplicity.}
As Gaussian distributions $\Gau{0}{\sigma^2}$ can be modeled as noise in \cref{problem:oblivious-regression} with $\alpha=O(1/\sigma)$, these error convergence rates are the same up to constant factors. In other words, even though \cref{problem:oblivious-regression} allows for a large variety of complicated noise distributions, it is possible to achieve  error guarantees similar to those one would be able to achieve under the special case of Gaussian noise.

It turns out that the catch is in the design matrix $X\in \R^{n\times d}$. Algorithms for oblivious regression require the column span $\text{cspan}(X)$ of $X$ to be well-spread. If  $\text{cspan}(X)$ is $\LB{d/\alpha^2}$-spread, then the above guarantees can be achieved efficiently. On the other hand, \textit{no algorithm} is known to obtain non-trivial error guarantees as soon as the design matrix is only $o(d/\alpha^2)$-spread, even in exponential time. This picture raises an important question concerning the relation between oblivious regression and well-spreadness:

\begin{itemize}
  \item[]
  \emph{
    Is the well-spreadness requirement a fundamental limitation of current algorithms or is there a sharp phase transition in the  landscape of the problem? Is this phase transition a computational or statistical phenomenon? 
  }
\end{itemize}
In this paper we provide, to a large extent, answers to these and related questions.

\subsection{Results}\label{section:results}

\paragraph{Information-theoretic lower bounds regarding well-spreadness}
Our first result is the non-existence of algorithms with non-trivial error guarantees for oblivious regression with design matrices lacking well-spreadness.

\begin{theorem} \label{thm:main-it-concise}
  Let $\alpha = \alpha(n) \in (0,1)$.
  For arbitrary $\gamma=\gamma(n)>0$, there exist:
  \begin{enumerate}
    \item a matrix $X\in\R^{n\times d}$ with $\max \Set{\Omega\Paren{\frac{\log d}{\alpha^2}}\,, \Omega\Paren{\frac{d}{\alpha}}}$-spreadness and $X^\top X=n\cdot\Id$,
    \item a distribution $\mD_\beta$ over $d$-dimensional vectors, and
    \item a distribution $\mD_\eta$ ---independent of $\mD_\beta$--- over $n$-dimensional vectors with independent, symmetrically distributed entries satisfying $\min_{i\in[n]}\bbP_{\bm \eta\sim\mD_\eta} \Paren{\Abs{\bm \eta_i}\leq 1}=\alpha$,
  \end{enumerate}
  such that for every estimator $\hat{\beta}\,:\R^n\rightarrow\R^d$, given as input $X$ and $\bm y = X \bm\beta^* + \bm\eta$ with $\bm\beta^*\sim\mD_{\beta}$ and $\bm\eta\sim\mD_{\eta}$ sampled independently, one has
  \begin{align*}
    \E_{} \Normt{\hat{\beta}(\bm y)-\bm \beta^*}^2\geq \gamma\,.
  \end{align*}
\end{theorem}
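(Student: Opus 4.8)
The plan is to prove this as a Bayesian minimax lower bound, via a van Trees (Bayesian Cram\'er--Rao) argument. The idea is to pick $X$ so that its column span contains a $d$-dimensional family of vectors all supported on a short coordinate block $B$, to put a Gaussian prior $\mD_\beta$ on the corresponding directions of $\beta^*$, and to take $\mD_\eta$ i.i.d.\ from the ``least informative'' symmetric density compatible with $\bbP\Paren{\abs{\bm\eta_i}\le 1}=\alpha$, i.e.\ one with minimal Fisher information for location.

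\emph{Construction of $X$.} Fix $B\subseteq[n]$ with $\card B = s \asymp \max\Set{\tfrac{\log d}{\alpha^2},\,\tfrac d\alpha}$ (so $s\ge d$ since $\alpha<1$), let the $d$ columns of $X$ be $\sqrt n$ times an orthonormal system spanning a generic $d$-dimensional subspace of $\R^B$, and set all entries outside $B$ to zero; then $X^\top X = n\cdot\Id$. Every $v\in\text{cspan}(X)$ is supported on $B$, so deleting any $\le m$ coordinates of $v$ amounts to deleting $\le m$ coordinates inside $\R^B$; since a generic $d$-dimensional subspace of $\R^B$ is $\Omega(\card B)$-spread once $d\le\tfrac12\card B$ (the standard ``random subspaces avoid sparse vectors'' phenomenon), $X$ is $\Omega(s)$-spread, i.e.\ $\max\set{\Omega(\log d/\alpha^2),\Omega(d/\alpha)}$-spread. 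The precise parameter $\max\set{\log d/\alpha^2,\,d/\alpha}$ is what the spreadness estimate should deliver for this construction after optimization: the $d/\alpha$ term is essentially the room needed to host a $d$-frame with an $\Omega(1/\alpha)$-slack per direction, and the $\log d/\alpha^2$ term comes from the union bound over the $d$ directions in the concentration step.

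\emph{Prior, noise, and the lower bound.} Take $\mD_\beta = \Gau{0}{\sigma^2\Id_d}$ with $\sigma\asymp 1/\alpha$, and on the coordinates $i\in B$ let $\mD_\eta$ have i.i.d.\ entries from a symmetric density $f$ with $\int_{-1}^1 f = \alpha$ minimizing the location Fisher information $J(f)$; a Laplace or Cauchy density of scale $\Theta(1/\alpha)$ gives $J(f)=\Theta(\alpha^2)$, which is optimal because $J(f)=\Omega(\alpha^2)$ for every density with $\int_{-1}^1 f=\alpha$ (a short $\sqrt f$/Sobolev computation). Outside $B$ the entries of $\mD_\eta$ follow any fixed symmetric density with the right mass on $[-1,1]$, since there $\bm y_i=\bm\eta_i$ is independent of $\bm\beta^*$. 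The Fisher information of $\bm y = X\bm\beta^* + \bm\eta$ for $\bm\beta^*$ is $\sum_{i\in B} J(f)\,X_i X_i^\top = J(f)\,X^\top X = nJ(f)\,\Id_d$ (rows outside $B$ vanish, but the $s$ surviving rows have squared norm $\asymp nd/s$ each, so the total information still scales with $n$), and the prior contributes $\sigma^{-2}\Id_d$. Van Trees' inequality then yields, for every estimator,
\begin{align*}
  \E\Normt{\hat\beta(\bm y)-\bm\beta^*}^2 \;\ge\; \frac{d}{nJ(f)+\sigma^{-2}} \;=\; \Omega\!\Paren{\frac{d}{\alpha^2 n}}\,,
\end{align*}
so choosing $d\asymp\gamma\,\alpha^2 n$ (which keeps $d\le s\le n$ whenever $\gamma=O(1/\alpha)$, hence for arbitrary $\gamma$ when $\alpha=\alpha(n)\to 0$) makes the right-hand side $\ge\gamma$. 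A two-point Le Cam bound would only give $\gamma=O(1/(\alpha^2 n))$ — it bounds the total-variation distance of the two observation laws by $\tfrac12\alpha\,\normo{X(\beta_1-\beta_2)}$ using near-translation-invariance of $f$ on $B$ — so spreading the hard instance over $\asymp d$ hidden directions at once is what unlocks the full range of $\gamma$.

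\emph{Main obstacle.} The delicate part is the joint calibration of $s$ (equivalently of $d$): $B$ must be short enough that the $d$ hidden directions are genuinely hard to estimate — so the forced error $\asymp d/(\alpha^2 n)$ reaches the target $\gamma$ — yet long enough that what survives is still $\max\set{\Omega(\log d/\alpha^2),\Omega(d/\alpha)}$-spread; reconciling these competing constraints, with compatible constants in both regimes of the $\max$ and for all admissible $(\alpha,\gamma)$, is where the bookkeeping concentrates. One must also verify the extremal $f$ is regular enough for van Trees to apply (Laplace and Cauchy are), after which the information-theoretic core is just the two elementary facts $J(f)=\Theta(\alpha^2)$ and $\mathrm{Fisher}(\bm y)=nJ(f)\,\Id_d$.
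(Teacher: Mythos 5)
Your van Trees argument proves the minimax rate $\E\Normt{\hat\beta-\bm\beta^*}^2 = \Omega\Paren{d/(\alpha^2 n)}$, which is a fixed quantity depending on $n,d,\alpha$. That is the \emph{standard} Cram\'er--Rao rate, and it holds for \emph{every} design with $X^\top X = n\Id$ regardless of spreadness — your bound never actually uses the sparse-subspace structure except to verify the spreadness claim, and it is entirely compatible with the existence of consistent estimators (indeed for well-spread designs the paper's cited algorithms achieve error $O(d/(\alpha^2 n))$). The theorem, however, demands $\gamma = \gamma(n)$ \emph{arbitrary}; your attempt to bridge the gap by setting $d \asymp \gamma\alpha^2 n$ caps out at $\gamma = O(1/\alpha)$, because for larger $\gamma$ one would need $d/\alpha > n$, which makes the required spreadness parameter exceed the ambient dimension. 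This cap is acknowledged in your own parenthetical; it does not cover, say, constant $\alpha$ with large $\gamma$, or $\gamma(n)$ growing faster than $1/\alpha(n)$.

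The piece that is structurally missing is a \emph{scale-invariance} property of the noise. The paper's noise $\mD_\eta$ is the symmetric geometric distribution, a discrete law with an atom of mass exactly $\alpha$ at $0$: scaling $\bm\eta \mapsto \sigma\bm\eta$ leaves $\bbP\Paren{\abs{\bm\eta_i}\le 1}=\alpha$ and the pairwise KL divergences unchanged, which lets the paper simultaneously scale $\bm\beta^* \mapsto \sigma\bm\beta^*$ to push the error to any target $\gamma$ with $n,d$ held fixed. Your continuous Laplace/Cauchy noise does not have this property: rescaling a Laplace of scale $\Theta(1/\alpha)$ by $\sigma$ shrinks $\bbP(\abs{\bm\eta_i}\le1)$ to $\Theta(\alpha/\sigma)$, violating the constraint. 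And you cannot transplant the paper's discrete noise into your framework, because a discrete law has no Lebesgue density and hence no (finite) location Fisher information, so van Trees does not apply. The Fano-plus-KL route the paper uses is precisely what sidesteps this: KL between shifted discrete laws is well defined, bounded by $O(\alpha^2)$ per unit shift, and invariant under scaling. So the proposal is a genuinely different method, but it proves a strictly weaker statement and the step from "rate lower bound" to "arbitrary $\gamma$" fails; filling it would essentially require abandoning the Fisher-information machinery for a divergence-based argument of the paper's type.
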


A more precise version of \cref{thm:main-it-concise} is given by \cref{thm:main-it}. It states that there exists a natural distribution $\mD_X$ over $\R^{n\times d}$ such that \whp (i) ${\bm X\sim\mD_X}$ is $\max\{\Omega\paren{\frac{\log d}{\alpha^2}}\,, \Omega\paren{\frac{d}{\alpha}}\}$-spread, and (ii) given a matrix $X$ sampled from $\mD_X$ with $\max\{\Omega\paren{\frac{\log d}{\alpha^2}}\,, \Omega\paren{\frac{d}{\alpha}}\}$-spreadness as input for \cref{problem:oblivious-regression}, no estimator can obtain bounded error guarantees, for \textit{any} number of observations.
We remark that, in our construction we utilize the condition~${\transpose{X}X = n\cdot\Id}$ only to make the squared parameter error $\norm{\hat{\bt}-\bt^*}_2^2$ and squared prediction error $\frac{1}{n}\norm{X(\hat{\bt}-\bt^*)}_2^2$ equivalent.
Thus, \cref{thm:main-it-concise} immediately yields a lower bound for the prediction error as well.
This shows there is a fundamental difference between the statistical hardness of oblivious regression and its classical counterpart (with sub-gaussian noise): a statistical price to pay for robustness against oblivious adversaries.

It is fascinating to notice that, on one hand, current algorithms \cite{d2021consistent} obtain non-trivial error guarantees only for $\Omega(d/\alpha^2)$-spread design matrices; on the other hand, although those hard oblivious regression instances constructed in \cref{thm:main-it-concise} defy any non-trivial error guarantees, they do not rule out the existence of consistent estimators for other families of $o(d/\alpha^2)$-spread design matrices.
Thus, there remains a family of design matrices for which it is not known whether consistent oblivious regression can be achieved, and if so whether it can be done efficiently. This remains a pressing open question.

\paragraph{Certifying well-spreadness} 
As the well-spreadness of design matrices can guarantee efficient recovery in oblivious regression, it is natural to ask whether one can efficiently \textit{certify} the well-spreadness of a matrix. Similar questions have indeed been  investigated for RIP, due to its application in compressed sensing \cite{bandeira2013certifying, tillmann2014computational, NatarajanW14, koiran2014hidden, wang2016average, Weed18, ding2021average}. 
Unsurprisingly, certifying well-spreadness turns out to be NP-hard in the worst case (see \cref{thm:decide-spreadness-is-np-hard} for a proof). On the other hand, in the context of average-case analysis, there exists a regime where efficient algorithms can certify well-spreadness.

\begin{theorem}[Algorithms for certifying well-spreadness] \label[theorem]{theorem:main-algorithm}
  Fix arbitrary constants $\delta\in(0,1)$ and $C>0$.
  Let $\bm X\sim\Gau{0}{1}^{n\times d}$ with $n\geq Cd^2$. There exist a polynomial-time algorithm and a constant $C'=C'(C,\delta)\in(0,1)$ such that
  \begin{enumerate}
    \item $\bm X$ is $\bB{C'n,\delta}$-spread with probability $1-o(1)$;
    \item if $\bm X$ is not $\bB{C'n,\delta}$-spread, the algorithm outputs NO;
    \item if $\bm X$ is $\bB{C'n,\delta}$-spread, the algorithm outputs YES with probability $1-o(1)$.
  \end{enumerate}
\end{theorem}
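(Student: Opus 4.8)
Set $z_i \defeq (X^\top X)^{-1/2}X_i \in \R^d$, so that $\sum_{i=1}^n z_i z_i^\top = \Id_d$ and the Gram matrix $\big(\iprod{z_i,z_j}\big)_{i,j}$ is the orthogonal projection onto $\mathrm{cspan}(X)$. A direct computation with generalized eigenvalues shows that $\mathrm{cspan}(X)$ is $(m,\delta)$-spread iff $\max_{|T|\le m}\lmax\big(\sum_{i\in T}z_i z_i^\top\big)\le 1-\delta^2$. Writing $\lmax\big(\sum_{i\in T}z_i z_i^\top\big)=\max_{\normt w=1}\sum_{i\in T}\iprod{z_i,w}^2$ and bounding the sum of the $m$ largest nonnegative terms by $\sqrt m$ times their $\ell_2$ norm gives, for all $|T|\le m$,
\[
 \lmax\Big(\sum_{i\in T}z_i z_i^\top\Big)\ \le\ \sqrt{m\,\Lambda_4},\qquad \Lambda_4\ \defeq\ \max_{\normt w=1}\sum_{i=1}^n\iprod{z_i,w}^4\ =\ \max_{v\in\mathrm{cspan}(X),\,\normt v=1}\norm{v}_4^4 .
\]
Thus $\Lambda_4$ is the $\ell_4$-over-$\ell_2$ ratio on the column span, and it suffices to \emph{certify} $\Lambda_4=O(1/n)$ (alternatively one may use $\normo v\ge\normt v^3/\norm v_4^2$). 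The algorithm computes a deterministic, polynomial-time upper bound $\widehat\Lambda\ge\Lambda_4$ (below) and outputs \textup{YES} iff $m\,\widehat\Lambda\le(1-\delta^2)^2$; if it says \textup{YES} then $\max_{|T|\le m}\lmax(\cdots)\le\sqrt{m\widehat\Lambda}\le 1-\delta^2$, so $X$ is spread — this gives property~(2).

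\paragraph{The certificate}
Substituting $u=(X^\top X)^{-1/2}w$ turns this into $\Lambda_4=\max_{u^\top(X^\top X)u=1}\iprod{\mathrm{vec}(u u^\top),\widehat\Phi\,\mathrm{vec}(u u^\top)}$, where $\widehat\Phi\defeq\sum_i \mathrm{vec}(X_i X_i^\top)\mathrm{vec}(X_i X_i^\top)^\top$ is the flattened empirical fourth‑moment matrix. Write $\widehat\Phi=n\Sigma+F$, where $\Sigma$ is the population matrix — acting on symmetric $A$ by $A\mapsto(\Tr A)\Id+2A$, so $\iprod{\mathrm{vec}(A),\Sigma\,\mathrm{vec}(A)}=(\Tr A)^2+2\normf A^2$ — and decompose the fluctuation relative to $e\defeq\mathrm{vec}(\Id)/\sqrt d$ as $F=a\,e e^\top+(e b^\top+b e^\top)+D$, with $a\in\R$ and $b,D$ supported on the traceless symmetric matrices. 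Using $\iprod{\mathrm{vec}(u u^\top),\Sigma\,\mathrm{vec}(u u^\top)}=3\normt u^4$, that $\mathrm{vec}(u u^\top)$ pairs with $e$ to give exactly $\normt u^2/\sqrt d$, the identity $\normf{u u^\top-\tfrac{\normt u^2}{d}\Id}^2=\normt u^4(1-\tfrac1d)$, and $\normt u^2\le 1/\lmin(X^\top X)$ on the constraint, one obtains
\[
 \Lambda_4\ \le\ \widehat\Lambda\ \defeq\ \frac{1}{\lmin(X^\top X)^2}\Big(3n+\tfrac1d|a|+\tfrac{2}{\sqrt d}\norm b+\Normop D\Big).
\]
Here $a=\tfrac1d\big(\sum_i\normt{X_i}^4-nd(d+2)\big)$, $\norm b=\tfrac{n}{\sqrt d}\normf{N_\perp}$ with $N\defeq\tfrac1n\sum_i\normt{X_i}^2X_i X_i^\top-(d+2)\Id$ and $N_\perp$ its traceless part, and $D\defeq\sum_i\mathrm{vec}(Y_i)\mathrm{vec}(Y_i)^\top-2n\,\Pi_0$ with $Y_i\defeq X_i X_i^\top-\tfrac{\normt{X_i}^2}{d}\Id$ and $\Pi_0$ the projection onto traceless symmetric matrices; all are polynomial‑time computable. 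Isolating the $\mathrm{vec}(\Id)$ direction is essential: its population contribution is exactly $3n\normt u^4=O(1/n)$ on the constraint, whereas bounding $\Normop{\widehat\Phi-n\Sigma}$ directly would lose a factor $d$ and only certify $m=O(n/d)$-spreadness.

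\paragraph{Gaussian analysis}
Let $\bm X\sim\Gau{0}{1}^{n\times d}$ with $n\ge Cd^2$. With probability $1-o(1)$: (i) $\lmin(\bm X^\top\bm X)\ge n/2$ and $\max_i\normt{\bm X_i}^2\le 2d$; (ii) $\tfrac1d|a|=O(\sqrt{n/d})$, since $\sum_i\normt{\bm X_i}^4$ deviates from its mean $nd(d+2)$ by $O(\sqrt n\,d^{3/2})$; (iii) $\tfrac{2}{\sqrt d}\norm b=O(d\sqrt n)=O(n/\sqrt C)$, since $\normf N^2=O(d^4/n)$ matches $\E\normf N^2$; and (iv), the crux, $\Normop D=O_C(n)$ — steps (iii) and (iv) both use $n\ge Cd^2$. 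Consequently $3n$ and all three error terms are $O_C(n)$, so $\widehat\Lambda=O_C(1/n)$ and $m\,\widehat\Lambda=C'n\cdot O_C(1/n)=O_C(C')<(1-\delta^2)^2$ once $C'=C'(C,\delta)$ is small enough; the algorithm outputs \textup{YES} with probability $1-o(1)$, and since \textup{YES} implies $(C'n,\delta)$-spreadness, property~(3) follows. The same chain gives $\max_{|T|\le m}\lmax(\sum_{i\in T}z_i z_i^\top)\le\sqrt{m\widehat\Lambda}\le 1-\delta^2$ w.h.p., which is property~(1).

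\paragraph{Main obstacle}
Everything above is routine except (iv): $\Normop D=n\cdot\sup_{W}\big|\tfrac1n\sum_i(X_i^\top W X_i)^2-2\big|$, the supremum over traceless symmetric $W$ with $\normf W=1$, i.e.\ the operator‑norm deviation of the empirical covariance of the degree‑$2$ Gaussian vectors $\mathrm{vec}(Y_i)$ from its mean $2\Pi_0$. These vectors lie in a space of dimension $\Theta(d^2)$ — which is precisely why the sample complexity is quadratic — but they are only sub‑exponential, so a naive $\epsilon$-net (whose entropy picks up a $\log d$, since the process has Lipschitz constant of order $d^2$) yields only $\Normop D=O_C(n\,\polylog d)$, which is not enough: the conclusion $m\,\widehat\Lambda=O_C(C')$ requires the $O_C(n)$ bound with a $d$-independent constant. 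The hard part is therefore a sharp, logarithm‑free concentration inequality for this quadratic empirical process when $n\ge Cd^2$, obtained by chaining or a trace‑moment computation rather than by a union bound over a net.
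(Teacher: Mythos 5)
Your overall plan tracks the paper's: reduce $(m,\delta)$-spreadness to controlling the $\ell_2$-vs-$\ell_4$ ratio on $\mathrm{cspan}(X)$ (this is the content of \cref{prop:low-distortion-implies-well-spreadness}, which you rederive via the Cauchy--Schwarz step $\sum_{i\in T}\iprod{z_i,w}^2\le\sqrt{m\Lambda_4}$), and then certify $\Lambda_4 = O(1/n)$ in polynomial time when $n\gtrsim d^2$. Where the two arguments diverge is in how that certificate is obtained and analyzed: the paper invokes \cref{thm:alg-certify-2-to-4-norm} (Theorem~7.1 of \cite{barak2012hypercontractivity}) as a black box, which already supplies a polynomial-time sum-of-squares upper bound on $\max_{\normt u=1}\Norm{\bm Xu}_4$ that is $O(n^{1/4})$ with high probability, with a constant depending only on $C$ and \emph{no polylogarithmic loss}. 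You instead build an explicit certificate $\widehat\Lambda$ by decomposing the flattened fourth-moment operator, and then must reprove the concentration yourself.

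That is precisely where the gap is, and you correctly flag it: your argument reduces the whole theorem to the claim $\Normop D=O_C(n)$ (log-free), i.e.\ that the empirical covariance of the $\Theta(d^2)$-dimensional degree-two Gaussian vectors $\mathrm{vec}(\bm Y_i)$ deviates from $2\Pi_0$ in operator norm by only $O_C(1)$ per sample once $n\ge Cd^2$. A naive $\epsilon$-net, as you observe, yields $O_C(n\cdot\polylog d)$, which does not suffice to conclude $m\widehat\Lambda=O_C(C')$. This sharp, dimension-free concentration inequality \emph{is} the technical heart of the cited theorem; declaring it ``obtained by chaining or a trace-moment computation'' is not a proof. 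In effect, your writeup unwinds the reduction that the paper packages into \cref{coro:alg-certify-distortion} but then stops exactly at the hard step that the paper outsources to \cite{barak2012hypercontractivity}. As it stands the proposal is a correct reduction plus an unproven key lemma, not a complete proof.

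Two smaller points. First, a constant is off: in \cref{def:spreadness}, $(m,\delta)$-spreadness means $\norm{v_S}_2\le\delta\norm{v}_2$ for all $\card{S}\le m$, so the equivalence should read $\max_{\card T\le m}\lmax\bigparen{\sum_{i\in T}z_iz_i^\top}\le\delta^2$, not $\le 1-\delta^2$; you have silently switched to the complementary convention of \cref{definition:spreadness}. The algorithm's acceptance threshold should therefore be $m\widehat\Lambda\le\delta^4$, which matches the exponent in the paper's $\bigparen{(\delta/C')^4 n,\delta}$-spreadness guarantee. Second, your properties~(1)--(3) are argued in a slightly circular order --- you deduce~(1) from the algorithm's analysis rather than from a direct probabilistic statement about Gaussian matrices (the paper gets~(1) directly from \cref{thm:sub-Gaussian-matrix-WS} / \cref{coro:alg-certify-distortion} plus \cref{prop:low-distortion-implies-well-spreadness}) --- but that is cosmetic, not a correctness issue.
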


We want to emphasize that, under the assumptions of \cref{theorem:main-algorithm}, (i) an $n$-by-$d$ Gaussian random matrix is $\LB{n}$-spread \whp;
(ii) if the sampled matrix is indeed $\LB{n}$-spread, the algorithm can efficiently certify this fact \whp;
(iii) the algorithm \emph{never} outputs false positives and thus guarantees the serviceability of the sampled matrix as a design matrix for oblivious regression.
\cref{theorem:main-algorithm} also directly implies that, in the regime~$n\ge \higherorder{d/\alpha^2}$ where there exist efficient algorithms for consistent oblivious regression, we may certify the well-spreadness of random design matrices, given $\LB{d^2}$ samples.

It is tempting to ask whether a similar verification algorithm can be designed with fewer observations $n\leq o(d^2)\,.$
However, we provide evidence of the computational hardness of this problem in the form of a lower bound against  low degree polynomials. This computational model captures state-of-the-art algorithms for many average-case problems such as sparse PCA,  tensor PCA or community detection (e.g. see \cite{hopkins2017bayesian, hopkins2018statistical, kunisky2019notes, ding2019subexponential, d2020sparse, choo2021complexity}).

\begin{theorem}[Lower bounds against low-degree polynomials] \label[theorem]{theorem:main-lower-bound}
  Let $t\leq O(\log n)^C$ for some arbitrary constant~$C>1$. 
  Let $\alpha=\alpha(n)\in (0,1)$.
  For\footnote{We use the notation $a\ll b$ for inequalities of the form $a\leq O(b/\polylog(b))$. The assumption~$\ap \gg d^{-1/2}$ derives from $d/\alpha^2 \ll (d/\alpha)^{4/3}$.} any $\ap \gg d^{-1/2}$ and $d/\alpha^2 \ll n\ll (d/\alpha)^{4/3}$, there exist two distributions over $n$-by-$d$ matrices, $\mD_0$ and $\mD_1$, such that
  \begin{enumerate}
    \item $\mD_0$ is the standard Gaussian distribution;
    \item $\bm X\sim \mD_0$ is $\LB{d/\alpha^2}$-spread with probability $1-o(1)$;
    \item $\bm X\sim \mD_1$ is not $\LB{d/\alpha^2}$-spread with probability $1-o(1)$;
    \item the two distributions are indistinguishable  with respect to all polynomials $p:\R^{n\times d}\rightarrow \R$ of degree at most $t$ in the sense that:
    \begin{align*}
      \frac{\E_{\mD_0}p(\bm X)-\E_{\mD_1}p(\bm X)}{\sqrt{\Var_{\mD_0}p(\bm X)}}\leq O(1)\,.
    \end{align*}
  \end{enumerate}
\end{theorem}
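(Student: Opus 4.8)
The plan is to take $\mD_1$ to be a \emph{planted hidden direction} distribution whose law agrees with the standard Gaussian up to second moments, yet whose column span is forced to contain an approximately sparse vector. Fix $k=\Theta(d/\alpha^2)$, put $p=k/n$, $\sigma_B^2=\delta^2/2$, and $\sigma_T^2=\bigl(1-(1-p)\sigma_B^2\bigr)/p=\Theta(n/k)$, so that $p\sigma_T^2+(1-p)\sigma_B^2=1$ \emph{exactly}. To sample $\bm X\sim\mD_1$: draw $\bm w\sim\mathrm{Unif}(S^{d-1})$; draw iid $\bm c_1,\dots,\bm c_n$ with $\bm c_i^2=\sigma_T^2$ with probability $p$ and $\bm c_i^2=\sigma_B^2$ otherwise; draw $\bm Z\sim\Gau{0}{1}^{n\times d}$; and set $\bm X=\bm Z\,(\Id-\bm w\bm w^{\top})+\diag(\bm c_1,\dots,\bm c_n)\,\bm Z\,\bm w\bm w^{\top}$. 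Equivalently, conditionally on $\bm w$ the rows of $\bm X$ are independent, row $i$ being $\Gau{0}{\Id+(\bm c_i^2-1)\bm w\bm w^{\top}}$ --- a standard Gaussian whose variance along the hidden direction $\bm w$ is heavy-tailed (usually $\approx\delta^2/2$, occasionally $\approx n/k$) but has mean exactly $1$.

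Properties (1)--(3) are quick. (1) holds by construction. For (3), $\bm X\bm w=\diag(\bm c)\,\bm Z\bm w$ lies in the column span of $\bm X$, and $\chi^2$-concentration gives that \whp its total squared mass is $\approx n$ while only $\approx(n-k)\sigma_B^2\le\tfrac{\delta^2}{2}n$ of it sits on the $\approx n-k$ coordinates $i$ with $\bm c_i^2=\sigma_B^2$; taking $S$ to be those coordinates witnesses that $\bm X$ is not $\Omega(d/\alpha^2)$-spread. For (2), a standard $\eps$-net/union bound over coordinate subsets together with Gaussian concentration of the least singular value of a tall Gaussian block shows that for $n\gg d/\alpha^2$ a standard Gaussian $n\times d$ matrix is $\Omega(d/\alpha^2)$-spread \whp (no vector in a random $d$-dimensional subspace of $\R^n$ can concentrate a constant fraction of its $\ell_2$ mass on $O(d/\alpha^2)\ll n$ coordinates).

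For (4), fixing any degree-$\le t$ polynomial $p$ and expanding it in the (probabilists') Hermite basis $\phi_\alpha=\prod_{i,j}H_{\alpha_{ij}}(\bm X_{ij})$ (orthogonal for $\mD_0$, with $\norm{\phi_\alpha}_{\mD_0}^2=\prod_{i,j}\alpha_{ij}!$), Parseval and Cauchy--Schwarz reduce the claimed bound to showing
\begin{align*}
  \sum_{0<\abs\alpha\le t}\ \frac{\bigl(\E_{\mD_1}\phi_\alpha\bigr)^2}{\prod_{i,j}\alpha_{ij}!}\ =\ O(1)\,.
\end{align*}
Using row-independence given $\bm w$ and the generating-function identity $\E_{\bm Y\sim\Gau{0}{\Id+\theta\bm w\bm w^{\top}}}\prod_j H_{a_j}(\bm Y_j)=\tfrac{(\abs a)!\,\theta^{\abs a/2}}{2^{\abs a/2}(\abs a/2)!}\prod_j\bm w_j^{a_j}$ (which vanishes for $\abs a$ odd), one obtains the clean product formula
\begin{align*}
  \E_{\mD_1}\phi_\alpha\ =\ \Bigl(\prod_i c(r_i)\Bigr)\cdot\E_{\bm w}\Bigl[\prod_j\bm w_j^{\beta_j}\Bigr]\,,\qquad c(r):=\frac{(2r)!}{2^r r!}\,\E\bigl[(\bm c^2-1)^r\bigr]\,,
\end{align*}
where $r_i=\tfrac12\sum_j\alpha_{ij}$ is half the weight of row $i$ and $\beta_j=\sum_i\alpha_{ij}$ the weight of column $j$. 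The key features are: $c(0)=1$, $c(1)=0$ (the latter forced by the mean-$1$ calibration --- this is exactly why all degree-$\le2$ terms drop out), and $\abs{c(r)}\le(n/k)^{r-1}\cdot r^{O(r)}$ for $r\ge2$; while $\E_{\bm w}\prod_j\bm w_j^{\beta_j}$ vanishes unless every $\beta_j$ is even, in which case it is $d^{-\abs\alpha/2}\cdot\abs\alpha^{O(\abs\alpha)}$.

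It remains to sum. Only $\alpha$ in which every row and every column has even weight contribute, and a row of weight $2$ kills the term (factor $c(1)=0$), so every used row has weight $\ge4$; writing $2R=\abs\alpha\le t$ this gives at most $m\le R/2$ used rows and at most $R$ used columns. Organizing by this profile, the $d$-exponent of the contribution equals $c-\abs\alpha$ where $c$ is the number of used columns, so it is extremal to take $c=R$ (every column of weight exactly $2$); and because $k^2\gg n$ throughout the range $d/\alpha^2\ll n\ll(d/\alpha)^{4/3}$, the row count wants to be maximal, $m=R/2$, i.e.\ every used row of weight exactly $4$ (e.g.\ two $H_2$'s on a private pair of columns). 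The total contribution at degree $2R$ is then $\bigl(n^{3/2}/(dk)\bigr)^{R}\cdot R^{O(R)}$, and summing this geometric-type series over $R\le t/2$ is $O(1)$ precisely when $\bigl(n^{3/2}/(dk)\bigr)\cdot\polylog(n)<1$, i.e.\ $n\ll(dk)^{2/3}=\Theta\bigl((d/\alpha)^{4/3}\bigr)$ --- the claimed range (since $t\le(\log n)^{O(1)}$, the $R^{O(R)}$ and multinomial prefactors are $\polylog(n)^{O(R)}$ and get absorbed). I expect this last step to be the main obstacle: one must show that every non-extremal configuration --- rows of weight $>4$, columns of weight $>2$, overlapping column supports, and the various ways $H_1/H_2/H_3/H_4$ factors can be arranged within a row --- is dominated by the "pairs of $H_2$'s" configuration, and push all the $r^{O(r)}$ and combinatorial factors through the sum without breaking convergence.
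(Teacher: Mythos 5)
Your proposal takes a genuinely different route from the paper's, and it is structurally sound, though you yourself flag --- correctly --- that the final combinatorial bookkeeping is not carried through. The paper's planted distribution $\mu$ (\cref{problem:hypothesis-test}) builds an $n\times d$ matrix $\bm Y$ whose first column is an i.i.d.\ noisy Bernoulli--Rademacher vector (a Gaussian--plus--point-mass mixture calibrated so $\E h_2=0$) and then right-multiplies by a Haar-random orthogonal $\bm Q$; conditionally on the hidden direction and on $v_i$, row $i$ has a \emph{degenerate} coordinate in that direction, equal to $v_i$. Your $\mD_1$ instead plants a Gaussian scale mixture: row $i$ is $\Gau{0}{\Id+(\bm c_i^2-1)\bm w\bm w^\top}$. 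These are not the same model, but both are hidden-direction constructions with matching first two moments, with $\E h_2=0$ (resp.\ $c(1)=0$) forced by the $\E\bm x^2=1$ (resp.\ $\E\bm c^2=1$) calibration, and with comparable fourth moments $\Theta(n/k)$, so both yield the same $n\ll(dk)^{2/3}$ threshold. The main technical difference is that the paper invokes \cref{lem:MW21-lemma-4.23} (from Mao--Wein), which, thanks to the orthogonal rotation, collapses the $n\times d$ Hermite expansion to a sum over $\alpha\in\N^n$ with an $\E\iprod{\bm u}{\bm u'}^k$ overlap factor absorbing the column combinatorics; you instead keep the full $\N^{n\times d}$ expansion and handle row and column profiles simultaneously. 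Your Parseval/Cauchy--Schwarz reduction, the generating-function identity for $\E\prod_j H_{a_j}(\bm Y_j)$ under $\Gau{0}{\Id+\theta\bm w\bm w^\top}$, the product formula $\E_{\mD_1}\phi_\alpha=\bigl(\prod_i c(r_i)\bigr)\E_{\bm w}\prod_j\bm w_j^{\beta_j}$, the vanishing $c(1)=0$, and the bounds $|c(r)|\le(n/k)^{r-1}r^{O(r)}$ and $\abs{\E_{\bm w}\prod_j\bm w_j^{\beta_j}}\le d^{-R}R^{O(R)}$ are all correct.

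What remains is the sum over profiles, and here your heuristic is right but unfinished. Organizing by number of used rows $m$ and used columns $c$ and total half-weight $R$, the per-profile contribution is of order $n^{m}(n/k)^{2(R-m)}d^{c-2R}R^{O(R)}$; since $k^2\gg n$ in the allowed regime, this is maximized at $m=R/2$ and $c=R$, giving $\bigl(n^{3/2}/(dk)\bigr)^R R^{O(R)}$, and summing the resulting geometric-type series over $4\le 2R\le t$ with $t\le(\log n)^{O(1)}$ is $O(1)$ exactly when $n\ll(dk)^{2/3}=\Theta\bigl((d/\alpha)^{4/3}\bigr)$, matching the paper's regime and its \cref{lem:bounded-chi-squared-divergence}. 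To close this you need to (i) bound the number of $\alpha$ with a fixed support profile and $|\alpha|=2R$ by $R^{O(R)}$ (this follows because the support size is at most $2R$, not $mc$), (ii) verify that overlapping column supports or rows of weight $>4$ only hurt the bound (which does hold because $k\ge d$ and $n\le(dk)^{2/3}$), and (iii) push the $R^{O(R)}$ and $\polylog(n)^{O(R)}$ prefactors through --- this is exactly the extra polylog slack encoded in the paper's $d\gg\rho^{-1}\sqrt{n}(\log n)^{2C}$ and $\sigma^2\le\tfrac12(\log n)^{-C}$ hypotheses, so it is routine but must be done. In short: different construction, same threshold, formulas correct, the final accounting is sketched rather than proved; the paper's Mao--Wein route pays for a less transparent planted model with a cleaner combinatorial landscape (a sum over $\N^n$ rather than $\N^{n\times d}$), while yours is more self-contained but requires the profile analysis you flagged.
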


In other words, \cref{theorem:main-lower-bound} shows that low-degree polynomials cannot be used to distinguish between $\mD_0$  and $\mD_1$ as typical values of such polynomials look the same (up to a small difference) under both distributions.\footnote{See \cref{sec:low-deg-likelihood-ratio} for a more in-depth discussion concerning the low-degree likelihood ratio.}
An immediate consequence of this result is that there exists a family of $\LB{d/\alpha^2}$-spread matrices $X\in \R^{n\times d}$ with $d/\ap^2 \ll n\ll (d/\alpha)^{4/3}$, for which consistent oblivious regression is possible, but verifying whether the design matrix satisfy the required well-spread condition  is hard. 

We remark that there is no gap between the algorithmic result in \cref{theorem:main-algorithm} and the lower bound in \cref{theorem:main-lower-bound}, since \cref{theorem:main-lower-bound} is a corollary of \cref{thm:low-deg-hardness-certify-WS} wihch provides evidence of computational hardness for the entire regime $n\ll d^2$.

\subsection{Organization}

The rest of the paper is organized as follows. We present the high level ideas behind our results in \cref{section:techniques}. We prove \cref{thm:main-it-concise} in \cref{section:it-bounds}.
We obtain \cref{theorem:main-algorithm} and \cref{theorem:main-lower-bound} in \cref{section:cc-bounds}. We show NP-hardness of well-spreadness certification in \cref{sec:NP-hardness-deciding-well-spreadness}. Finally, necessary background notions can be found in \cref{section:preliminaries}. 

\section{Techniques}\label{section:techniques}

We present here the main ideas behind our results.

\subsection{Statistical lower bounds for regression}

Recall the linear model in \cref{problem:oblivious-regression},
\[
  \bm y = X\beta^*+\bm \eta,
\]
where we observe (a realization of) the random vector $\bm y$, the matrix $X\in \R^{n\times d}$ is a known fixed design, the vector $\beta^*\in \R^d$ is the unknown parameter of interest, and the noise vector $\bm \eta$ has independent, symmetrically distributed coordinates with $\min_{i\in[n]} \bbP\Set{\Abs{\bm \eta_i}\leq 1}=\alpha$. We will restrict our discussion to matrices satisfying $ \transpose{X}X= n\cdot \Id$, so that ---up to scaling--- there is no difference between prediction and parameter error.

To obtain an information-theoretic lower bound, we cast the problem as a distinguishing problem among $\ell$ hypotheses of the form:
\begin{align}\label{eq:techniques-hypothesis}
  H_i: \quad \bm y = X\beta_i + \bm \eta\,,
\end{align}
where we ought to make the vectors $\beta_1,\ldots, \beta_\ell\in \R^d$ as far as possible from each other.
It is remarkably easy to see that a small degree of spreadness is necessary to obtain any error guarantee in oblivious regression \cite{d2021consistentICML}. Let $X\in \R^{n\times d}$ be $o(1/\alpha)$-spread, then there exists a  vector $\beta\in \R^d$ and a set $S$ of cardinality $n-O(1/\alpha)$ such that $\norm{X_S\beta}_2 \leq o(1) \cdot \norm{X\beta}_2$.\footnote{$X_S\in \R^{n\times d}$ is the matrix obtained from $X$ by zeroing rows with index in $[n]\setminus S$.}
The problem with such a design matrix is that with probability $\Omega(1)$ all nonzero entries in $(X-X_S)\beta$ will be corrupted by (possibly unbounded) noise. As a result no estimator can provide guarantees of the form $\E\norm{\hat{\beta}(\bm y)-\beta^*}_2^2 \leq \gamma$ for any $\gamma>0$. In other words,  approximate recovery of the parameter vector is impossible.

Going beyond this $o(1/\alpha)$ barrier, however, turns out to be non-trivial. 
One issue with the above reasoning is that \textit{even if we knew} the uncorrupted entries, we would not be able to recover the hidden vector (with any bounded error), as no such entry contains information over $\beta^*$. In contrast, for any $X$ that is $m$-spread, with $m\geq \Omega(1/\alpha)$, if we knew the uncorrupted  entries, after filtering out the corrupted ones, the classical least squares estimator would yield error guarantees 
\[
  \E \Norm{\hat{\beta}(\bm y)-\beta^*}_2^2 \leq \UB{\frac{d}{m\ap}} \,.
\]
That is, knowing the uncorrupted entries one could achieve constant error for $\LB{d/\alpha}$-spread design matrices. Notice \cref{thm:main-it-concise} implies that, there exist oblivious regression instances where no estimator can achieve these guarantees.

We overcome this barrier with a construction  consisting of two main ingredients:
\begin{enumerate}
  \item An $m$-spread matrix $X$ and a set of vectors $\beta$ in $\R^d$ such that
  \begin{align}\label{eq:techniques-sparse-vector}
    \Norm{X_S\beta}_2 \leq o(1)\cdot \Norm{X \beta}_2
  \end{align}
  for some $S\subseteq [n]$ with $\card{S}\geq n-m$, and the subspace spanned by these vectors is high dimensional. 
  That is, a matrix whose column span contains many nearly orthogonal sparse vectors.
  \item A distribution $D_{\eta}$ over $\R$, for the entries of $\bm \eta$, satisfying the constraints in \cref{problem:oblivious-regression}, and with the additional properties:
  \begin{itemize}
    \item \textit{Low shift-sensitivity:} the distribution looks approximately the same after an additive shift in the following sense. If $D_\eta(k)$ is the distribution shifted by $k$, then the Kullback-Leibler divergence $\kldiv{D_\eta}{D_\eta(k)}$ is small.
    \item \textit{Insensitivity to scaling:} the Kullback-Leibler divergence does not change significantly upon scaling in the sense that $\kldiv{D_\eta}{D_\eta(k)} \approx \kldiv{\rho \cdot D_\eta}{\rho\cdot D_\eta(k)}$, for any $\rho >0$.
  \end{itemize}
\end{enumerate}
Sparsity of the noiseless observation vectors $X\beta_i$ as in \cref{eq:techniques-sparse-vector}, combined with low shift-sensitivity of the noise distribution will allow us to make the different hypotheses indistinguishable. Then insensitivity to scaling will make the prediction error arbitrarily large.

\paragraph{Noise distributions with low sensitivity}  
It turns out that constructing a distribution with low sensitivity is straightforward. We consider the symmetric geometric distribution $\sgd{c, \lambda}$ with probability mass function
\begin{equation*} 
  p(k) = 
  \begin{cases}
    \ap, & k=c \\
    \frac{1-\ap}{2} \cdot \lambda(1-\lambda)^{|k|-1}, & k=c\pm1, c\pm2, c\pm3,...
  \end{cases}
\end{equation*}
Clearly, $\rho\cdot\sgd{0, \lambda}$ is symmetric and satisfies $\bbP_{\bm z \sim \rho\cdot\sgd{0, \lambda}}\Paren{\abs{\bm z}\leq 1} = \alpha$ for any $\rho>1$ and any $\lambda\in(0,1)$. Moreover, as $\kldiv{\sgd{0, \lambda}}{\sgd{c, \lambda}} = \kldiv{\rho\cdot\sgd{0, \lambda}}{\rho \cdot\sgd{c, \lambda}}$, we have the desired insensitivity to scaling. Finally, for small enough values of $\lambda$ low shift-sensitivity holds  for integer-valued shifts, as the distribution is discrete in nature. (See \cref{lemma:facts-sym-geo-distn} for a formal statement.)

\paragraph{Matrices spanning sparse subspaces} 
To construct the aforementioned design matrix, we wish to find two rectangular matrices $A\in \R^{m\times d}$ and $B\in \R^{(n-m)\times d}$ such that $A$ is $\Omega(m)$-spread and the row spans of $A$ and $B$ are orthogonal to each other.
Let $X\in\R^{n\times d}$ be the matrix obtained by stacking $A$ onto $B$. 
Then for any vector $\bt$ in the row span of $A$, i.e. $\beta \in \mathrm{rspan}(A)$, we have
\begin{align*}
  \Snorm{X\beta}_2 = 
  \Snorm{\begin{bmatrix} A \\ B\end{bmatrix}\beta}_2 = 
  \Snorm{A\beta}_2 + \Snorm{B\beta}_2 = 
  \Snorm{A\beta}_2\,,
\end{align*}
and thus $X$ also has the required $m$-spreadness.
To find two such matrices $A$ and $B$, the following observation turns out to be crucial:\footnote{We remark that a similar observation holds for other distributions (e.g. Gaussian). See \cref{thm:sub-Gaussian-matrix-WS} for a formal proof. The value of integer values will become evident in the interplay between the design matrix and the noise distribution.} 
\begin{quote}
  \begin{center}
    \textit{An $m$-by-$d$ Rademacher matrix is $\Omega(m)$-spread with high probability.}
  \end{center}
\end{quote}
With this ingredient we are now ready to construct the design matrix $X$.
Let $\mR$ denote Rademacher distribution.
Let $\bm A^*\sim \mR^{m\times d}$ and $\bm B^*\sim \mR^{(n-m)\times d}$ be independently sampled, and let $V\subseteq \R^d$ be an $\Omega(d)$-dimensional subspace. We construct
\begin{align*}
  \bm X= \begin{bmatrix}
    \bm A\\
    \bm B
  \end{bmatrix}=
\begin{bmatrix}
    \bm A^* \Pi_V\\
    \bm B^* \Pi_{V^\bot}
  \end{bmatrix}\,,
\end{align*}
where $\Pi_V$ denotes the projector onto the subspace $V$. Then the row spans of $\bm A\,, \bm B$ are orthogonal.

\paragraph{Putting things together}
The  ideas presented above allow us to construct hypotheses as in \cref{eq:techniques-hypothesis} that are indistinguishable from each other even though the corresponding parameter vectors are far from each other. 
Let $\beta\,, \beta'\in \R^d$ be distinct vectors in the row span of $\bm A$ with integer coordinates satisfying $\Norm{\beta-\beta'}_2\geq \LB{\sqrt{d}}$ and $\Norm{\beta}_2, \Norm{\beta'}_2\leq \UB{\sqrt{d}}$. 
By construction, $\bm X \beta$ and $\bm X \beta'$ are both $n$-dimensional vectors with $\Omega(m)$ nonzero entries, each integer-valued. 
So, by low shift-sensitivity of the noise distribution,
\begin{align*}
  H: \quad \bm y &= \bm X\beta + \bm \eta\,,\\
  H': \quad \bm y &= \bm X\beta' + \bm \eta\,
\end{align*}
are stastistically indistinguishable. Finally, expanding on these ideas \cref{thm:main-it-concise} will follow. Furthermore, by insensitivity to scaling of $D_\eta$ we can now blow up the error by scaling up  $\sigma\cdot \bm y = \bm X (\sigma\cdot\beta) +\sigma\cdot \bm \eta$ and $\sigma\cdot \bm y' = \bm X (\sigma\cdot\beta') +\sigma\cdot \bm \eta$, for any $\sigma > 0$, without making the distinguishing problem easier.

\subsection{Differences between well-spreadness and RIP, RE properties}\label{section:differences-well-spreadness-others}

In the context of compressed sensing we are given $n\leq d$ observations of the form $\bm y_i = \iprod{M_i, \beta}+\bm \eta_i$ with $M_i, \beta\in \R^d$ and $\bm \eta$ being additive noise.  In order to guarantee recovery of the compressed vector $\beta$, RIP \cite{candes2005decoding, donoho2006compressed, candes2006stable, kashin2007remark} is arguably the most popular condition to enforce on the sensing matrix:

\begin{definition}[Restricted isometry property]
  We say a matrix $M\in \R^{n\times d}$ satisfies the $(k, \delta)$-restricted isometry property (RIP) if
  \begin{align*}
    (1-\delta)\normt{v}^2\leq \normt{Mv}^2\leq (1+\delta)\normt{v}^2
  \end{align*}
  for every vector $v$ with at most $k$ nonzero entries.
\end{definition}

We argue here that the relation between well-spreadness and oblivious regression fundamentally differs from that of RIP and compressed sensing in two ways.

First, while state-of-the-art algorithms for compressed sensing rely on RIP in order to filter out the noise in the observations  and recover the hidden vector, it is known that small prediction error can be achieved in exponential time \textit{without} any constraint on the sensing matrix \cite{bunea2007sparsity}. 
In contrast, \cref{thm:main-it-concise} shows that in the context of oblivious regression, no algorithm can achieve even small prediction error for a family of design matrices that are not sufficiently well-spread.

Second, RIP is not purely a condition of the column span of the sensing matrix. In particular, if $M$ satisfies $(k, \delta)$-RIP, then the kernel of $M$ must be $\Omega(k)$-spread \cite{spread-sparse-matrices}.\footnote{The  careful reader may have noticed that $\delta$ seems to play no role in this implication. In fact, the relation is more general than what we consider here. See \cite{spread-sparse-matrices}.} Conversely, it is easy to construct matrices with well-spread column span and kernel containing sparse vectors.

As the following examples show, it is easy to construct matrices that satisfy RIP but are not well-spread and vice versa.

\begin{example}[RIP but not even $1$-spread] 
  Let $\bm W\sim \Gau{0}{1}^{(n-1)\times (d-1)}$ and consider the following $n$-by-$d$ matrix (we do not fix the relation between $n$ and $d$),
  \begin{align*}
    \bm M = 
    \begin{bmatrix}
      1 & 0\\
      0 & \frac{1}{\sqrt{n-1}} \bm W
    \end{bmatrix}.
  \end{align*}
  If $n\gtrsim\delta^{-2}k\log{d}$, then with high probability, $\bm M$ satisfies $(k,\delta)$-RIP.
  However, $\bm M$ is not even $1$-spread, since its column span contains the canonical basis vector $e_1\in\R^n$.
\end{example}

\begin{example}[Well-spread but not satisfying RIP]
  Let $\bm W\sim \Gau{0}{1}^{n\times (d-1)}$ and consider the following $n$-by-$d$ matrix,
  \begin{align*}
    \bm M = 
    \begin{bmatrix}
      v & \frac{1}{\sqrt{n}} \bm W
    \end{bmatrix},
  \end{align*}
  where $v\in\R^n$ is a unit vector parallel or highly correlated to the first column of $\bm W$.
  Then $\bm M$ cannot satisfy RIP or RE.
  However, if $n \geq Cd$ for some sufficiently large absolute constant $C$, then it is easy to verify that, $\bm M$ is $\LB{n}$-spread \whp.
\end{example}

\section{Background} \label{section:preliminaries}

\subsection{Basic notation} \label{sec:basic-notation}
We use the convention $\N = \Set{0,1,2,3,...}$.
For a positive integer $n$, let $[n] := \Set{1,2,...,n}$.
For $\ap\in\N^n$, define $|\ap| := \sum_{i=1}^{n}\ap_i$.
For a vector $v\in\R^n$, let $\supp(v) := \Set{i\in[n] : v_i\neq0}$ be its support, $\Norm{v}_p := \bB{\sum_{i=1}^{n}|v_i|^p}^{1/p}$ be its $\ell_p$-norm ($p\geq1$), and $\Norm{v}_0 := \abs{\supp(v)}$.
Given a vector $v\in\R^n$ and a subset $S\subseteq[n]$, let $v_S\in\R^{|S|}$ denote the projection of $v$ onto the coordinates in $S$.
For a matrix~$X$, let $\col{X}$ denote its column span, $\mathrm{rspan}(A)$ denote its row span, and $\ker{X}$ denote its kernel or null space. Let $\sigma_{\min}(X)$ and $\sigma_{\max}(X)$ denote its minimum and maximum singular values respectively.
We use standard asymptotic notations $\LB{\cdot}, \UB{\cdot}, \lesssim, \gtrsim$ to hide absolute multiplicative constants. 
Throughout this paper, we write random variables in boldface.
We say an event happens \emph{with high probability} if it happens with probability $1-o(1)$.
Given two distributions $\nu$ and $\mu$, let $\KL{\nu}{\mu}$ denote their Kullback-Leibler divergence. For two random variables $\bm X$ and $\bm Y$, we write $\KL{\bm X}{\bm Y}$ to denote the Kullback-Leibler divergence between their distributions.
By saying Gaussian (or Rademacher) matrix, we mean a matrix whose entries are independent standard Gaussian (or Rademacher) random variables.
Unless explicitly stated, the base of logarithm is the natural number $e$.

\subsection{Fano's method} \label{sec:fano-method}

Fano's method is a classical approach to proving lower bounds for statistical estimation problems, which we apply to prove the information-theoretic lower bounds in \cref{section:it-bounds}.

Suppose we are given an $\delta$-separated set $\mB\subset\R^d$. That is, $\Norm{\bt-\bt'}_2\geq\delta$ for any distinct $\bt,\bt'\in\mB$.
Let $D_\beta$ be the uniform distribution over $\mB$ and $\bm\beta^* \sim D_\beta$. 
Let $X\in\mat{n}{d}$ be a known design matrix and $\bm\eta$ be the noise vector.
Observing $\bm y = X\bm\bt^* + \bm\eta$, the hypothesis testing problem is to distinguish $|\mB|$ distributions $\Set{X\bt+\bm\eta : \bt\in\mB}$.
Let $\hat{\bt}:\R^n\to\R^d$ be an arbitrary estimator for the linear regression problem.
By a reduction from the hypothesis testing problem and applying Fano's inequality (\cref{lem:Fano-ineq}) combined with the convexity of the Kullback-Leibler divergence, one has\footnote{We refer interested readers to \cite{scarlett2019introductory} for a proof of \cref{eq:Fano-minimax-lower-bound-real} as well as more applications of Fano's method.} 
\begin{equation} \label{eq:Fano-minimax-lower-bound-real}
  \E \Norm{\hat{\bt}(\bm y)-\bm\bt^*}_2^2 
  \geq \frac{\delta^2}{4} \bB{1-\frac{\max\limits_{\bt,\bt'\in\mB}\KL{X\bt+\bm\eta}{X\bt'+\bm\eta}+\log2}{\log|\mB|}}.
\end{equation}


\begin{lemma}[Fano's inequality] \label[lemma]{lem:Fano-ineq}
  Let $\Sigma$ be a finite set and $\bm J$ be a random variable uniformly distributed over $\Sigma$.
  Suppose $\bm J \to \bm Z \to \hat{\bm J}$ is a Markov chain.
  Then,
  \[ \prob{\bm J \neq \hat{\bm J}} \geq 1 - \frac{I(\bm J;\bm Z)+\log2}{\log|\Sigma|}, \]
  where $I(\bm J;\bm Z)$ denotes the mutual information between $\bm J$ and $\bm Z$.
\end{lemma}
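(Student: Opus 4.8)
The statement to prove is Fano's inequality (\cref{lem:Fano-ineq}). This is a classical result, and I'll plan out a standard proof.

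\medskip

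The plan is to prove Fano's inequality via the chain rule for entropy applied to the error indicator variable. First I would introduce the error indicator $\bm E = \Ind[\bm J \neq \hat{\bm J}]$, which is a Bernoulli random variable with parameter $P_e := \prob{\bm J \neq \hat{\bm J}}$. The key identity is to expand the conditional entropy $H(\bm J \mid \hat{\bm J})$ in two ways using the chain rule on the joint entropy $H(\bm J, \bm E \mid \hat{\bm J})$. On one hand, since $\bm E$ is a deterministic function of the pair $(\bm J, \hat{\bm J})$, we have $H(\bm E \mid \bm J, \hat{\bm J}) = 0$, so $H(\bm J, \bm E \mid \hat{\bm J}) = H(\bm J \mid \hat{\bm J})$. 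On the other hand, $H(\bm J, \bm E \mid \hat{\bm J}) = H(\bm E \mid \hat{\bm J}) + H(\bm J \mid \bm E, \hat{\bm J}) \leq H(\bm E) + H(\bm J \mid \bm E, \hat{\bm J})$, where the inequality is because conditioning does not increase entropy. Now $H(\bm E) \leq \log 2$ (binary entropy), and for the last term, conditioning on $\bm E = 0$ forces $\bm J = \hat{\bm J}$ so the contribution is zero, while conditioning on $\bm E = 1$ gives at most $\log(|\Sigma|-1) \leq \log|\Sigma|$; weighting by $\prob{\bm E = 1} = P_e$ yields $H(\bm J \mid \bm E, \hat{\bm J}) \leq P_e \log|\Sigma|$. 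Combining, $H(\bm J \mid \hat{\bm J}) \leq \log 2 + P_e \log |\Sigma|$.

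\medskip

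Next I would relate $H(\bm J \mid \hat{\bm J})$ to the mutual information $I(\bm J; \bm Z)$. Since $\bm J$ is uniform on $\Sigma$, we have $H(\bm J) = \log|\Sigma|$, hence $H(\bm J \mid \hat{\bm J}) = H(\bm J) - I(\bm J; \hat{\bm J}) = \log|\Sigma| - I(\bm J; \hat{\bm J})$. By the data processing inequality applied to the Markov chain $\bm J \to \bm Z \to \hat{\bm J}$, we have $I(\bm J; \hat{\bm J}) \leq I(\bm J; \bm Z)$, so $H(\bm J \mid \hat{\bm J}) \geq \log|\Sigma| - I(\bm J; \bm Z)$. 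Chaining this with the bound from the previous paragraph gives $\log|\Sigma| - I(\bm J; \bm Z) \leq \log 2 + P_e \log|\Sigma|$, and rearranging yields $P_e \geq 1 - \frac{I(\bm J; \bm Z) + \log 2}{\log|\Sigma|}$, as desired.

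\medskip

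There is no real obstacle here — this is a textbook argument. The only points requiring mild care are: verifying that $\bm E$ is genuinely a function of $(\bm J, \hat{\bm J})$ so that $H(\bm E \mid \bm J, \hat{\bm J}) = 0$; correctly invoking the data processing inequality, which requires the Markov structure $\bm J \to \bm Z \to \hat{\bm J}$ that is given by hypothesis (the estimator $\hat{\bm J}$ depends on $\bm J$ only through $\bm Z$); and handling the $\log(|\Sigma|-1)$ versus $\log|\Sigma|$ step, where we simply use the crude bound $\log(|\Sigma|-1) \leq \log|\Sigma|$ to get the clean stated form. Since the lemma is standard, I would alternatively just cite it (e.g., a reference such as \cite{scarlett2019introductory} is already invoked nearby) rather than reproduce the proof in full, but the above is the self-contained argument.
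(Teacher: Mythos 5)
Your proof is correct and is the standard textbook argument for Fano's inequality. The paper itself does not prove \cref{lem:Fano-ineq} — it states it as a classical fact and directs the reader to a reference (the footnote near \cref{eq:Fano-minimax-lower-bound-real} points to Scarlett–Cevher for Fano-method background), so there is no in-paper proof to compare against. Your argument — introduce the error indicator $\bm E = \Ind[\bm J \neq \hat{\bm J}]$, expand $H(\bm J, \bm E \mid \hat{\bm J})$ two ways via the chain rule to obtain $H(\bm J \mid \hat{\bm J}) \leq H(\bm E) + P_e \log|\Sigma| \leq \log 2 + P_e \log|\Sigma|$, then use $H(\bm J) = \log|\Sigma|$ (uniformity) and the data-processing inequality $I(\bm J;\hat{\bm J}) \leq I(\bm J;\bm Z)$ to convert to the stated bound — is exactly the canonical derivation (e.g.\ Cover–Thomas, Theorem 2.10.1, combined with data processing). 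All the steps you flagged as requiring care (the deterministic dependence of $\bm E$ on $(\bm J,\hat{\bm J})$, the Markov structure enabling data processing, the relaxation $\log(|\Sigma|-1) \leq \log|\Sigma|$) are handled correctly. Either including this short self-contained proof or simply citing a standard reference, as the paper implicitly does, would be appropriate.
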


\subsection{Spreadness and distortion} \label{sec:spreadness-and-distortion}

\begin{definition}[$\ell_p$-spreadness] \label[definition]{def:spreadness}
  Let $p\geq1$, $\delta\in[0,1]$, $n\in\N$, and $m \leq n$.
  A vector $v\in\R^n$ is said to be $(m,\delta)$-$\ell_p$-spread if for every subset $S\subseteq[n]$ with $|S| \leq m$, we have
  \[ \Norm{v_S}_p \leq \delta \cdot \Norm{v}_p. \]
  A subspace $V\subseteq\R^n$ is said to be $(m,\delta)$-$\ell_p$-spread if every vector $v\in V$ is $(m,\delta)$-$\ell_p$-spread.
  A matrix is said to be $(m,\delta)$-$\ell_p$-spread if its column span $\col{X}$ is $(m,\delta)$-$\ell_p$-spread.
\end{definition}

When the ambient dimension $n$ is clear from the context, there are three parameters, i.e. $p,m,\delta$, to be specified in \cref{def:spreadness}.
If the value of $p$ is not specified, set $p=2$ by default. That is, $(m,\delta)$-spreadness means $(m,\delta)$-$\ell_2$-spreadness.
In certain cases (e.g. oblivious linear regression), we are more interested in capturing the dependence of $m$ than the dependence of $\delta$ on other paramters (e.g. the ambient dimension $n$). 
Then it is more convenient to hide $\delta$ as long as it is $\LB{1}$.
Concretly, we say a vector, or a subspace, or a matrix is $m$-$\ell_p$-spread if there exists a absolute constant $c\in(0,1)$ such that it is $\bB{m,c}$-$\ell_p$-spread.

We introduce the following definition that is closely related to spreadness and has important algorithmic implications in \cref{section:certifying-well-spreadness}.

\begin{definition}[$\ell_p$-vs-$\ell_q$ distortion] \label[definition]{def:q/p-distortion}
  Given $1\leq p<q$, the $\ell_p$-vs-$\ell_q$ distortion of a nonzero vector $v\in\R^n$ is defined by
  \[ \Delta_{p,q}(v) := \frac{\Norm{v}_q}{\Norm{v}_p} \cdot n^{\frac{1}{p}-\frac{1}{q}}. \]
  The $\ell_p$-vs-$\ell_q$ distortion of a subspace $V\subseteq\R^n$ is defined by
  \[ \Delta_{p,q}(V) := \max_{v\in V, v\neq0} \Delta_{p,q}(v). \]
  The $\ell_p$-vs-$\ell_q$ distortion of a matrix $X$ is defined by
  \[ \Delta_{p,q}(X) := \Delta_{p,q}(\col{X}). \]
\end{definition}

By \Hoelder's inequality and monotonicity of $\ell_p$ norm, it is easy to check $1 \leq \Delta_{p,q}(v) \leq n^{\frac{1}{p}-\frac{1}{q}}$ for any nonzero vector $v\in\R^n$. 
Note that for a nonzero vector $v\in\R^n$, $\Delta_{p,q}(v)=1$ if and only if $|v_1|=\cdots=|v_n|$; $\Delta_{p,q}(v)=n^{\frac{1}{p}-\frac{1}{q}}$ if and only if $\Norm{v}_0=1$.
Intuitively, low distortion implies well-spreadness, which is formalized in the following proposition.

\begin{proposition} \label[proposition]{prop:low-distortion-implies-well-spreadness}
  Let $1\leq p<q$ and $V$ be a subspace of $\R^n$.
  \begin{enumerate}
    \item If $\Delta_{p,q}(V) \leq \Delta$, then $V$ is $\bB{m, \delta_p}$-$\ell_p$-spread with
    \[ \delta_p = (m/n)^{\frac{1}{p}-\frac{1}{q}}\Delta. \]
    \item If $V$ is not $\bB{m,\delta}$-$\ell_p$-spread, then 
    \[ \Delta_{p,q}(V) > \delta(n/m)^{\frac{1}{p}-\frac{1}{q}}. \]
    \item If $\Delta_{p,q}(V) \leq \Delta$, then $V$ is $\bB{m, \delta_q}$-$\ell_q$-spread with
    \[ \delta_q^q = 1-\bB{\Delta^{-p}-(m/n)^p}^{\frac{q}{p}}. \]
  \end{enumerate}  
\end{proposition}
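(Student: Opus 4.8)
My plan is to derive all three parts from two elementary facts: (i) Hölder's inequality in the form $\Norm{w}_p\le k^{1/p-1/q}\Norm{w}_q$, equivalently $\Norm{w}_q\ge k^{1/q-1/p}\Norm{w}_p$, valid for every $w\in\R^k$ and every $1\le p<q$; and (ii) the observation that, by the definition of $\Delta_{p,q}$, the hypothesis $\Delta_{p,q}(v)\le\Delta$ is literally the inequality $\Norm{v}_q\le\Delta\cdot n^{1/q-1/p}\Norm{v}_p$. Everything else is plugging these into the definition of $\ell_p$-spreadness.

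\emph{Parts 1 and 2.} Fix a nonzero $v\in V$ and a set $S\subseteq[n]$ with $\card{S}\le m$. Applying (i) to $v_S\in\R^{\card{S}}$ and then discarding coordinates gives $\Norm{v_S}_p\le\card{S}^{1/p-1/q}\Norm{v_S}_q\le m^{1/p-1/q}\Norm{v}_q$; feeding in (ii), $\Norm{v_S}_p\le (m/n)^{1/p-1/q}\,\Delta\cdot\Norm{v}_p$, which is exactly $(m,\delta_p)$-$\ell_p$-spreadness. For Part 2 I would run the same chain backwards: if $V$ fails to be $(m,\delta)$-$\ell_p$-spread, pick $v,S$ witnessing $\Norm{v_S}_p>\delta\Norm{v}_p$; then $\delta\Norm{v}_p<\Norm{v_S}_p\le m^{1/p-1/q}\Norm{v}_q$ rearranges to $\Norm{v}_q/\Norm{v}_p>\delta\, m^{1/q-1/p}$, whence $\Delta_{p,q}(V)\ge\Delta_{p,q}(v)>\delta\,(n/m)^{1/p-1/q}$.

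\emph{Part 3.} Fix a nonzero $v\in V$ and $S$ with $\card{S}\le m$; by homogeneity normalize $\Norm{v}_q=1$, and set $T=[n]\setminus S$, so that $\Norm{v}_p^p=\Norm{v_S}_p^p+\Norm{v_T}_p^p$. From (ii) (raised to the $p$-th power) we get the lower bound $\Norm{v}_p^p\ge n^{1-p/q}\Delta^{-p}$, and from (i) on $v_S$ together with $\Norm{v_S}_q\le\Norm{v}_q=1$ we get the upper bound $\Norm{v_S}_p^p\le m^{1-p/q}$; subtracting, $\Norm{v_T}_p^p\ge n^{1-p/q}\Delta^{-p}-m^{1-p/q}$ (if this quantity is negative the claim is vacuous, since every $v$ is trivially $(m,1)$-$\ell_q$-spread). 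The step that requires care is converting this into a lower bound on $\Norm{v_T}_q$: one must use (i) on $v_T$ in the direction $\Norm{v_T}_q\ge\card{T}^{1/q-1/p}\Norm{v_T}_p\ge n^{1/q-1/p}\Norm{v_T}_p$, exploiting $\card{T}\le n$ (not $\card{T}\ge n-m$) and resisting the true-but-useless $\Norm{\cdot}_q\le\Norm{\cdot}_p$. Raising to the $q$-th power, $\Norm{v_T}_q^q\ge n^{1-q/p}\bigl(\Norm{v_T}_p^p\bigr)^{q/p}\ge n^{1-q/p}\bigl(n^{1-p/q}\Delta^{-p}-m^{1-p/q}\bigr)^{q/p}$, and absorbing $n^{1-q/p}$ inside the bracket (as $n^{p/q-1}$) collapses this to $\bigl(\Delta^{-p}-(m/n)^{1-p/q}\bigr)^{q/p}$. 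Since $\Norm{v_S}_q^q=1-\Norm{v_T}_q^q$, this gives $\Norm{v_S}_q^q\le 1-\bigl(\Delta^{-p}-(m/n)^{1-p/q}\bigr)^{q/p}$, i.e. $(m,\delta_q)$-$\ell_q$-spreadness with $\delta_q^q=1-\bigl(\Delta^{-p}-(m/n)^{1-p/q}\bigr)^{q/p}$.

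I do not anticipate a real obstacle; the proof is Hölder's inequality plus bookkeeping with exponents of $n$ and $m$. The only place where a slip is likely is the one flagged above in Part 3 — one needs a \emph{lower} bound on $\Norm{v_T}_q$ and must track which of $\card{T}\le n$ versus $\card{T}\ge n-m$ is the usable inequality — together with the routine verification that $n^{1-q/p}\bigl(n^{1-p/q}\Delta^{-p}-m^{1-p/q}\bigr)^{q/p}=\bigl(\Delta^{-p}-(m/n)^{1-p/q}\bigr)^{q/p}$. (I note that the exponent on $m/n$ that this argument produces is $1-p/q=(q-p)/q$; one should double-check the exact exponent recorded in the statement, as a sharper bound does not seem available from a splitting of this kind.)
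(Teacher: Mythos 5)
Your proofs of Parts 1 and 2 are the same argument the paper gives, essentially line for line. For Part 3 you also follow the paper's splitting $\Norm{v}_p^p=\Norm{v_S}_p^p+\Norm{v_{\bar S}}_p^p$, but you are right to be suspicious of the exponent: the paper records $\delta_q^q = 1-\bigl(\Delta^{-p}-(m/n)^{p}\bigr)^{q/p}$, while the splitting argument only yields $\delta_q^q = 1-\bigl(\Delta^{-p}-(m/n)^{1-p/q}\bigr)^{q/p}$, which is what you derived. The paper's intermediate claim
\[
\Norm{x_{\bar S}}_p=\bigl(\Norm{x}_p^p-\Norm{x_S}_p^p\bigr)^{1/p}\geq\bigl(\Delta^{-p}-(m/n)^p\bigr)^{1/p}n^{\frac1p-\frac1q}
\]
would require $\Norm{x_S}_p^p\leq (m/n)^p n^{1-p/q}$, but H\"older only gives $\Norm{x_S}_p^p\leq m^{1-p/q}$, and since $p-1+p/q>0$ for all $1\le p<q$ one has $(m/n)^p n^{1-p/q}=m^{1-p/q}(m/n)^{p-1+p/q}<m^{1-p/q}$ whenever $m<n$. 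So the paper's bound is an overclaim and your exponent $1-p/q=(q-p)/q$ is the correct one. (This is a local slip with no downstream consequences: the remark after the proposition only needs $\delta_q<1$ when $m/n$ is a small constant and $\Delta=O(1)$, which your corrected formula still gives, and the algorithmic application in the certification section invokes Part 1, not Part 3.) Your handling of the $\card{\bar S}\leq n$ direction when lower-bounding $\Norm{v_{\bar S}}_q$ is also the step the paper uses.
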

\begin{proof} ~
\begin{enumerate}
\item 
  By H{\"o}lder's inequality, for any nonzero vector $x$,
  \[ \abs{\supp{x}}^{\frac{1}{q}-\frac{1}{p}} \leq \frac{\Norm{x}_q}{\Norm{x}_p} \leq 1. \]
  For any nonzero vector $x\in V$ and any subset $S\subseteq[n]$ with $|S|\leq m$, we have
  \begin{align*}
    \Norm{x_S}_p 
    \leq |S|^{\frac{1}{p}-\frac{1}{q}} \cdot \Norm{x_S}_q
    \leq |S|^{\frac{1}{p}-\frac{1}{q}} \cdot \Norm{x}_q
    \leq \Delta (m/n)^{\frac{1}{p}-\frac{1}{q}} \Norm{x}_p.
  \end{align*}
\item 
  Since $V$ is not $\bB{m,\delta}$-$\ell_p$-spread, then by definition there exist $x\in V$ with $\Norm{x}_p=1$ and $S\subset[n]$ with $|S|\leq m$ such that 
  \[ \Norm{x_S}_p > \delta\Norm{x}_p = \delta. \]
  Applying \Hoelder's inequality,
  \[ \Norm{x}_q \geq \Norm{x_S}_q \geq |S|^{\frac{1}{q}-\frac{1}{p}} \Norm{x_S}_p > \delta|S|^{\frac{1}{q}-\frac{1}{p}}. \]
  Then, 
  \[ \Delta_{p,q}(V) \geq \Delta_{p,q}(x) 
  = n^{\frac{1}{p}-\frac{1}{q}} \frac{\Norm{x}_q}{\Norm{x}_p}
  > \delta (n/m)^{\frac{1}{p}-\frac{1}{q}}. \]
\item 
  Fix an arbitrary vector $x\in V$ with $\Norm{x}_q=1$. 
  Then for any subset $S\subseteq[n]$ with $|S|\leq m$, we have
  \begin{align*}
    \Norm{x_S}_p 
    \leq |S|^{\frac{1}{p}-\frac{1}{q}} \cdot \Norm{x_S}_q
    \leq m^{\frac{1}{p}-\frac{1}{q}}.
  \end{align*}
  As $\Norm{x}_p \geq \Delta^{-1} n^{\frac{1}{p}-\frac{1}{q}}$, then 
  \[ \Norm{x_{\bar{S}}}_p 
  = \bB{\Norm{x}_p^p-\Norm{x_S}_p^p}^{\frac{1}{p}}
  \geq \bB{\Delta^{-p}-(m/n)^p}^{\frac{1}{p}} n^{\frac{1}{p}-\frac{1}{q}}. \]
  Applying H{\"o}lder's inequality again,
  \begin{align*}
    \Norm{x_{\bar{S}}}_q
    \geq \abs{\bar{S}}^{\frac{1}{q}-\frac{1}{p}} \Norm{x_{\bar{S}}}_p
    \geq \bB{\Delta^{-p}-(m/n)^p}^{\frac{1}{p}}.
  \end{align*}
  Thus,
  \begin{align*}
    \Norm{x_S}_q^q 
    = \Norm{x}_q^q-\Norm{x_{\bar{S}}}_q^q
    \leq 1 - \bB{\Delta^{-p}-(m/n)^p}^{\frac{q}{p}}.
  \end{align*}
\end{enumerate}
\end{proof}

In particular, given $1\leq p<q$ and a subspace $V\subseteq\R^n$, if $\Delta_{p,q}(V) \leq \UB{1}$, then $V$ is both $\LB{n}$-$\ell_p$-spread and $\LB{n}$-$\ell_q$-spread.
On the other hand, if $V$ is not $\LB{n}$-$\ell_p$-spread, then $\Delta_{p,q}(V)\geq\higherorder{1}$.

\subsection{Low-degree likelihood ratio} \label{sec:low-deg-likelihood-ratio}
To better understand the hardness result in \cref{section:avg-case-hardness-well-spreadness}, we briefly introduce the \emph{low-degree polynomial method} \cite{hopkins2018statistical} that is developed for studying computational complexity of high-dimensional statistical inference problems.
For further details about the low-degree polynomial method, we refer interested readers to \cite{kunisky2019notes}.

Consider in an asymptotic regime ($N\to\infty$) the hypothesis testing problem of distinguishing two sequences of hypotheses $\mu=\Set{\mu_N}_{N\in\N}$ and $\nu=\Set{\nu_N}_{N\in\N}$, where $\mu_N$ and $\nu_N$ are probability distributions over $\R^N$.
We are interested in the case where $\nu$, the \emph{null distribution}, contains pure noise (e.g. $\nu_N = \Gau{0}{1}^{N}$), and $\mu$, the \emph{planted distribution}, contains planted signal.
A sequence of test functions $f=\Set{f_N}_{N\in\N}$ with $f_N:\R^N\to\Set{0,1}$ is said to \emph{strongly distinguish} $\mu$ and $\nu$ if 
\begin{equation} \label{eq:strong-distinguishability}
  \lim_{N\to\infty} \Pr_{\mu}\bB{f_N(\bm X)=1} = 1 \he 
  \lim_{N\to\infty} \Pr_{\nu}\bB{f_N(\bm X)=0} = 1.
\end{equation}

In other words, strong distinguishability means both type I and type II errors go to $0$ as $N\to\infty$.
We only consider the case where $\mu$ is absolutely continuous with respect to $\nu$. 
The \emph{likelihood ratio} defined by
\begin{equation*} 
  L(X) := \frac{\dif \mu}{\dif \nu}(X)
\end{equation*}
is an optimal test function in the following sense.

\begin{proposition} \label[proposition]{prop:optimality-likelihood-ratio}
  Suppose $\mu$ is absolutely continuous with respect to $\nu$. 
  The unique solution of the optimization problem
  \[ \max \E_{\mu}\sB{f(\bm X)} \quad \text{subject to } \E_{\nu}\sB{f(\bm X)^2}=1 \]
  is $L(X)/\sqrt{\E_{\nu}\sB{L(\bm X)^2}}$ and the value of the optimization problem is $\sqrt{\E_{\nu}\sB{L(\bm X)^2}}$.
\end{proposition}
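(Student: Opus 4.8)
The plan is to recognize this as nothing more than the Cauchy--Schwarz inequality in the Hilbert space $L^2(\nu)$, together with its equality case. The first step is a change of measure: since $\mu$ is absolutely continuous with respect to $\nu$ and $L=\mathrm{d}\mu/\mathrm{d}\nu$ is the Radon--Nikodym derivative, for every $f$ we have
\[ \E_{\mu}\Brac{f(\bm X)} = \E_{\nu}\Brac{L(\bm X)\,f(\bm X)} = \iprod{L,f}_{L^2(\nu)}. \]
Thus the optimization problem is exactly: maximize the inner product $\iprod{L,f}_{L^2(\nu)}$ over $f$ lying on the unit sphere $\Set{f:\E_{\nu}\Brac{f(\bm X)^2}=1}$ of $L^2(\nu)$.

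The second step is to bound and then attain this value. By Cauchy--Schwarz,
\[ \iprod{L,f}_{L^2(\nu)} \leq \Norm{L}_{L^2(\nu)}\cdot\Norm{f}_{L^2(\nu)} = \sqrt{\E_{\nu}\Brac{L(\bm X)^2}}, \]
where the last equality uses the constraint $\E_{\nu}\Brac{f(\bm X)^2}=1$; this gives the claimed upper bound on the value. To see it is attained, I would simply check that the candidate $f^{\star}:=L/\sqrt{\E_{\nu}\Brac{L(\bm X)^2}}$ is feasible, namely $\E_{\nu}\Brac{(f^\star(\bm X))^2}=1$, and that it achieves $\E_{\nu}\Brac{L(\bm X)f^\star(\bm X)} = \E_{\nu}\Brac{L(\bm X)^2}/\sqrt{\E_{\nu}\Brac{L(\bm X)^2}} = \sqrt{\E_{\nu}\Brac{L(\bm X)^2}}$, matching the bound. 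Hence the value equals $\sqrt{\E_{\nu}\Brac{L(\bm X)^2}}$ and $f^\star$ is a maximizer.

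For uniqueness I would invoke the equality case of Cauchy--Schwarz: $\iprod{L,f}_{L^2(\nu)} = \Norm{L}_{L^2(\nu)}\Norm{f}_{L^2(\nu)}$ forces $f = cL$ in $L^2(\nu)$ for some scalar $c$, and since we are maximizing (rather than minimizing) the inner product we must have $c\ge 0$; combined with $\Norm{f}_{L^2(\nu)}=1$ this pins down $c = 1/\sqrt{\E_{\nu}\Brac{L(\bm X)^2}}$, so $f=f^\star$ $\nu$-almost surely. The only point requiring a little care --- and the closest thing to an obstacle here --- is the implicit nondegeneracy: one is assuming $L\in L^2(\nu)$, i.e. $\E_{\nu}\Brac{L(\bm X)^2}\in(0,\infty)$. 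It is automatically nonzero because $\mu$ is a probability measure, so $\E_{\nu}\Brac{L(\bm X)}=1>0$; and if $\E_{\nu}\Brac{L(\bm X)^2}=\infty$ the supremum is $+\infty$ and is not attained, so the statement is understood under the standing assumption that this second moment is finite. Everything else is a routine verification.
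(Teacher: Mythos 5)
Your proof is correct, and it is the standard argument. The paper itself states this proposition without proof, treating it as known background from the low-degree likelihood ratio framework (citing Kunisky--Wein--Bandeira and Hopkins); the argument given there is exactly the change-of-measure plus Cauchy--Schwarz identity you write out. Your observation about the implicit standing assumption $\E_{\nu}\bigl[L(\bm X)^2\bigr]<\infty$ is a sensible point of care, and the equality-case analysis correctly handles uniqueness. Nothing is missing.
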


Furthermore, classical decision theory tells us $\E_{\nu}\sB{L(\bm X)^2}$ characterizes strong distinguishability in the following way.

\begin{proposition} \label[proposition]{prop:2nd-moment-method}
  If $\E_{\nu}\sB{L(\bm X)^2}$ remains bounded as $N\to\infty$, then $\mu$ and $\nu$ is not strongly distinguishable in the sense of \cref{eq:strong-distinguishability}.
\end{proposition}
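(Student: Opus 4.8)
The plan is to prove the contrapositive via the classical second-moment argument: assume a strong distinguisher exists and derive a contradiction from the boundedness of $\E_\nu\Brac{L(\bm X)^2}$. Concretely, suppose toward a contradiction that there is a sequence of tests $f=\Set{f_N}$ with $f_N:\R^N\to\Set{0,1}$ strongly distinguishing $\mu$ and $\nu$ in the sense of \cref{eq:strong-distinguishability}, so that $\Pr_\mu\Brac{f_N(\bm X)=1}\to 1$ and, equivalently from the second limit, $\Pr_\nu\Brac{f_N(\bm X)=1}\to 0$ as $N\to\infty$.

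The first step I would carry out is a change of measure. Since by hypothesis $\mu$ is absolutely continuous with respect to $\nu$, the likelihood ratio $L=\dif\mu/\dif\nu$ exists and satisfies $\E_\mu\Brac{g(\bm X)}=\E_\nu\Brac{L(\bm X)\,g(\bm X)}$ for bounded $g$; applying this with $g=\mathbf{1}\Brac{f_N(\bm X)=1}$ gives
\[
  \Pr_\mu\Brac{f_N(\bm X)=1}=\E_\nu\Brac{L(\bm X)\cdot\mathbf{1}\Brac{f_N(\bm X)=1}}.
\]
The second step is Cauchy--Schwarz in $L^2(\nu)$ applied to the product on the right-hand side:
\[
  \E_\nu\Brac{L(\bm X)\cdot\mathbf{1}\Brac{f_N(\bm X)=1}}\le\sqrt{\E_\nu\Brac{L(\bm X)^2}}\cdot\sqrt{\Pr_\nu\Brac{f_N(\bm X)=1}}.
\]
Combining the two displays with the assumptions then finishes it: $\E_\nu\Brac{L(\bm X)^2}$ stays bounded (by some constant, along the relevant subsequence), while $\Pr_\nu\Brac{f_N(\bm X)=1}\to 0$, so the right-hand side tends to $0$, forcing $\Pr_\mu\Brac{f_N(\bm X)=1}\to 0$, which contradicts $\Pr_\mu\Brac{f_N(\bm X)=1}\to 1$.

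There is no genuine obstacle here; this is a short and standard argument. The only two points deserving a sentence of care are (i) the justification of the change-of-measure identity, which is precisely where absolute continuity of $\mu$ with respect to $\nu$ is used, and (ii) interpreting "remains bounded as $N\to\infty$" so that one may, if necessary, pass to a subsequence along which $\E_\nu\Brac{L(\bm X)^2}$ is uniformly bounded before taking the limit. If one prefers, the same computation can be phrased directly as a lower bound on the sum of type-I and type-II errors rather than as a proof by contradiction, but the contradiction framing is cleanest to state.
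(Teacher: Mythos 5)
Your proof is correct. The paper states \cref{prop:2nd-moment-method} without proof, citing it as a consequence of classical decision theory, and the argument you give (change of measure to write $\Pr_\mu\Brac{f_N=1}=\E_\nu\Brac{L\cdot\mathbf{1}\Brac{f_N=1}}$, then Cauchy--Schwarz in $L^2(\nu)$ against the boundedness of $\E_\nu\Brac{L^2}$ and $\Pr_\nu\Brac{f_N=1}\to 0$) is exactly the standard second-moment argument that the paper is implicitly invoking.
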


One limitation of the above classical decision theory is that no computational-complexity considerations are involved.
With the goal of studying whether a hypothesis testing problem is strongly distinguishable computation-efficiently, the low-degree polynomial method uses low-degree multivariate polynomials in the entries of $\bm X$ sampled from either $\mu$ or $\nu$ as a proxy for efficiently-computable functions.

\begin{definition}[Low-degree likelihood ratio] \label[definition]{def:low-deg-likelihood-ratio}
  The degree-$D$ likelihood ratio, denoted by $L^{\leq D}$, is the orthogonal projection\footnote{We consider the Hilbert space endowed with inner product $\inner{f}{g}:=\E_{\nu}\sB{f(\bm X)g(\bm X)}$.} of the likelihood ratio $L=\dif\mu/\dif\nu$ onto the subspace of polynomials of degree at most $D$.
\end{definition}

We have the following low-degree analogue of \cref{prop:optimality-likelihood-ratio}.

\begin{proposition} \label[proposition]{prop:optimality-low-deg-likelihood-ratio}
  Suppose $\mu$ is absolutely continuous with respect to $\nu$.
  The unique solution of the optimization problem
  \[ \max_{f\in\R[X]_{\leq D}} \E_{\mu}\sB{f(\bm X)} \quad \text{subject to } \E_{\nu}\sB{f(\bm X)^2}=1 \]
  is $L^{\leq D}(X)/\sqrt{\E_{\nu}\sB{L^{\leq D}(\bm X)^2}}$ and the value of the optimization problem is $\sqrt{\E_{\nu}\sB{L^{\leq D}(\bm X)^2}}$.
\end{proposition}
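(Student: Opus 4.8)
The plan is to replay the proof of \cref{prop:optimality-likelihood-ratio} essentially verbatim, only now working inside the finite-dimensional subspace $\R[X]_{\leq D}\subseteq L^2(\nu)$, where $L^2(\nu)$ carries the inner product $\inner{f}{g}=\E_\nu\sB{f(\bm X)g(\bm X)}$ and the associated norm $\norm{f}=\sqrt{\inner{f}{f}}$. The whole argument is then pure Hilbert-space geometry: rewrite the objective as an inner product with $L$, replace $L$ by its low-degree part $L^{\leq D}$ at no cost, and apply Cauchy--Schwarz.

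First I would recast the objective. For any $f\in\R[X]_{\leq D}$, absolute continuity of $\mu$ with respect to $\nu$ gives
\[
  \E_\mu\sB{f(\bm X)} = \E_\nu\sB{L(\bm X)\,f(\bm X)} = \inner{L}{f}\,,
\]
all integrals being finite under the standing assumptions used in \cref{section:avg-case-hardness-well-spreadness} (the null $\nu$ is Gaussian and the planted $\mu$ has finite moments up to the relevant degree). Next I would pass from $L$ to $L^{\leq D}$: by \cref{def:low-deg-likelihood-ratio}, $L^{\leq D}$ is the orthogonal projection of $L$ onto $\R[X]_{\leq D}$, so $L-L^{\leq D}$ is orthogonal to every polynomial of degree at most $D$; in particular $\inner{L-L^{\leq D}}{f}=0$, hence $\inner{L}{f}=\inner{L^{\leq D}}{f}$ for every feasible $f$. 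Thus the optimization problem is exactly $\max\Set{\inner{L^{\leq D}}{f} \suchthat f\in\R[X]_{\leq D},\ \norm{f}=1}$.

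To finish, note that $L^{\leq D}$ itself lies in $\R[X]_{\leq D}$, so Cauchy--Schwarz yields $\inner{L^{\leq D}}{f}\leq \norm{L^{\leq D}}\cdot\norm{f}=\norm{L^{\leq D}}=\sqrt{\E_\nu\sB{L^{\leq D}(\bm X)^2}}$ for every feasible $f$, with equality if and only if $f$ is a nonnegative scalar multiple of $L^{\leq D}$; combined with $\norm{f}=1$ this forces $f=L^{\leq D}/\norm{L^{\leq D}}$, giving both the claimed optimal value and the uniqueness of the maximizer. The division is legitimate since $L^{\leq D}\neq 0$: indeed $\inner{L^{\leq D}}{1}=\inner{L}{1}=\E_\mu\sB{1}=1$.

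I do not expect a genuine obstacle here. The only points requiring any care are bookkeeping: (i) that $\inner{L}{f}$ is well-defined, i.e.\ $L\,f\in L^1(\nu)$ for degree-$\leq D$ polynomials $f$, so that $\E_\mu\sB{f(\bm X)}=\inner{L}{f}$; and (ii) that the $L^2(\nu)$ inner product is nondegenerate on $\R[X]_{\leq D}$, so that the orthogonal projection defining $L^{\leq D}$ exists and is unique. Both are immediate in the Gaussian-null regime we use, and both are standard background for the low-degree polynomial method; see \cite{hopkins2018statistical,kunisky2019notes}.
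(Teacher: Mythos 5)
The paper states \cref{prop:optimality-low-deg-likelihood-ratio} without proof, deferring to the standard references \cite{hopkins2018statistical,kunisky2019notes}, so there is no in-paper argument to compare against. Your proof is correct and is precisely the canonical argument from that literature: rewrite $\E_\mu[f]$ as $\inner{L}{f}$ in $L^2(\nu)$, use the defining orthogonality of the projection $L^{\leq D}$ to replace $L$ by $L^{\leq D}$ at no cost for $f\in\R[X]_{\leq D}$, and conclude via Cauchy--Schwarz, with the equality case giving uniqueness and $\inner{L^{\leq D}}{1}=1$ guaranteeing $L^{\leq D}\neq 0$. Nothing is missing; the two bookkeeping caveats you flag (integrability of $Lf$ and nondegeneracy of the inner product on $\R[X]_{\leq D}$) are exactly the right ones to mention and are standard in the Gaussian-null setting the paper uses.
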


The following informal conjecture \cite[Conjecture 1.16]{kunisky2019notes}, which itself is based on \cite[Conjecture 2.2.4]{hopkins2018statistical}, can be thought of as an computational analogue of \cref{prop:2nd-moment-method}.
\begin{conjecture}[Informal] \label{conj:low-deg}
  For ``sufficiently nice" sequences of probability distributions $\mu$ and $\nu$, if there exists $\ep>0$ and $D=D(N)\geq\bB{\log{N}}^{1+\ep}$ for which $\E_{\nu}\sB{L^{\leq D}(\bm X)^2}$ remains bounded as $N\to\infty$, then there is no polynomial-time algorithm that strongly distinguishes $\mu$ and $\nu$.
\end{conjecture}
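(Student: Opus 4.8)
\cref{conj:low-deg} is a conjecture, not a theorem, and the hedge ``sufficiently nice'' carries real weight: without it the implication is \emph{false}. The standard counterexamples are problems with algebraic structure --- e.g.\ the task of distinguishing $m$ uniformly random pairs $(a_i,b_i)\in\{0,1\}^k\times\{0,1\}$ from $m$ pairs with $b_i=\langle a_i,s\rangle$ over the binary field for a hidden $s$: with $m=O(k)$ samples Gaussian elimination recovers $s$ and strongly distinguishes, yet on $N=mk$ variables every polynomial of degree $o(k)$ has zero correlation with the planted structure, so $\E_\nu[L^{\leq D}(\bm X)^2]=O(1)$ already at $D=(\log N)^{1+\ep}$. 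Hence no unconditional proof is possible; the realistic targets are (a) an explicit axiomatization of ``sufficiently nice'' under which the implication provably holds, and (b) model-by-model verification. I would pursue both.

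For route (a), the plan is to restrict $\nu$ to product measures (or measures that become product after a fixed orthogonal change of coordinates) and $\mu$ to \emph{low-complexity plantings} $\bm X=\bm N+(\text{signal})$, where the signal comes from a hidden object drawn from a distribution with enough symmetry (orthogonal- or permutation-invariance) and is \emph{dilute}, in the sense that every single coordinate of $\bm X$ carries only $o(1)$ of distinguishing advantage. The statement to be formalized is then approximation-theoretic: on the relevant Gaussian-like, bounded-variance ensembles, any $f:\R^N\to\{0,1\}$ computable in time $\poly(N)$ is approximated in $L^2(\nu)$, up to $1/\poly(N)$ error, by its degree-$\polylog(N)$ truncation. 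The work lies in coupling a noise-operator (hypercontractivity) smoothing step with the diluteness hypothesis --- the latter is exactly what forbids a ``pivot'' coordinate and so excludes the Gaussian-elimination phenomenon above. Granting such a lemma, the conclusion follows by contraposition and \cref{prop:optimality-low-deg-likelihood-ratio}: if $\E_\nu[L^{\leq D}(\bm X)^2]=O(1)$ then the best degree-$D$ polynomial --- hence every efficient $f$ --- has $O(1)$ advantage and cannot strongly distinguish.

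For route (b), which is what a paper like this actually needs, I would not attempt \cref{conj:low-deg} at all but instead give a \emph{direct reduction} for the specific pair $(\mD_0,\mD_1)$ of \cref{theorem:main-lower-bound}: assume a polynomial-time strong distinguisher, use the geometry of the construction (concentration of the relevant spectral and combinatorial statistics, and the scaling invariance built in) to argue its output is, up to $o(1)$ error, a bounded-sensitivity function of the entries, expand that function in the Hermite basis of the Gaussian $\mD_0$, truncate at degree $t$, and extract a degree-$t$ polynomial with $\Omega(1)$ advantage --- contradicting the second-moment bound the paper establishes unconditionally.

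The main obstacle is route (a), and it is intrinsic: ``sufficiently nice'' must be chosen to simultaneously (i) exclude the algebraic counterexamples, (ii) include the planted problems one cares about --- planted clique, sparse and tensor PCA, and here certifying well-spreadness --- and (iii) be weak enough that the smoothing/approximation lemma is actually true. Conditions (i) and (iii) together amount to deciding which planted structures admit a ``clever low-degree algebraic shortcut'' and which genuinely demand search, which is essentially the conjecture itself. This is why the paper, like the rest of the literature, treats \cref{conj:low-deg} as a hypothesis and proves instead the unconditional low-degree lower bound --- boundedness of $\E_\nu[L^{\leq D}(\bm X)^2]$ --- rather than its full computational consequence.
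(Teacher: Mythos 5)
Your assessment is exactly right: the paper does not prove \cref{conj:low-deg} --- it is stated as an informal conjecture imported from the low-degree-method literature, and the paper's actual contribution is the unconditional bound $\E_{\nu}[L^{\leq D}(\bm X)^2]\leq O(1)$ (\cref{lem:bounded-chi-squared-divergence}), with the conjecture serving only as the bridge to a computational-hardness interpretation. Your discussion of why ``sufficiently nice'' is indispensable (parity/Gaussian-elimination counterexamples) and your route (b) of proving the second-moment bound directly is precisely the stance the paper takes, so there is nothing to correct.
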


\section{Information-theoretic bounds for oblivious regression} \label{section:it-bounds}

We state and prove the more precise and technical version of \cref{thm:main-it-concise} that shows, there exists a family of $\max \Set{\Omega\Paren{\frac{\log d}{\alpha^2}}\,, \Omega\Paren{\frac{d}{\alpha}}}$-spread design matrices such that, consistent estimation is information-theoretically impossible in oblivious linear regression.

\begin{theorem} \label[theorem]{thm:main-it}
  Let $\alpha = \alpha(n) \in (0,1)$.
  For arbitrary $\gamma=\gamma(n)>0$, there exist:
  \begin{enumerate}
    \item a distribution $\mD_X$ over $n\times d$ matrices $X$ with $X^\top X=n\cdot\Id$,
    \item a distribution $\mD_\beta$ over $d$-dimensional vectors, and
    \item a distribution $\mD_\eta$ ---independent of $\mD_X$ and $\mD_\beta$--- over $n$-dimensional vectors with independent, symmetrically distributed entries satisfying $\min_{i\in[n]}\bbP_{\bm \eta\sim\mD_\eta} \Paren{\Abs{\bm \eta_i}\leq 1}=\alpha$,
  \end{enumerate}
  such that,
  \begin{enumerate}
    \item $\bm X\sim\mD_X$ is $\max \Set{\Omega\Paren{\frac{\log d}{\alpha^2}}\,, \Omega\Paren{\frac{d}{\alpha}}}$-spread \whp; and
    \item for every estimator $\hat{\beta}\,:\R^n\rightarrow\R^d$, given as input $\bm X$ and $\bm y = \bm X \bm\beta^* + \bm\eta$ with $\bm X\sim\mD_X$, $\bm\eta\sim\mD_{\eta}$, and $\bm\beta^*\sim\mD_{\beta}$ sampled independently, one has
    \begin{align*}
      \E_{} \Normt{\hat{\beta}(\bm y)-\bm \beta^*}^2\geq \gamma\,,
    \end{align*}
    conditioning on $\bm X$ being $\max \Set{\Omega\Paren{\frac{\log d}{\alpha^2}}\,, \Omega\Paren{\frac{d}{\alpha}}}$-spread.
  \end{enumerate}
\end{theorem}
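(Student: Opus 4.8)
The plan is to prove \cref{thm:main-it} by Fano's method, in the form \cref{eq:Fano-minimax-lower-bound-real}. I will build (i) a randomized design whose realizations are $m$-spread \whp, (ii) a packing $\mB\subseteq\R^d$ that is $\delta$-separated with $\log\abs{\mB}$ as large as the ambient dimension permits, and (iii) a symmetric noise law $\mD_\eta$ making all observation laws $\set{X\beta+\bm\eta : \beta\in\mB}$ pairwise close in Kullback--Leibler divergence; \cref{eq:Fano-minimax-lower-bound-real} then forces squared error $\LB{\delta^2}$, which a homogeneity step amplifies to $\gamma$.

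\emph{Design matrix and part~1.} As in \cref{section:techniques}, $\mD_X$ is assembled from independent Rademacher blocks, with a fixed coordinate subspace $V\subseteq\R^d$ of dimension $\Theta(d)$ playing a distinguished role, and with the normalization $X^{\top}X=n\cdot\Id$ — which serves only to identify parameter and prediction error — imposed afterwards without changing the column span. Using orthogonality of the appropriate row spans, $m$-spreadness of $\bm X$ reduces to $\LB{m}$-spreadness of an $m\times\Theta(d)$ Rademacher submatrix, which holds \whp by the Rademacher spreadness statement invoked in \cref{section:techniques} (plus a net argument over $V$); this is part~1. The feature exploited for the lower bound is that, although $\bm X$ is $m$-spread, there is a $\Theta(d)$-dimensional family of coefficient vectors $\beta$ for which $\bm X\beta$ is concentrated (up to $o(1)$ of its norm) on a common coordinate set $T$ with $\abs{T}=\Theta(m)$ and has (near-)integer entries there — the size of $T$ exceeding the spreadness level by a constant factor, which is what keeps this compatible with $m$-spreadness.

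\emph{Noise, packing, and the divergence estimate.} For $\mD_\eta$ I take the scaled symmetric geometric law $\rho\cdot\sgd{0,\lambda}^{\otimes n}$ with $\rho>1$; by \cref{lemma:facts-sym-geo-distn} it is symmetric, satisfies $\min_i\bbP\Paren{\abs{\bm\eta_i}\le1}=\alpha$, has KL invariant under joint rescaling, and has small KL under integer shifts. For $\mB$ I take $\rho$ times an integer code $\mathcal C$ drawn from the family above, so that for $\beta,\beta'\in\mB$ the shift $X(\beta-\beta')$ lies essentially in $\rho\Z^n$ on the $\Theta(m)$ coordinates of $T$; writing $k_i:=\rho^{-1}(X(\beta-\beta'))_i$, independence of coordinates gives
\begin{align*}
  \kldiv{X\beta+\bm\eta}{X\beta'+\bm\eta}=\sum_i\kldiv{\sgd{0,\lambda}}{\sgd{k_i,\lambda}} .
\end{align*}
Plugging in the integer-shift estimates for $\sgd{\cdot,\lambda}$ — and controlling $\sum_i k_i^2$ and $\sum_i\abs{k_i}$ uniformly over all pairs via $\normop{\bm A^*}=\Theta(\sqrt m)$ — one bounds this sum by $\tfrac12\log\abs{\mathcal C}$, so \cref{eq:Fano-minimax-lower-bound-real} yields $\E\Normt{\hat\beta(\bm y)-\bm\beta^*}^2\ge\LB{\delta^2}$ for $\delta=\min_{\beta\ne\beta'\in\mB}\Normt{\beta-\beta'}$. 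The two terms of the maximum in the theorem come from two regimes of this estimate: a code of bounded Hamming weight makes every shift $k_i$ of size $O(1)$, so each summand costs only $\Theta(\alpha^2)$ against $\log\abs{\mathcal C}=\Theta(\log d)$, which is consistent for $m$ up to $\Theta(\log d/\alpha^2)$; a denser code, with $\log\abs{\mathcal C}=\Theta(d)$, handles $m$ up to $\Theta(d/\alpha)$.

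\emph{Amplification and the crux.} Replacing $\rho$ by $\rho\sigma$ for any $\sigma>0$ leaves every pairwise divergence unchanged (scale invariance of $\sgd{\cdot,\lambda}$) while rescaling $\delta$ to $\sigma\delta$, so the Fano bound becomes $\LB{\sigma^2\delta^2}$ and taking $\sigma$ large gives $\E\Normt{\hat\beta(\bm y)-\bm\beta^*}^2\ge\gamma$, which is part~2 (one runs this after conditioning on a spread realization of $\bm X$, as the statement permits, since the packing is attached to that realization through the normalization). I expect the divergence estimate of the previous paragraph to be the hard part: $m$-spreadness forces the perturbation $X(\beta-\beta')$ to touch $\LB{m}$ coordinates, so the total divergence is inevitably of order $m$ times the per-coordinate shift cost, and one must simultaneously tune $\lambda$ so that this cost is as small as the intrinsic $\Theta(\alpha^2)$ of $\sgd{\cdot,\lambda}$ and choose the packing so that $\log\abs{\mathcal C}$ is as large as the dimension allows; balancing these against each other is precisely what produces the two thresholds (and their maximum).
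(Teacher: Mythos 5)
Your high-level plan matches the paper's: Fano's method via \cref{eq:Fano-minimax-lower-bound-real}, the scaled symmetric geometric noise $\sgd{\cdot,\lambda}$ from \cref{def:sym_geo_distn} whose KL is invariant under joint rescaling (so that the Fano bound can be blown up to any $\gamma$), and a design-plus-packing that makes all observation laws pairwise close in KL. Where it departs — and where a real gap opens — is that you treat the two thresholds $\Omega(d/\alpha)$ and $\Omega(\log d/\alpha^2)$ as two codes on a single Rademacher-based design. That cannot work for the second threshold. By \cref{thm:sub-Gaussian-matrix-WS}, an $m\times d$ Rademacher (or any sub-Gaussian) matrix is only $\Omega(m)$-spread when $d\lesssim m$; with $m=\Theta(\log d/\alpha^2)$ and $\alpha$ a constant (which the statement permits, since $\alpha\in(0,1)$ is arbitrary), one has $m\ll d$, and a short fat Rademacher matrix is simply not well-spread. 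The paper therefore splits the proof into \cref{lem:lower-bound-d/alpha} and \cref{lem:lower-bound-log{d}/alpha^2} with genuinely different designs: the first pads an orthonormalization of an $m\times d$ Rademacher matrix with zero rows (so $X\beta$ is supported on the first $m$ coordinates and is an exact integer multiple of $\sigma$ there, not "near-integer"); the second stacks $k=\Theta(\log d/\alpha^2)$ copies of $[Q;Q]$ for a random $d\times d$ orthogonal matrix $Q$ and takes the packing to be (scaled) columns of $Q$, which is what forces every shift $X(\beta-\beta')$ to be supported on exactly $2k$ coordinates with entries exactly $\pm\sigma$. Your "bounded Hamming weight code over the Rademacher family" does not produce this: for a dense $m\times d$ Rademacher $Y$ and sparse integer $v-v'$, $Y(v-v')$ is generically supported on $\Theta(m)$ coordinates — no sparsity survives — and, more fundamentally, the spreadness fails at the outset.

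Two smaller inaccuracies in the $d/\alpha$ regime. First, the paper's code has $|\mathcal C|=d^d$, so $\log|\mathcal C|=d\log d$, not $\Theta(d)$; the per-coordinate shifts lie in $\{-2d^2,\dots,2d^2\}$ and, with $\lambda=2\alpha d^{-5}$, each coordinate contributes at most $8\alpha\log d$ to the KL (\cref{lemma:facts-sym-geo-distn}, item 3). The two $\log d$'s cancel to give $m\lesssim d/\alpha$; if you had instead paid the $\Theta(\alpha^2)$-per-coordinate cost of item 2 against $\log|\mathcal C|=\Theta(d)$ you would land at $m\lesssim d/\alpha^2$, which is not what the construction actually achieves — the choice of $\lambda$ really is regime-dependent ($\lambda=2\alpha$ versus $\lambda=2\alpha d^{-5}$), not a single tuned value. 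Second, your amplification step is right in spirit but should scale the packing and the noise by the same factor $\sigma$ (as the paper does by putting $\sigma$ both into $\mB$ and into $\bm\eta\sim\sigma\cdot\sgd{0,\lambda}$); scale invariance of KL then holds coordinatewise and the separation $\delta$ grows linearly, forcing the Fano bound above $\gamma$.
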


To prove \cref{thm:main-it}, we provide the following two lemmas which we will prove in \cref{sec:proof-of-d/alpha} and \cref{sec:proof-of-log{d}/alpha^2} respectively.

\cref{lem:lower-bound-d/alpha} shows that, there exists a family of $\LB{\frac{d}{\ap}}$-spread design matrices such that, consistent estimation is information-theoretically impossible in oblivious linear regression.

\begin{lemma} \label[lemma]{lem:lower-bound-d/alpha}
  Let $\alpha=\ap(n) \leq O(1)$. 
  For arbitrary $\gamma=\gamma(n)>0$, there exist:
  \begin{enumerate}
    \item a distribution $\mD_X$ over $n\times d$ matrices $X$ with $X^\top X=n\cdot\Id$,
    \item a distribution $\mD_\beta$ over $d$-dimensional vectors, and
    \item a distribution $\mD_\eta$ ---independent of $\mD_X$ and $\mD_\beta$--- over $n$-dimensional vectors with independent, symmetrically distributed entries satisfying $\min_{i\in[n]}\bbP_{\bm \eta\sim\mD_\eta} \Paren{\Abs{\bm \eta_i}\leq 1}=\alpha$,
  \end{enumerate}
  such that,
  \begin{enumerate}
    \item $\bm X\sim\mD_X$ is $\Omega\Paren{\frac{d}{\alpha}}$-spread \whp; and
    \item for every estimator $\hat{\beta}\,:\R^n\rightarrow\R^d$, given as input $\bm X$ and $\bm y = \bm X \bm\beta^* + \bm\eta$ with $\bm X\sim\mD_X$, $\bm\eta\sim\mD_{\eta}$, and $\bm\beta^*\sim\mD_{\beta}$ sampled independently, one has
    \begin{align*}
      \E_{} \Normt{\hat{\beta}(\bm y)-\bm \beta^*}^2\geq \gamma\,,
    \end{align*}
    conditioning on $\bm X$ being $\Omega\Paren{\frac{d}{\alpha}}$-spread.
  \end{enumerate}
\end{lemma}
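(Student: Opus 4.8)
The plan is to instantiate the general strategy from \cref{section:techniques} with Fano's method (\cref{eq:Fano-minimax-lower-bound-real}), using the symmetric geometric noise $\sgd{0,\lambda}$ and a Rademacher-type design. First I would fix the noise distribution $\mD_\eta = (\rho \cdot \sgd{0,\lambda})^{\otimes n}$ for a scaling parameter $\rho > 1$ to be chosen at the end and $\lambda \in (0,1)$ a small absolute constant; by the discussion after the definition of $\sgd{c,\lambda}$, this is symmetric, has $\bbP(|\bm\eta_i| \le 1) = \alpha$, and is insensitive to scaling in the KL sense. Next I would construct $\mD_X$: sample $\bm A^\ast \sim \mR^{m \times d}$ and $\bm B^\ast \sim \mR^{(n-m)\times d}$ independently with $m = \Theta(d/\alpha)$, fix a subspace $V \subseteq \R^d$ of dimension $\Omega(d)$ (say the span of the first $d/2$ coordinates), set $\bm A = \bm A^\ast \Pi_V$, $\bm B = \bm B^\ast \Pi_{V^\perp}$, stack into $\bm X$, and then apply a fixed orthogonal transformation (or rescale) so that $\transpose{\bm X}\bm X = n \cdot \Id$ — I would need to check this normalization can be done while preserving spreadness and the row-span orthogonality, or alternatively argue the whitening step only affects things by constant factors. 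Spreadness of $\bm X$ would follow from the quoted fact that an $m$-by-$d$ Rademacher matrix is $\Omega(m)$-spread \whp (formalized via \cref{thm:sub-Gaussian-matrix-WS} / the distortion bounds in \cref{prop:low-distortion-implies-well-spreadness}), combined with the computation $\Snorm{\bm X\beta}_2 = \Snorm{\bm A\beta}_2$ for $\beta \in \mathrm{rspan}(\bm A)$, giving $\Omega(m) = \Omega(d/\alpha)$-spreadness.

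Then I would build the packing $\mB$. Working inside $\mathrm{rspan}(\bm A) = V$ (which is $\Omega(d)$-dimensional), I would take an $\LB{\sqrt d}$-separated set of \emph{integer} vectors $\beta$ with $\Normt{\beta} \le \UB{\sqrt d}$; a standard volume/Varshamov–Gilbert argument yields $|\mB| \ge 2^{\Omega(d)}$ such points, all with integer coordinates. Let $\mD_\beta$ be uniform over $\mB$. The key structural point, from the techniques section, is that for $\beta \in \mB$ the vector $\bm A^\ast \Pi_V \beta$ — hence $\bm X\beta$ — has only $m = O(d/\alpha)$ nonzero coordinates (those indexed by the rows of $\bm A$), each an integer; in particular for distinct $\beta,\beta' \in \mB$ the difference $\bm X(\beta - \beta')$ is supported on those $m$ coordinates with integer entries bounded in magnitude by some $\poly(d)$ quantity. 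Then I would bound the KL term in \cref{eq:Fano-minimax-lower-bound-real}: by tensorization $\KL{\bm X\beta + \bm\eta}{\bm X\beta' + \bm\eta} = \sum_{i} \KL{\rho\sgd{0,\lambda} \text{ shifted by } (\bm X\beta)_i}{\rho\sgd{0,\lambda} \text{ shifted by } (\bm X\beta')_i}$, and by scale-invariance each term equals the unscaled shifted KL, which by the low-shift-sensitivity lemma (\cref{lemma:facts-sym-geo-distn}) is at most $O(\lambda) \cdot |(\bm X(\beta-\beta'))_i|$ or a similar controlled bound, so the total is $O(\lambda \cdot m \cdot \poly(d)) $ — I would choose $\lambda$ (and the packing radius) so this is $o(\log|\mB|) = o(d)$, making the bracket in \cref{eq:Fano-minimax-lower-bound-real} at least $1/2$. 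This gives $\E\Normt{\hat\beta(\bm y) - \bm\beta^\ast}^2 \ge \delta^2/8 = \LB{d}$ already, and then I would scale everything by $\sigma = \sigma(\gamma)$: replace $\bm\beta^\ast$ by $\sigma\bm\beta^\ast$ and $\bm\eta$ by $\sigma\bm\eta$; since the noise is scale-invariant in KL, the Fano bound is unchanged but the separation becomes $\sigma\delta$, giving error $\ge \sigma^2 \delta^2 / 8 \ge \gamma$ for $\sigma$ large enough. Finally I would note the estimator sees $\bm X$ as well, but since the above bound holds conditionally on any fixed typical (spread) realization of $\bm X$ — the argument never used randomness of $\bm X$ beyond its spreadness and the support structure — the conditional statement in part (2) follows.

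The main obstacle I expect is the interplay between the whitening constraint $\transpose{\bm X}\bm X = n\Id$ and the delicate integer/support structure needed for the KL bound. Projecting onto $V$ and $V^\perp$ destroys the isometry ($\bm A = \bm A^\ast \Pi_V$ is not column-orthonormal), so I cannot simply stack raw Rademacher blocks; I would need either to choose $V$ to be a coordinate subspace and carefully re-orthonormalize within each block, or to absorb a correcting linear map and track how it interacts with the requirement that $\bm X\beta$ remain integer-valued and $m$-sparse for $\beta$ in the packing. A secondary technical point is getting the low-shift-sensitivity bound of \cref{lemma:facts-sym-geo-distn} to apply with shifts of size up to $\poly(d)$ rather than $O(1)$ — this is why the packing must be chosen with a carefully calibrated (small) radius and why $\lambda$ must be taken as a sufficiently small absolute constant, and it is the reason the final spreadness is $\Omega(d/\alpha)$ rather than something larger.
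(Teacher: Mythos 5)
Your plan reproduces the paper's framework (Fano via \cref{eq:Fano-minimax-lower-bound-real}, symmetric geometric noise, a Rademacher matrix for spreadness, and scaling to force arbitrary $\gamma$), but there are two linked gaps that would make the argument fail quantitatively, and they are exactly the places you flag as ``secondary technical points.''

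First, your packing is too small. With a Varshamov--Gilbert code of integer vectors you get $\log\abs{\mB}=\Theta(d)$, but with a $\{\pm1\}^{m\times d}$ design and $\beta$ of $\ell_2$-norm $\Theta(\sqrt d)$ the shifts $(\bm X(\beta-\beta'))_i$ are of size up to $\poly(d)$, and the paper's \cref{lemma:facts-sym-geo-distn} (part 3) shows that even with an optimally chosen $\lambda$ the per-coordinate KL cost of such a shift is $\Theta(\alpha\log d)$. Summed over the $m=\Theta(d/\alpha)$ nonzero coordinates this gives $\Theta(d\log d)$, which \emph{exceeds} your $\log\abs{\mB}=\Theta(d)$ and makes the Fano bracket negative. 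The paper resolves this by taking $\mU=\Set{Yv:v\in[d]^d}$ so that $\abs{\mB}=d^d$ and $\log\abs{\mB}=d\log d$, precisely large enough to absorb the $\log d$ penalty. Second, $\lambda$ cannot be an absolute constant: the ``continuous'' part of the KL scales as $D(\lambda,\Delta)\approx\lambda\Delta^2$, so with $\Delta=\poly(d)$ you need $\lambda=o(d^{-2})$ (the paper takes $\lambda=2\alpha d^{-5}$) for the bound to close; a constant $\lambda$ blows up. Finally, the whitening obstacle you anticipate is actually a non-issue in the paper's cleaner construction: it never uses the $\bm A^*\Pi_V$, $\bm B^*\Pi_{V^\perp}$ decomposition from the techniques overview; it takes $X_1$ to be a scaled orthonormal basis of $\col{Y}$ and sets $X_2=0$, and the integer structure is carried by $\mU\subset\Z^m$ rather than by $\mB$ itself, so $X\beta\in\sigma\cdot\Z^m\times\{0\}^{n-m}$ automatically and the normalization $X^\top X=n\Id$ causes no friction.
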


\cref{lem:lower-bound-log{d}/alpha^2} shows that, there exists a family of $\LB{\frac{\log{d}}{\ap^2}}$-spread design matrices such that, consistent estimation is information-theoretically impossible in oblivious linear regression.

\begin{lemma} \label[lemma]{lem:lower-bound-log{d}/alpha^2}
  Let $\alpha=\ap(n) \leq O(1)$.
  For arbitrary $\gamma=\gamma(n)>0$, there exist:
  \begin{enumerate}
    \item a distribution $\mD_X$ over $n\times d$ matrices $X$ with $\Omega\Paren{\frac{\log d}{\alpha^2}}$-spreadness and $X^\top X=n\cdot\Id$,
    \item a distribution $\mD_\beta$ over $d$-dimensional vectors, and
    \item a distribution $\mD_\eta$ ---independent of $\mD_X$ and $\mD_\beta$--- over $n$-dimensional vectors with independent, symmetrically distributed entries satisfying $\min_{i\in[n]}\bbP_{\bm \eta\sim\mD_\eta} \Paren{\Abs{\bm \eta_i}\leq 1}=\alpha$,
  \end{enumerate}
  such that for every estimator $\hat{\beta}\,:\R^n\rightarrow\R^d$, given as input $\bm X$ and $\bm y = \bm X \bm\beta^* + \bm\eta$ with $\bm X\sim\mD_X$, $\bm\eta\sim\mD_{\eta}$, and $\bm\beta^*\sim\mD_{\beta}$ sampled independently, one has
  \begin{align*}
    \E_{} \Normt{\hat{\beta}(\bm y)-\bm \beta^*}^2\geq \gamma\,.
  \end{align*}
\end{lemma}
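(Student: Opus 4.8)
\textbf{Proof plan for \cref{lem:lower-bound-log{d}/alpha^2}.}
The plan is to instantiate the Fano-method bound \eqref{eq:Fano-minimax-lower-bound-real} with a well-chosen packing $\mB$ of parameter vectors, a Rademacher-based design $\bm X$ supported on a low-dimensional subspace, and the symmetric geometric noise $\sgd{0,\lambda}$ introduced in \cref{section:techniques}, and then to scale everything by a large $\sigma$ so that the separation $\delta$ blows up while the KL divergences stay bounded. Concretely, first I would fix a target spreadness $m=\Theta(\log d/\alpha^2)$ and construct the design matrix as in the techniques section: sample $\bm A^\ast\sim\mR^{m\times d}$, $\bm B^\ast\sim\mR^{(n-m)\times d}$, fix an $\Omega(d)$-dimensional subspace $V$ (say the first half of coordinates), and set $\bm X=\bigl[\begin{smallmatrix}\bm A^\ast\Pi_V\\ \bm B^\ast\Pi_{V^\perp}\end{smallmatrix}\bigr]$, then rescale the columns so that $\transpose{\bm X}\bm X=n\cdot\Id$ (possible up to a $1+o(1)$ factor \whp, using standard singular-value concentration for rectangular Rademacher matrices — or absorb the mild non-orthogonality into the constants and only impose the identity condition on the relevant block). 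By the stacking identity from the techniques section, for every $\beta\in\mathrm{rspan}(\bm A^\ast\Pi_V)$ we have $\norm{\bm X\beta}_2^2=\norm{\bm A^\ast\Pi_V\beta}_2^2$, so $\bm X\beta$ has at most $m$ nonzero entries; hence it suffices to argue that the $m\times d$ Rademacher-type matrix $\bm A^\ast\Pi_V$ is $\Omega(m)$-spread \whp, which is the quoted fact (an $m$-by-$d'$ Rademacher matrix is $\Omega(m)$-spread \whp, $d'=\Omega(d)$), and $\bm B$'s block only helps spreadness.

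Second, I would build the packing. Inside $\mathrm{rspan}(\bm A^\ast\Pi_V)\cap\Z^d$ I want $\ell=d^{\Omega(1)}$ vectors $\beta_1,\dots,\beta_\ell$ that are pairwise $\Omega(\sqrt d)$-separated in $\ell_2$ and have norm $O(\sqrt d)$; a Gilbert–Varshamov style argument on $\{0,1\}^{\Omega(d)}$-vectors pushed through the (full-rank, well-conditioned \whp) map $\bm A^\ast\Pi_V$ gives such a set with $\log\ell=\Omega(d)$, and integrality is preserved because the map has integer entries. Then each $\bm X\beta_i$ is an integer vector with exactly $\le m$ nonzero coordinates, each of magnitude at most $\poly(d)$ (crude bound from $\norm{\beta_i}_1\le O(d)$ and bounded entries). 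Third, I would bound $\KL{\bm X\beta_i+\bm\eta}{\bm X\beta_j+\bm\eta}$: since the noise coordinates are independent, this KL tensorizes over the $n$ coordinates, and on a coordinate where $\bm X\beta_i$ and $\bm X\beta_j$ agree the contribution is $0$, so only the $\le m$ coordinates in $\supp(\bm X\beta_i)\cup\supp(\bm X\beta_j)$ contribute. On each such coordinate the contribution is $\KL{\sgd{0,\lambda}}{\sgd{k,\lambda}}$ for an integer $|k|\le\poly(d)$, and by the low-shift-sensitivity estimate (\cref{lemma:facts-sym-geo-distn}) this is $O(|k|\cdot|\log\lambda|)$ — crucially independent of $\sigma$ by the scale-invariance $\KL{\sgd{0,\lambda}}{\sgd{k,\lambda}}=\KL{\rho\cdot\sgd{0,\lambda}}{\rho\cdot\sgd{k,\lambda}}$. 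Summing, $\max_{i,j}\KL{\cdots}\le m\cdot\poly(d)\cdot|\log\lambda|$; choosing $\lambda=\exp(-\poly(d))$ small enough (but still a valid $\alpha$-noise since $\alpha$ is the prescribed atom mass at $0$, which is unaffected by $\lambda$) makes this quantity $\le \tfrac13\log\ell=\Omega(d)$. Plugging into \eqref{eq:Fano-minimax-lower-bound-real} with separation $\delta=\Omega(\sqrt d)$ gives $\E\norm{\hat\beta(\bm y)-\bm\beta^\ast}_2^2\ge\Omega(d)$; finally, replacing $\beta_i$ by $\sigma\beta_i$ and $\bm\eta$ by $\sigma\bm\eta$ multiplies $\delta^2$ by $\sigma^2$ while leaving all the KL terms unchanged (scale invariance), so the bound becomes $\ge\Omega(\sigma^2 d)\ge\gamma$ for $\sigma$ large. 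Taking $\mD_X$ to be the law of $\bm X$ above, $\mD_\beta$ uniform on $\{\sigma\beta_1,\dots,\sigma\beta_\ell\}$, and $\mD_\eta=(\sigma\cdot\sgd{0,\lambda})^{\otimes n}$ gives all three promised distributions.

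The main obstacle I anticipate is making the interplay between integrality, the support size $m$, and the shift magnitude $|k|$ quantitatively consistent: the entries of $\bm X\beta_i$ can be as large as $\poly(d)$, so the per-coordinate KL is $\poly(d)\cdot|\log\lambda|$ and we have $m$ of them, meaning we need $|\log\lambda|$ to beat $m\cdot\poly(d)/\log\ell$; since $m=\Theta(\log d/\alpha^2)$ and $\log\ell=\Omega(d)$ this is comfortably satisfied by a sufficiently small $\lambda$, but one must check that shrinking $\lambda$ does not violate the noise constraint (it does not: $\alpha=\bbP(|\bm z|\le 1)$ is the mass of the atom at $0$, independent of $\lambda$ once $\lambda$ is small enough that the geometric tails contribute negligibly inside $[-1,1]$ — indeed with integer support the only point in $(-1,1)$ is $0$). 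A secondary technical point is enforcing $\transpose{\bm X}\bm X=n\Id$ exactly while keeping $\bm X\beta_i$ integer-valued and sparse; the cleanest route is to only require the identity condition after the argument (it is used, per the remark below \cref{thm:main-it-concise}, solely to equate prediction and parameter error), or to normalize columns by the shared \whp\ value of the operator norm and push the resulting $(1\pm o(1))$ factors into the $\Omega(\cdot)$ constants in $\delta$ and in the spreadness. Everything else is a routine application of Gilbert–Varshamov, Rademacher-matrix concentration, KL tensorization, and \cref{lemma:facts-sym-geo-distn}.
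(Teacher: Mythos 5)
Your plan runs into two concrete problems, both of which the paper's actual proof circumvents with a very different construction.

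First, the design matrix. You use the stacked-Rademacher construction $\bm X=\bigl[\begin{smallmatrix}\bm A^\ast\Pi_V\\ \bm B^\ast\Pi_{V^\perp}\end{smallmatrix}\bigr]$ with $\dim V=\Omega(d)$, and you want the resulting $n\times d$ matrix to be $\Omega(m)$-spread with $m=\Theta(\log d/\alpha^2)$. For this to hold you need the $m$-sparse vectors $\bm X\beta=(\bm A^\ast\Pi_V\beta,0)$, $\beta\in V$, to retain most of their norm after deleting $\Omega(m)$ of the first $m$ coordinates, which requires the column span of the $m\times\dim V$ block $\bm A^\ast|_V$ (inside $\R^m$) to itself be $\Omega(m)$-spread. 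But the well-spreadness theorem for sub-Gaussian matrices (\cref{thm:sub-Gaussian-matrix-WS}) needs $\dim V\lesssim m$; if instead $\dim V\geq m$, then $\bm A^\ast|_V$ is generically surjective onto $\R^m$, so some $\beta\in V$ has $\bm A^\ast\beta=e_1$ and the column span of $\bm X$ contains a $1$-sparse vector — $\bm X$ is not even $1$-spread. Since $m=\Theta(\log d/\alpha^2)<d$ whenever $\alpha\gtrsim\sqrt{\log d/d}$ (in particular for constant $\alpha$, which is precisely where \cref{lem:lower-bound-log{d}/alpha^2} adds value over \cref{lem:lower-bound-d/alpha}), your construction fails in exactly the regime the lemma is supposed to cover. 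The paper sidesteps this entirely: it builds $X_1$ by stacking $2k$ copies of a single $d\times d$ orthogonal matrix $Q$, so that every vector in the column span has each magnitude repeated $2k$ times, giving $(k,1/\sqrt2)$-spreadness irrespective of how $k$ compares to $d$; and it takes $\mB$ to be (scaled) columns of $Q$, so that $X(\beta-\beta')$ is $\{0,\sigma\}$-valued with support exactly $2k$.

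Second, the scale parameter $\lambda$. You claim that taking $\lambda=\exp(-\poly(d))$ makes the KL sum small, arguing that the per-coordinate KL is about $\poly(d)\cdot|\log\lambda|$ and we "need $|\log\lambda|$ to beat $m\cdot\poly(d)/\log\ell$." The dependence is the wrong way round: as $\lambda\downarrow 0$ the geometric tail density at $0$ (after shifting by $\Delta$) shrinks, so the atom's contribution $\alpha\log\frac{2\alpha}{(1-\alpha)\lambda(1-\lambda)^{\Delta-1}}\approx\alpha|\log\lambda|$ \emph{grows}; shrinking $\lambda$ hurts, not helps. The paper never takes $\lambda$ exponentially small: for \cref{lem:lower-bound-d/alpha} it uses $\lambda=2\alpha d^{-5}$ with shifts $\Delta\le d^4$ (giving per-coordinate KL $\le 8\alpha\log d$, \cref{lemma:facts-sym-geo-distn}(3)), and for \cref{lem:lower-bound-log{d}/alpha^2} it uses the much larger $\lambda=2\alpha$, which is viable \emph{because} the repeated-$Q$ construction forces every nonzero entry of $X(\beta-\beta')$ to be exactly $\sigma$, i.e., shift $\Delta=1$, so the per-coordinate KL is $\le 4\alpha^2$ (\cref{lemma:facts-sym-geo-distn}(2)). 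This $\alpha^2$, multiplied by the support size $2k=\Theta(\log d/\alpha^2)$, gives total KL $O(\log d)$ — matching $\log|\mB|=\log d$ with $|\mB|=d$ (you do not need a Gilbert–Varshamov packing; the $d$ columns of $Q$ suffice). Your route caps the per-coordinate KL at $\Theta(\alpha\log d)$ rather than $\Theta(\alpha^2)$, which is exactly the loss that keeps you at $d/\alpha$ spreadness (the \cref{lem:lower-bound-d/alpha} bound) rather than $\log d/\alpha^2$.
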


\cref{thm:main-it} follows directly from the above two lemmas.
\begin{proof}
  By \cref{lem:lower-bound-d/alpha} and \cref{lem:lower-bound-log{d}/alpha^2}.
\end{proof}

We introduce here the noise distribution (i.e. $D_\eta$) which will play a crucial role in our proof of \cref{lem:lower-bound-d/alpha} and \cref{lem:lower-bound-log{d}/alpha^2}.

\begin{definition}[Symmetric geometric distribution] \label[definition]{def:sym_geo_distn}
  The symmetric geometric distribution with location parameter $c\in\Z$ and scale parameter $\lambda\in(0,1)$, denoted by $\sgd{c, \lambda}$, is a discrete distribution supported on $\Z$. 
  Its probability mass function is defined as
  \begin{equation} \label{eq:sym_geo_distn}
    p(k) = 
    \begin{cases}
      \ap, & k=c, \\
      \frac{1-\ap}{2} \cdot \lambda(1-\lambda)^{|k|-1}, & k=c\pm1, c\pm2, c\pm3, \cdots ,
    \end{cases}
  \end{equation}
  where $\ap$ is the same $\ap$ in \cref{problem:oblivious-regression}. 
  Let $\mG(\lambda) = \mG(0,\lambda)$ by  default.
\end{definition}

We collect several useful facts about symmetric geometric distributions in the following lemma.

\begin{lemma} \label[lemma]{lemma:facts-sym-geo-distn}
  Let $\sgd{c,\lambda}$ be the symmetric geometric distribution with parameters $c$ and $\lambda$, as defined in \cref{def:sym_geo_distn}.
  \begin{enumerate}
    \item For any $\sigma>0$, $c\in\Z$, and $\lambda\in(0,1)$, we have
    \[ \KL{\sigma\cdot\mG(\lambda)}{\sigma\cdot\mG(c,\lambda)} =  \KL{\mG(\lambda)}{\mG(c,\lambda)}. \]
    
    \item Suppose $\ap\leq1/4$. Let $\lambda=2\ap$. Then,
    \[ \KL{\mG(\lambda)}{\mG(1,\lambda)} \leq 4\ap^2. \]
    
    \item Suppose $d\geq4$ and $\ap\leq1/2$.
    Let $\lambda=2\ap d^{-5}$. Then for any $\Delta\in[d^4]$, we have
    \[ \KL{\mG(\lambda)}{\mG(\Delta,\lambda)} \leq 8\ap\cdot\log{d}. \]
  \end{enumerate}
\end{lemma}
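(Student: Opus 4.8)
Let me work out the plan for proving the three claims.

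Part 1 (scale invariance): The key observation is that $\sigma \cdot \mG(\lambda)$ is a discrete distribution supported on $\sigma \cdot \mathbb{Z}$, and $\sigma \cdot \mG(c,\lambda)$ is supported on $\sigma(c + \mathbb{Z})$. The probability mass function values are preserved: $\Pr_{\sigma \cdot \mG(\lambda)}(\sigma k) = \Pr_{\mG(\lambda)}(k)$ for all $k \in \mathbb{Z}$, and similarly for $\mG(c,\lambda)$. Since the KL divergence is a sum over the support of $p(x) \log(p(x)/q(x))$, the change of variables $x = \sigma k$ is a bijection between supports that preserves both $p$ and $q$ values, so the sum is literally identical.

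Let me write out the computation of $\KL{\mG(\lambda)}{\mG(c,\lambda)}$ which is needed for parts 2 and 3. Write $p$ for the pmf of $\mG(\lambda)$ and $q$ for that of $\mG(c,\lambda)$. Then $\KL{p}{q} = \sum_k p(k) \log \frac{p(k)}{q(k)}$. I split $\mathbb{Z}$ into regions depending on where $k$, $k-c$ are zero or nonzero and where the sign of $|k|$ vs $|k-c|$ matters. On most of the support both $p(k)$ and $q(k)$ are of the form $\frac{1-\alpha}{2}\lambda(1-\lambda)^{|\cdot|-1}$ so the ratio is $(1-\lambda)^{|k|-|k-c|}$, contributing $\log(1-\lambda)\cdot(|k|-|k-c|)$. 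The anomalous points are $k=0$ (where $p(0)=\alpha$) and $k=c$ (where $q(c)=\alpha$), contributing bounded extra terms like $\log\frac{2\alpha}{(1-\alpha)\lambda(1-\lambda)^{|c|-1}}$ etc. The dominant behavior: $\E_{k \sim p}[|k| - |k-c|]$ is $O(|c|)$ in magnitude (shifting by $c$ changes the absolute value by at most $|c|$), and $|\log(1-\lambda)| \approx \lambda$ for small $\lambda$, giving a main term of order $\lambda|c|$; the anomalous points contribute roughly $\alpha \log(1/\lambda)$ plus $O(\alpha)$.

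Part 2 ($c=1$, $\lambda = 2\alpha$, $\alpha \le 1/4$): Here I just carefully bound each piece of the above sum. With $c=1$, $|k|-|k-1| \in \{-1,1\}$ for all $k$, and equals $1$ iff $k \ge 1$. The anomalous terms at $k=0$ and $k=1$ need the inequality $\lambda = 2\alpha$ chosen precisely so that $\alpha \log \frac{2\alpha}{(1-\alpha)\lambda} = \alpha \log \frac{1}{1-\alpha} = O(\alpha^2)$ and the neighboring term similarly cancels. I expect the bookkeeping to show everything collapses to $\le 4\alpha^2$; the choice $\lambda = 2\alpha$ is what makes the "$\alpha \log(1/\lambda)$" type term disappear.

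Part 3 ($c = \Delta \in [d^4]$, $\lambda = 2\alpha d^{-5}$, $d \ge 4$, $\alpha \le 1/2$): Same skeleton. The main term is $|\log(1-\lambda)| \cdot \E_p|k|-|k-\Delta| = O(\lambda \Delta) = O(2\alpha d^{-5} \cdot d^4) = O(\alpha/d)$, negligible. The anomalous point $k=0$ contributes $\alpha \log \frac{p(0)}{q(0)} = \alpha \log \frac{\alpha}{\frac{1-\alpha}{2}\lambda(1-\lambda)^{\Delta - 1}} \approx \alpha \log \frac{2}{\lambda} + \alpha\lambda\Delta = O(\alpha \log(d^5/\alpha)) + O(\alpha/d) = O(\alpha \log d)$ using $\alpha \le 1/2$ (and that $\log(1/\alpha)$ is dominated — wait, if $\alpha$ is tiny $\log(1/\alpha)$ could be large). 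I need to be careful: actually $\log(2/\lambda) = \log(d^5/\alpha) = 5\log d + \log(1/\alpha)$; the $\log(1/\alpha)$ term times $\alpha$ is $O(1)$ at worst but we want $O(\alpha \log d)$. Hmm — I suspect the point $k=\Delta$ contributes a compensating $-\alpha\log(1/\alpha)$-ish term: near $k = \Delta$, $p(k) = \frac{1-\alpha}{2}\lambda(1-\lambda)^{|k|-1}$ is small while $q(\Delta) = \alpha$, so the term $p(\Delta)\log\frac{p(\Delta)}{q(\Delta)}$ is of order $\lambda(1-\lambda)^{\Delta-1}\log\lambda$, tiny. So the uncancelled $\alpha \log(1/\alpha)$ must actually be fine because — let me reconsider — the constraint is just $\alpha \le 1/2$, $d \ge 4$, and we want $\le 8\alpha \log d$. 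If $\alpha$ is exponentially small in $d$... I think the resolution is that the $k=0$ anomalous term is $\alpha \log \frac{p(0)}{q(0)}$ and $q(0) = \frac{1-\alpha}{2}\lambda(1-\lambda)^{\Delta-1} \ge \frac{1}{4}\lambda(1-\lambda)^{d^4}$, and $\log\frac{\alpha}{q(0)} \le \log\frac{1}{q(0)} = \log\frac{4}{\lambda(1-\lambda)^{\Delta-1}} = \log(4) + \log(1/\lambda) + (\Delta-1)\log\frac{1}{1-\lambda} \le O(1) + \log(d^5/(2\alpha)) + \Delta \cdot 2\lambda$. The $\log(1/\alpha)$ is still there. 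So I'd need $\alpha \log(1/\alpha) \le O(\alpha \log d)$, which holds as long as $\alpha \ge 1/\poly(d)$ — and indeed in the intended application $\alpha \ge d^{-1/2}$ or so, but the lemma statement only says $\alpha \le 1/2$.

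\textbf{Expected main obstacle:} Getting the constants exactly right in parts 2 and 3 — in particular verifying that the specific choices $\lambda = 2\alpha$ and $\lambda = 2\alpha d^{-5}$ are what make the anomalous boundary terms at $k = 0$ and $k = c$ collapse to the claimed bounds, and checking whether the $\alpha\log(1/\alpha)$ contribution is genuinely absorbed (it may require noting that $\sup_{\alpha \in (0,1/2]} \alpha\log(1/\alpha) = O(1) \le O(\alpha\log d)$ for $d \ge 4$ only when $\alpha$ is not too small, or more likely the term telescopes against the $k=c$ region and I'm overcounting). I would resolve this by writing the full KL sum exactly, grouping $k$ and $k - c$ terms so the $(1-\lambda)$ powers telescope cleanly across the shift, and then bounding the finitely many leftover "edge" terms by hand. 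The first step (scale invariance) is essentially immediate and I'd dispatch it in two lines.
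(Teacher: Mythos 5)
Your overall strategy --- writing out the KL sum exactly, splitting it into a ``regular'' part where the shifted $(1-\lambda)$ powers telescope and ``anomalous'' terms at $k\in\{0,\Delta\}$, and bounding the two pieces separately --- is exactly what the paper does, and your two-line argument for part~1 (bijective change of variables preserving both pmfs) is correct and identical to the paper's.

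The obstacle you flag in part~3 --- the $\alpha\log(1/\alpha)$ contribution you suspect must telescope away or requires a hidden lower bound on $\alpha$ --- is not real; it comes from a bookkeeping slip in your sketch. You wrote $\alpha\log\frac{p(0)}{q(0)}\approx\alpha\log\frac{2}{\lambda}$, silently dropping the numerator $\alpha=p(0)$ \emph{before} substituting $\lambda=2\alpha d^{-5}$. Keeping it, the $\alpha$'s cancel:
\[
\frac{p(0)}{q(0)}
  \;=\;
  \frac{2\alpha}{(1-\alpha)\lambda(1-\lambda)^{\Delta-1}}
  \;=\;
  \frac{d^5}{(1-\alpha)(1-\lambda)^{\Delta-1}}\,,
\]
so
\[
\log\frac{p(0)}{q(0)}
  \;\le\; 5\log d + \log\tfrac{1}{1-\alpha} + (\Delta-1)\log\tfrac{1}{1-\lambda}
  \;\le\; 5\log d + 2\alpha + 2\lambda\Delta\,,
\]
and multiplying by $\alpha$ (and noting $\lambda\Delta\le 2\alpha d^{-1}$) gives at most $5\alpha\log d + 2\alpha^2 + 4\alpha^2 d^{-1}\le 8\alpha\log d$ for $d\ge4$, $\alpha\le1/2$. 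The point $k=\Delta$ contributes a negligible or negative amount, exactly as you anticipated. This is the very phenomenon you correctly identified in part~2: $\lambda$ is chosen proportional to $\alpha$ \emph{precisely so that} $p(0)/q(0)$ loses its $\alpha$-dependence. There is no residual $\alpha\log(1/\alpha)$ term and no lower bound on $\alpha$ is needed. Once this is corrected, the rest of your plan for parts~2 and~3 carries through and matches the paper's computation.
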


\begin{proof}
  Given $\lambda\in(0,1)$ and $\Delta\in\Z$, let $p$ and $q$ be the probability mass functions of $\mG(\lambda)$ and $\mG(\Delta,\lambda)$ respectively. 
  By definition,
  \[ 
    \KL{\mG(\lambda)}{\mG(\Delta,\lambda)} 
    = \sum_{k=-\infty}^{\infty} p(k)\log\frac{p(k)}{q(k)}
    = \underbrace{\sum_{k\neq0,\Delta} p(k)\log\frac{p(k)}{q(k)}}_{=:D(\lambda,\Delta)}  + \underbrace{\sum_{k=0,\Delta} p(k)\log\frac{p(k)}{q(k)}}_{=:D'(\lambda,\Delta)}.
  \]
  After some direct computations, we have
  \begin{align*}
    D(\lambda,\Delta) &= \frac{1-\ap}{2} \cdot \frac{1}{\lambda} \cdot \log\frac{1}{1-\lambda} \cdot \sB{2\lambda\Delta + 2(1-\lambda)^\Delta - 2 + \lambda^2\Delta(1-\lambda)^{\Delta-1}}, \he \\
    D'(\lambda,\Delta) &= \ap \cdot \bB{1 - \frac{(1-\ap)\lambda(1-\lambda)^{\Delta-1}}{2\ap}} \cdot \log\frac{2\ap}{(1-\ap)\lambda(1-\lambda)^{\Delta-1}}.  \numberthis \label{eq:compute-KL-div}
  \end{align*}
  We remark that both $D(\lambda,\Delta)$ and $D'(\lambda,\Delta)$ can be viewed as the Kullback-Leibler divergence between two probabilistic distributions up to a positive scaling factor.
  Thus, $D(\lambda,\Delta)$ and $D'(\lambda,\Delta)$ are always non-negative regardless of $\lambda$ and $\Delta$.
  
  \begin{enumerate}
  \item By definition.
  \item 
    Substituting $\lambda$ by $2\ap$ and $\Delta$ by 1 in \cref{eq:compute-KL-div}, we have
    \[ \KL{\mG(2\ap)}{\mG(1,2\ap)} = \ap^2 \cdot \log\frac{1}{1-2\ap}. \]
    Using the assumption $\ap\leq1/4$ and the fact $\log\frac{1}{1-x}\leq2x$ for $0\leq x \leq1/2$, we have
    \[ \KL{\mG(\lambda)}{\mG(1,\lambda)} \leq 4\ap^2. \]
  \item
    Fix an arbitrary $\Delta\in[d^4]$.
    By the assumption $d\geq4$ and $\ap\leq1/2$, one has $\lambda\Delta\leq2\ap d^{-1} \leq 1/4$ and hence
    \[
      (1-\lambda)^\Delta
      = \sum_{i=0}^{\Delta}{\Delta \choose i }(-\lambda)^i
      \leq 1 - \lambda\Delta + (\lambda\Delta)^2.
    \]
    Then, it is not difficult to show
    \begin{align*}
      D(\lambda,\Delta) &\leq \lambda^2\Delta\bB{2\Delta+\frac{1}{1-\lambda}} \leq 4\lambda^2\Delta^2 \leq 4\ap^2d^{-2}, \he \\
      D'(\lambda,\Delta) &\leq \ap\bB{2\ap + 5\log{d} + 2\lambda\Delta} \leq 7\ap\cdot\log{d}.
    \end{align*}
    Therefore, we have
    \[ \KL{\mG(\lambda)}{\mG(\Delta,\lambda)} 
    = D(\lambda,\Delta)+D'(\lambda,\Delta)
    \leq 8\ap\cdot\log{d}. \]
  \end{enumerate}
\end{proof}

\subsection{Proof of \texorpdfstring{\cref{lem:lower-bound-d/alpha}}{Lemma 7}} \label{sec:proof-of-d/alpha}
To prove \cref{lem:lower-bound-d/alpha}, we apply Fano's method as introduced in \cref{sec:fano-method}.
We first construct an $\LB{\frac{d}{\ap}}$-spread design matrix $X\in\mat{n}{d}$ and a set $\mB\subset\R^d$ of $\LB{d^d}$ parameter vectors.
We set $m=d/(50\ap)$ throughout \cref{sec:proof-of-d/alpha}.

\paragraph{Design matrix}
Let $\bm R$ be an $m\times d$ Rademacher matrix.
By \cref{thm:sub-Gaussian-matrix-WS}, there exists an absolute constant $c\in(0,1)$ such that $\bm R$ is $\LB{\frac{d}{\ap}}$-spread with high probability for $\ap\leq c$.
Suppose $\ap\leq c$.
Thus, ``most" $m\times d$ $\Set{\pm1}$-matrices are $\LB{\frac{d}{\ap}}$-spread.
Let $Y$ be such a matrix, i.e. $Y\in\Set{\pm1}^{m\times d}$ and $Y$ is $\LB{\frac{d}{\ap}}$-spread.

Let $X_1$ be an arbitrary orthonormal basis matrix of subspace $\col{Y}$.
Then scale $X_1\in\mat{m}{d}$ properly such that $X_1^\T X_1=n\cdot\Id$.
Let $X^\T=\begin{bmatrix} X_1^\T & X_2^\T \end{bmatrix}$ where $X_2$ is a zero matrix.
Then the design matrix $X$ is $\LB{\frac{d}{\ap}}$-spread and satisfies $X^\T X=n\cdot\Id$.

\paragraph{Hard-to-distinguish parameter vectors}
The set of parameter vectors is constructed by reverse engineering.
Let $\ell_{X_1}: \R^d \to \R^{m}$ be a linear mapping defined by $\ell_{X_1}(v):=X_1v$.
We first construct a set $\mU\subset\col{Y}$ with several desired properties and then let $\mB$ be a scaled preimage of $\mU$ under the injective linear mapping $\ell_{X_1}$.
Let $\mU = \Set{Yv ~|~ v\in[d]^d}$. Note that for any $u\in \mU$, we have $u\in\Z^{m}$ and $\Norm{u}_{\infty} \leq d^2$.
Choose the set of parameter vectors to be
\begin{equation} \label{eq:def-B-d/ap}
  \mB = \sigma\cdot\ell_{X_1}^{-1}(\mU) = \sigma\cdot\Set{X_1^{-1}u ~|~ u\in \mU},
\end{equation}
where $\sigma>0$ is a scaling factor. Clearly, $\sigma$ controls the separateness of set $\mB$.
Then for any two distinct vectors $\bt,\bt'\in \mB$, we have
\begin{equation} \label{eq:property-B-d/ap}
  X(\bt-\bt') \in \sigma\cdot\Set{-2d^2,-2d^2+1,...,2d^2}^{m} \times \{0\}^{n-m}.
\end{equation}

We remark that, although the design matrix $X$ we constructed above is rather sparse, it is not necessarily this case and we can easily make $X$ non-sparse via the following trick.
Let $R\in\mat{d}{d}$ be a dense orthogonal matrix, e.g. a uniformly random one.
Now Let $X'=XR$ be the design matrix and $\mB'=\Set{R^\top\bt:\bt\in \mB}$ be the set of parameter vectors.
Clearly, the spreadness of $X'$ is identical to the spreadness of $X$, since $\col{X'}=\col{X}$. Also, $(X')^\top(X') = n\cdot\Id$ and 
\cref{eq:property-B-d/ap} is preserved as well.

\paragraph{Putting things together}
Now we are ready to prove \cref{lem:lower-bound-d/alpha}.
\begin{proof}
  Consider the following hypothesis testing problem.
  Let $D_\beta$ be the uniform distribution over set $\mB$ in \cref{eq:def-B-d/ap} and $\bm\beta^* \sim D_\beta$. 
  Let $X$ be the $\LB{\frac{d}{\ap}}$-spread design matrix as constructed above.
  Set $\lambda=2\ap d^{-5}$ and use the same $\sigma$ in \cref{eq:def-B-d/ap}.
  Let the noise vector be $\bm{\eta} = \bB{\bm\eta_i}_{i=1}^{n}$ where $\bm\eta_1, ..., \bm\eta_n \sim \sigma\cdot\mG(\lambda)$ are independent symmetric geometric random variables as defined in \cref{def:sym_geo_distn}.
  Observing $\bm y = X\bm\bt^* + \bm\eta$, the goal is to distinguish $d^d$ hypotheses $\Set{\bm y=X\bt+\bm\eta : \bt\in\mB}$.
  Now we apply Fano's method by reducing this hypothesis testing problem to oblivious linear regression.

  Given two distinct vectors $\bt,\bt'\in\mB$, let $\Delta_i := \sigma^{-1}\abs{(X\bt)_i-(X\bt')_i}$ for $i\in[n]$. 
  By \cref{eq:property-B-d/ap}, we have $\Delta_i\in\Set{-2d^2,-2d^2+1,...,2d^2}$.
  By independence of random variables $\Set{\bm\eta_i}_{i=1}^{n}$ and the chain rule of Kullback-Leibler divergence, we have
  \begin{align*}
    \KL{X\bt+\bm\eta}{X\bt'+\bm\eta} 
    &= \sum_{i=1}^{n} \KL{(X\bt)_i+\bm\eta_i}{(X\bt')_i+\bm\eta_i} \\
    &= \sum_{i=1}^{m} \KL{\sigma\cdot\mG(\lambda)}{\sigma\cdot\mG(\Delta_i,\lambda)} \\
    &= \sum_{i=1}^{m} \KL{\mG(\lambda)}{\mG(\Delta_i,\lambda)} \\
    &\leq m \cdot 8\ap\log{d}
      = 0.16d\log{d}, \numberthis\label{eq:KL-div-d/ap}
  \end{align*}
  where the second equality uses \cref{eq:property-B-d/ap}, the third equality and the inequality is due to \cref{lemma:facts-sym-geo-distn}.  
  
  Let $\hat{\beta}\,:\R^n\rightarrow\R^d$ be an arbitrary estimator for oblivious linear regression and $\gamma>0$ be an arbitrary given error bound.
  Note $\mB$ is $\sigma\sqrt{{m}/{n}}$-separated and $|\mB|=d^d$.
  Combining \cref{eq:Fano-minimax-lower-bound-real} with \cref{eq:KL-div-d/ap}, and setting $\sigma^2=8\gamma n/m=400\gamma n\ap/d$, we have 
  \begin{equation} \label{eq:d/ap}
    \E \Norm{\hat{\bt}(\bm y)-\bm\bt^*}_2^2 \geq \gamma,
  \end{equation}
  for any $d\geq3$.
\end{proof}

\paragraph{Some remarks}
To show any estimator is inconsistent, it is enough to set $\sigma^2=n\ap/d$ in the above proof.
In this case\footnote{Note that $\sigma^2$ is proportional to the variance of the noise distribution. Setting $\sigma^2=n\ap/d$, then the signal-to-noise ratio does not grow with $n$, which provides one evidence why consistent estimation is impossible.}, the set $\mB$ is $\Omega(1)$-separated and the error lower bound is $\Omega(1)$, which does not vanish as $n$ goes to infinity.
Moreover, since the lower bound \cref{eq:d/ap} holds for any $\gamma>0$, we have actually showed that no estimator can obtain bounded estimation error.

\cref{eq:property-B-d/ap} is crucial in the above proof.
In fact, to prove \cref{lem:lower-bound-d/alpha}, it is enough to construct an $\LB{\frac{d}{\ap}}$-spread design matrix $X\in\mat{n}{d}$ and a set $\mB\subset\R^d$ of parameter vectors such that, for any $\bt,\bt'\in\mB$, one has (i) $X\bt\in\Z^n$, (ii) $\Norm{X(\bt-\bt')}_\infty \leq \poly(d)$, and (iii) $\Norm{X(\bt-\bt')}_0 \lesssim {\log|\mB|}$.

\subsection{Proof of \texorpdfstring{\cref{lem:lower-bound-log{d}/alpha^2}}{Lemma 8}} \label{sec:proof-of-log{d}/alpha^2}
To prove \cref{lem:lower-bound-log{d}/alpha^2}, we apply Fano's method as introduced in \cref{sec:fano-method}.
We first construct an $\LB{\frac{\log{d}}{\ap^2}}$-spread design matrix $X\in\mat{n}{d}$ and a set $\mB\subset\R^d$ of $\LB{d}$ parameter vectors.
We set $k = \log(d)/(200\ap^2)$ throughout \cref{sec:proof-of-log{d}/alpha^2}.

\paragraph{Design matrix}
Pick a random orthogonal matrix $Q\in\R^{d\times d}$.
Let $Y^\T = \begin{bmatrix} Q^\T & Q^\T \end{bmatrix}$.
It is straightforward to see $Y$ is 1-spread.
Let $X_1^\T = \begin{bmatrix} Y_1^\T & \cdots & Y_k^\T \end{bmatrix}$ where $Y_i=Y$ for $i\in[k]$. 
Then $X_1$ is $\LB{\frac{\log{d}}{\ap^2}}$-spread.
Then scale $X_1$ properly such that $X_1^\T X_1=n\cdot\Id$. Obviously, scaling a matrix by a nonzero factor does not change its spreadness.
Let $X^\T=\begin{bmatrix} X_1^\T & X_2^\T \end{bmatrix}$ where $X_2$ is a zero matrix. Note this requires $n \geq k\cdot2d = d\log(d)/(100\ap^2)$.
Then the design matrix $X$ is $\LB{\frac{\log{d}}{\ap^2}}$-spread and satisfies $X^\T X=n\cdot\Id$.

\paragraph{Hard-to-distinguish parameter vectors}
Let $\Set{q_1,...q_d}\subset\R^d$ be the columns of $Q$.
Let
\begin{equation} \label{eq:def-B-log{d}/ap^2}
  \mB = \sigma \sqrt{\frac{k}{n}} \cdot \Set{q_1, ..., q_d}
\end{equation}
be the set of parameter vectors to be distinguish where $\sigma>0$ is a scaling factor. 
The $\sqrt{k/n}$ term in \cref{eq:def-B-log{d}/ap^2} is just to make the subsequent notations cleaner.
It is worth noting that for each $\bt \in \mB$, $X\bt$ is the ``least-spread" vector in $\col{X}$.
For any two distinct vectors $\bt,\bt'\in \mB$, we have
\begin{equation} \label{eq:property-B-log{d}/ap^2}
  X(\bt-\bt') \in \Set{0,\sigma}^n, \quad 
  \Norm{X(\bt-\bt')}_0 = 2k.
\end{equation}
In other words, $X\bt$ and $X\bt'$ differ on exactly $2k$ coordinates and all the differences are equal to $\sigma$.

\paragraph{Putting things together}
Now we are ready to prove \cref{lem:lower-bound-log{d}/alpha^2}.
\begin{proof}
  Consider the following hypothesis testing problem.
  Let $D_\beta$ be the uniform distribution over set $\mB$ in \cref{eq:def-B-log{d}/ap^2} and $\bm\beta^* \sim D_\beta$. 
  Let $X$ be the $\LB{\frac{\log{d}}{\ap^2}}$-spread design matrix as constructed above.
  Let the noise vector be $\bm{\eta} = \bB{\bm\eta_i}_{i=1}^{n}$ where $\bm\eta_1, ..., \bm\eta_n \sim \sigma\cdot\mG(2\ap)$ are independent symmetric geometric random variables as defined in \cref{def:sym_geo_distn}.
  Here the scaling factor $\sigma>0$ is the same $\sigma$ in  \cref{eq:def-B-log{d}/ap^2}.
  Observing $\bm y = X\bm\bt^* + \bm\eta$, the goal is to distinguish $d$ hypotheses $\Set{\bm y=X\bt+\bm\eta : \bt\in\mB}$.
  Now we apply Fano's method by reducing this hypothesis testing problem to oblivious linear regression.

  For any two distinct vectors $\bt,\bt'\in \mB$, by independence of random variables $\Set{\bm\eta_i}_{i=1}^{n}$ and the chain rule of Kullback-Leibler divergence, we have
  \begin{align*}
    \KL{X\bt+\bm\eta}{X\bt'+\bm\eta} 
    &= \sum_{i=1}^{n} \KL{(X\bt)_i+\bm\eta_i}{(X\bt')_i+\bm\eta_i} \\
    &= 2k \cdot \KL{\sigma\cdot\mG(2\ap)}{\sigma\cdot\mG(1,2\ap)} \\
    &= 2k \cdot \KL{\mG(2\ap)}{\mG(1,2\ap)} \\
    &\leq 2k \cdot 4\ap^2 
      = 0.04\log{d}, \numberthis \label{eq:KL-div-log{d}/ap^2}
  \end{align*}
  where the second equality uses \cref{eq:property-B-log{d}/ap^2}, the third equality and the inequality is due to \cref{lemma:facts-sym-geo-distn}.
  
  Let $\hat{\beta}\,:\R^n\rightarrow\R^d$ be an arbitrary estimator for oblivious linear regression and $\gamma>0$ be an arbitrary given error bound.
  Note $\mB$ is $\sigma\sqrt{{2k}/{n}}$-separated and $|\mB|=d$.
  Combining \cref{eq:Fano-minimax-lower-bound-real} with \cref{eq:KL-div-log{d}/ap^2}, and setting $\sigma^2=4\gamma n/k=800\gamma n\ap^2/\log d$, we have 
  \begin{equation} \label{eq:log{d}/ap^2}
    \E \Norm{\hat{\bt}(\bm y)-\bm\bt^*}_2^2 \geq \gamma,
  \end{equation}
  for any $d\geq5$.
\end{proof}

\paragraph{Some remarks}
To show inconsistency, it is enough to set $\sigma^2=n\ap^2/\log d$ in the above proof.
The error lower bound \cref{eq:log{d}/ap^2} can get arbitrarily large.
To prove \cref{lem:lower-bound-log{d}/alpha^2}, it suffices to construct an $\LB{\frac{\log{d}}{\ap^2}}$-spread design matrix $X\in\mat{n}{d}$ and a set $\mB\subset\R^d$ of parameter vectors such that, for any $\bt,\bt'\in\mB$, one has (i) $X\bt\in\Z^n$, (ii) $\Norm{X(\bt-\bt')}_\infty \leq O(1)$, and (iii) $\Norm{X(\bt-\bt')}_0 \lesssim {\log|\mB|}$.

\section{Computational aspects of certifying well-spreadness} \label{section:cc-bounds}

In this section we prove \cref{theorem:main-algorithm} and \cref{theorem:main-lower-bound}.
Concretly, in \cref{section:certifying-well-spreadness}, we provide an efficient algorithm, based on known sum-of-squares algorithms, that can certify an $n\times d$ Gaussian matrices is $\Omega(n)$-spread when $n\gtrsim d^2$.
On the other hand, in \cref{section:avg-case-hardness-well-spreadness}, we provide strong evidence, based on the low-degree polynomial method, which suggests no polynomial-time algorithm is able to certify an $n\times d$ Gaussian matrix is $\Omega(n)$-spread when $n\ll d^2$.

\subsection{Algorithms for certifiying well-spreadness} \label{section:certifying-well-spreadness}

We prove \cref{thm:alg-certify-WS} that shows we can efficiently certify an $n \times d$ Gaussian matrix is $\LB{n}$-spread with high probability whenever $n \gtrsim d^2$.
We consider the regime where $d$ is growing.

\begin{theorem} \label{thm:alg-certify-WS}
  Let $\delta\in(0,1)$ and $C>0$ be arbitrary constants.
  Let $\bm A\sim\Gau{0}{1}^{n\times d}$ with $n \geq Cd^2$.
  There exists a polynomial-time algorithm based on sum-of-squares relaxation and a constant $C'=C'(C)$ such that 
  \begin{enumerate}
    \item if $\bm A$ is not $\bB{(\delta/C')^4n,\delta}$-spread, the algotithm outputs NO;
    \item \whp, $\bm A$ is $\bB{(\delta/C')^4n,\delta}$-spread and the algotithm outputs YES.
  \end{enumerate}
\end{theorem}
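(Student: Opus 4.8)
The plan is to reduce certifying $\bm A$'s well-spreadness to certifying an upper bound on its $\ell_2$-vs-$\ell_4$ distortion $\Delta_{2,4}(\col{\bm A})$, and then to certify such a bound by combining an exact spectral estimate with a degree-$4$ sum-of-squares relaxation. By \cref{prop:low-distortion-implies-well-spreadness}(1) with $p=2$ and $q=4$, any explicit bound $\Delta_{2,4}(\col{\bm A})\le\Delta$ automatically gives that $\bm A$ is $\bB{(\delta/\Delta)^4 n,\delta}$-spread; since spreadness is monotone in the first parameter and $\Delta\ge\Delta_{2,4}(\col{\bm A})\ge1>\delta$, it follows that whenever $\Delta\le C'$ the matrix $\bm A$ is $\bB{(\delta/C')^4 n,\delta}$-spread. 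So it suffices to give a polynomial-time procedure that always outputs a \emph{valid} upper bound $\Delta$ on $\Delta_{2,4}(\col{\bm A})$ and to show that, \whp, this output is at most a constant $C'=C'(C)$ when $n\ge Cd^2$; the algorithm then declares YES iff its output satisfies $\Delta\le C'$ (equivalently, iff the certified spreadness $(\delta/\Delta)^4 n$ meets the target $(\delta/C')^4 n$), and NO otherwise.

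To produce the bound, parametrize $\col{\bm A}=\Set{\bm A u : u\in\R^d}$, so that $\Delta_{2,4}(\bm A u)^4=n\cdot\Norm{\bm A u}_4^4/\Norm{\bm A u}_2^4$ for $u\ne0$, and bound the two factors separately. For the denominator, compute $\lambda_{\min}(\transpose{\bm A}\bm A)$ --- an exact eigenvalue computation on a $d\times d$ matrix --- so that $\Norm{\bm A u}_2^2\ge\lambda_{\min}(\transpose{\bm A}\bm A)\cdot\Norm{u}_2^2$; by standard non-asymptotic bounds for Gaussian matrices, $\lambda_{\min}(\transpose{\bm A}\bm A)\ge(1-\sqrt{d/n}-o(1))^2 n\ge\kappa n$ \whp for some constant $\kappa=\kappa(C)>0$. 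For the numerator, $\Norm{\bm A u}_4^4=\sum_{i=1}^n\iprod{\bm a_i,u}^4$ is a homogeneous degree-$4$ polynomial in $u$, so we solve the degree-$4$ sum-of-squares relaxation of $\max\Set{\sum_{i=1}^n\iprod{\bm a_i,u}^4 : \Norm{u}_2^2=1}$ --- an SDP of size $d^{O(1)}$ solvable in $\poly(n,d)$ time --- whose optimum $\tau$ satisfies $\Norm{\bm A u}_4^4\le\tau\Norm{u}_2^4$ for all $u$ by homogeneity. The algorithm outputs $\Delta:=\bB{\tau n/\lambda_{\min}(\transpose{\bm A}\bm A)^2}^{1/4}$, which by construction dominates $\Delta_{2,4}(\col{\bm A})$.

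Soundness (item 1) is then immediate: a sum-of-squares relaxation only ever certifies true upper bounds and the eigenvalue computation is exact, so whenever the algorithm answers YES we genuinely have $\Delta_{2,4}(\col{\bm A})\le\Delta\le C'$, and \cref{prop:low-distortion-implies-well-spreadness}(1) (taking $m=(\delta/C')^4 n$) forces $\bm A$ to be $\bB{(\delta/C')^4 n,\delta}$-spread; contrapositively, if $\bm A$ is not $\bB{(\delta/C')^4 n,\delta}$-spread the algorithm must answer NO. For completeness (item 2) it remains to show that, \whp, $\tau\le C_0 n$ for a universal constant $C_0$: combined with $\lambda_{\min}(\transpose{\bm A}\bm A)\ge\kappa n$ this gives $\Delta\le(C_0/\kappa^2)^{1/4}=:C'$, so the algorithm answers YES, and the same chain of inequalities shows $\bm A$ is $\bB{(\delta/C')^4 n,\delta}$-spread. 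Fixing $C'$ this way determines the constant in the theorem.

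The crux --- and the only place the quadratic sample size $n\gtrsim d^2$ is genuinely used --- is therefore the claim that, \whp, the degree-$4$ sum-of-squares value of $\max_{\Norm{u}_2=1}\sum_i\iprod{\bm a_i,u}^4$ is $O(n)$. This is a known fact about random Gaussian matrices, and the certificate is spectral: lifting to $\R^{d^2}$ one has $\sum_i\iprod{\bm a_i,u}^4=\iprod{u^{\otimes2},\,M\,u^{\otimes2}}$ with $M:=\sum_{i=1}^n(\bm a_i\otimes\bm a_i)(\bm a_i\otimes\bm a_i)^{\transpose}$, and $\E M=n\bB{\mathrm{vec}(\Id)\mathrm{vec}(\Id)^{\transpose}+\Id_{d^2}+K}$ where $K$ is the swap operator; each summand of $M$ has operator norm $\sim\Norm{\bm a_i}_2^4\sim d^2$, so by matrix concentration $\Normop{M-\E M}=O(n)$ once $n\gtrsim d^2$ (with lower-order factors absorbed into $C$). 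Hence $M\preceq n\,\mathrm{vec}(\Id)\mathrm{vec}(\Id)^{\transpose}+O(n)\Id_{d^2}$, and restricting this PSD inequality to rank-one tensors $u^{\otimes2}$ while absorbing the rank-one term into the objective (legitimate on the sphere since $\iprod{u^{\otimes2},\mathrm{vec}(\Id)}=\Norm{u}_2^2=1$) produces a degree-$4$ sum-of-squares proof that $\sum_i\iprod{\bm a_i,u}^4\le O(n)$ on $\Norm{u}_2=1$, i.e.\ $\tau=O(n)$. I expect this matrix-concentration step --- and, precisely, pinning the threshold at $n\gtrsim d^2$, which matches the low-degree lower bound of \cref{theorem:main-lower-bound} --- to be the main obstacle; everything else is bookkeeping around \cref{prop:low-distortion-implies-well-spreadness}. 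Alternatively, one may invoke an existing sum-of-squares certification result for the $\ell_4$-vs-$\ell_2$ ratio of Gaussian matrices in place of this step.
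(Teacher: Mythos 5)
Your proposal is correct and follows essentially the same route as the paper: bound the $\ell_2$-vs-$\ell_4$ distortion by combining a minimum-singular-value estimate with a degree-$4$ sum-of-squares certification of the $2$-to-$4$ norm, then invoke \cref{prop:low-distortion-implies-well-spreadness} to convert low certified distortion into certified well-spreadness. The only difference is that the paper cites \cref{thm:alg-certify-2-to-4-norm} (Barak et al.) as a black box, whereas you also sketch its proof via matrix concentration of $\sum_i(\bm a_i\otimes\bm a_i)(\bm a_i\otimes\bm a_i)^{\transpose}$ --- a welcome extra but not a different approach.
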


\cref{theorem:main-algorithm} is a direct application of \cref{thm:alg-certify-WS} to oblivious linear regression with Gaussian design.

To prove \cref{thm:alg-certify-WS}, we make use of the following result which shows that, \whp, the $2$-to-$4$ norm\footnote{The $p$-to-$q$ norm of a matrix $X$ is defined by $\Norm{X}_{p\to q} := \max_{u\neq0} \Norm{Xu}_q / \Norm{u}_p$.} of an $n \times d$ Gaussian matrix can be efficiently upper bounded by $\UB{n^{1/4}}$, given $n\gtrsim d^2$.

\begin{theorem}[\cite{barak2012hypercontractivity}, Theorem 7.1] \label{thm:alg-certify-2-to-4-norm}
  Let $\bm A\sim\Gau{0}{1}^{n\times d}$.
  There exists a polynomial-time algorithm based on sum-of-squares relaxation that outputs an upper bound $\bm{\mathfrak{U}}$ of the $2$-to-$4$ norm of $\bm A$, i.e. $\max_{\Norm{u}_2=1} \Norm{\bm Au}_4$, which satisfies
  \[ 
    \bm{\mathfrak{U}} \leq n^{1/4} \bB{3 + c\cdot\max\bB{\frac{d}{\sqrt{n}}, \frac{d^2}{n}}}^{1/4} 
  \]
  \whp.
  Here, $c>0$ is an absolute constant.
\end{theorem}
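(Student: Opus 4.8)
The algorithm is the degree-$4$ sum-of-squares relaxation of the polynomial optimization problem $\max_{\Norm{u}_2=1}\Norm{\bm A u}_4^4=\max_{\Norm{u}_2=1}\sum_{i=1}^n\iprod{A_i,u}^4$, where $A_1,\dots,A_n\in\R^d$ are the rows of $\bm A$; this is a semidefinite program of size $d^{O(1)}$, solvable in polynomial time, and we let $\bm{\mathfrak U}$ be the fourth root of its optimal value. Since the SoS value always upper bounds the true maximum over the sphere, $\bm{\mathfrak U}\ge\Norm{\bm A}_{2\to4}$ holds deterministically, so the only content is the high-probability upper bound.

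To prove that bound I would exhibit a small dual certificate, phrased at the pseudo-expectation level. Write $M:=\sum_{i=1}^n(A_i\tensor A_i)(A_i\tensor A_i)^\top=\sum_i\dyad{\mathrm{vec}(A_iA_i^\top)}\in\R^{d^2\times d^2}$; then the objective equals $\iprod{M,Q}$ for $Q=\pE[(u\tensor u)(u\tensor u)^\top]\succeq0$, and the SoS value is bounded by $\max\Set{\iprod{M,Q}: Q\succeq0,\ Q\text{ supported on the symmetric subspace of }\R^d\tensor\R^d,\ \Tr Q=\iprod{\mathrm{vec}(\Id),Q\,\mathrm{vec}(\Id)}=1}$, the last two scalar constraints both being equal to $\pE[\Norm{u}_2^4]$. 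Splitting $M=\E M+(M-\E M)$ and using Wick's formula $\E M=n\bigl(2\Pi_{\mathrm{sym}}+\dyad{\mathrm{vec}(\Id)}\bigr)$, the constraints give $\iprod{\E M,Q}=n(2\Tr Q+\iprod{\mathrm{vec}(\Id),Q\,\mathrm{vec}(\Id)})=3n$ \emph{exactly}, and force the normalized $\mathrm{vec}(\Id)$--diagonal entry of $Q$ to be $1/d$. Decomposing $Q$ and $M-\E M$ into blocks along $\R\,\mathrm{vec}(\Id)\oplus T$, where $T$ is the traceless symmetric subspace, and using $Q\succeq0$ (so $\Norm{Q_{0T}}^2\le Q_{00}\cdot\Tr Q_{TT}\le1/d$), one gets $\iprod{M-\E M,Q}\le\tfrac1d\abs{E_{00}}+\tfrac{2}{\sqrt d}\Norm{E_{0T}}+\Norm{\Pi_T(M-\E M)\Pi_T}_{\mathrm{op}}$, with $E_{00}=\tfrac1d\bigl(\sum_i\Norm{A_i}_2^4-n\,d(d+2)\bigr)$ and $E_{0T}=\tfrac1{\sqrt d}\Pi_T\,\mathrm{vec}\bigl(\sum_i\Norm{A_i}_2^2A_iA_i^\top-n(d+2)\Id\bigr)$. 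Hence $\bm{\mathfrak U}^4\le3n+\tfrac1d\abs{E_{00}}+\tfrac2{\sqrt d}\Norm{E_{0T}}+\Norm{\Pi_T(M-\E M)\Pi_T}_{\mathrm{op}}$, and it remains to control the three random terms.

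The first two are mild. By $\chi^2$-concentration of $\Norm{A_i}_2^2\sim\chi^2_d$ and a Bernstein bound for the sums (after discarding the event $\max_i\Norm{A_i}_2^2>3d$, which has probability $\le n e^{-\Omega(d)}=o(1)$ in the relevant range of $n$), one obtains $\tfrac1d\abs{E_{00}}=O(\sqrt{n/d})$ and $\tfrac2{\sqrt d}\Norm{E_{0T}}=O(d\sqrt n)$ with high probability, both of which are $O(\max(d\sqrt n,d^2))$. The crux is the operator-norm bound $\Norm{\Pi_T(M-\E M)\Pi_T}_{\mathrm{op}}\le c\max(d\sqrt n,d^2)$ whp. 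Here $M-\E M=\sum_{i=1}^n E_i$ with $E_i:=\dyad{\mathrm{vec}(A_iA_i^\top)}-\E\dyad{\mathrm{vec}(AA^\top)}$ a sum of i.i.d.\ centered matrices; on the truncation event each summand has operator norm $O(d^2)$, and the predictable variance on $T$ is $\bignorm{\sum_i\E[(\Pi_T E_i\Pi_T)^2]}_{\mathrm{op}}\le n\bignorm{\Pi_T\,\E[\Norm{A}_2^4\,\dyad{\mathrm{vec}(AA^\top)}]\,\Pi_T}_{\mathrm{op}}=O(nd^2)$, the last equality because $\E[\Norm{A}_2^4(A^\top W A)^2]=2d^2+O(d)$ for every symmetric $W$ with $\Tr W=0$ and $\Norm{W}_{\mathrm F}=1$ (a short Gaussian-moment computation). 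Matrix Bernstein then yields $\Norm{\Pi_T(M-\E M)\Pi_T}_{\mathrm{op}}\lesssim\sqrt{nd^2}+d^2$ up to logarithmic factors; removing the logarithms exploits the rank-one structure of the summands (so $M$ is an empirical second-moment matrix of the vectors $\mathrm{vec}(A_iA_i^\top)$) in the regime $n\gtrsim d^2$. Putting everything together gives $\bm{\mathfrak U}^4\le3n+c'\max(d\sqrt n,d^2)=n\bigl(3+c'\max(d/\sqrt n,d^2/n)\bigr)$, i.e.\ the claimed bound.

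The main obstacle is precisely this last concentration step: the summands $\dyad{\mathrm{vec}(A_iA_i^\top)}$ are quadratic in Gaussians, hence heavier than sub-exponential, so a black-box matrix Bernstein loses logarithmic factors, and obtaining the clean two-regime scaling $\max(d\sqrt n,d^2)$ — with $d^2$ the "large-deviation'' term and $d\sqrt n$ the "variance'' term — requires exploiting the rank-one/empirical-covariance structure together with $n\gtrsim d^2$. A secondary point to verify is that each step of the certificate above (the identity $\iprod{\E M,Q}=3n$, the block decomposition along $\R\,\mathrm{vec}(\Id)\oplus T$, and the Cauchy--Schwarz bound on the cross term) is a legitimate degree-$\le4$ SoS derivation from $\Norm{u}_2^2=1$.
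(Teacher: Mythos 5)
You should first note what you are comparing against: the paper does not prove this statement at all — it is quoted verbatim from \cite{barak2012hypercontractivity} (Theorem 7.1) and used as a black box, with only the $\ell_2$-vs-$\ell_4$ distortion corollary derived from it. Measured against the original argument, your certificate has the right architecture and matches it in outline: pass to the degree-4 pseudo-expectation, write the objective as $\iprod{M,Q}$ with $M=\sum_i \mathrm{vec}(A_iA_i^\top)\mathrm{vec}(A_iA_i^\top)^\top$ and $Q=\pE\big[(u\otimes u)(u\otimes u)^\top\big]$, use $\Tr Q=\iprod{\mathrm{vec}(\Id),Q\,\mathrm{vec}(\Id)}=1$, get $\iprod{\E M,Q}=3n$ from Wick's formula, and split off the $\mathrm{vec}(\Id)$ direction before taking operator norms — this split is essential, since $\Norm{\E M}_{\mathrm{op}}\approx nd$. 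The bookkeeping is correct ($Q_{00}=1/d$, the Cauchy--Schwarz bound on the cross block, and the orders $\sqrt{n/d}$ and $d\sqrt{n}$ for the two "mild" error terms check out), and your closing worry is moot: because you bound the pseudo-expectation of the objective directly through spectral properties of $Q$, you never need the intermediate inequalities to themselves be SoS derivations.

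The genuine gap is exactly the step you flag: $\Norm{\Pi_T(M-\E M)\Pi_T}_{\mathrm{op}}\le c\max(d\sqrt{n},d^2)$ with an absolute constant and no logarithmic factor. That inequality is the entire content of the cited theorem — the surrounding certificate is routine — and your proposal asserts it rather than proves it. Matrix Bernstein with truncation only gives roughly $d\sqrt{n\log d}+d^2\log d$, and the sentence "removing the logarithms exploits the rank-one/empirical-covariance structure in the regime $n\gtrsim d^2$" is not an argument: the vectors $\mathrm{vec}(A_iA_i^\top)$ are quadratic in Gaussians, so the standard log-free covariance estimates (sub-Gaussian or log-concave rows) do not apply off the shelf, and a naive net/union-bound also fails to produce the $d^2$ branch, since individual summands already have norm $\approx d^2$ with heavy (sub-Weibull) tails and the near-maximizing directions are the spiky ones aligned with some $A_iA_i^\top$; one needs a truncation/decoupling analysis of the kind carried out in \cite{barak2012hypercontractivity}. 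Moreover, the statement has no restriction $n\gtrsim d^2$ — the $d^2/n$ branch exists precisely to cover $n\ll d^2$ — so even if your repair worked it would prove a weaker statement (your discarding of the event $\max_i\Norm{A_i}_2^2>3d$ likewise silently assumes $\log n\ll d$). For the purposes of this paper, which only invokes the theorem with $n\ge Cd^2$, such regime restrictions are harmless, but to actually close the proof you would have to reproduce the concentration analysis of the cited work rather than appeal to matrix Bernstein.
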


Then it is straightforward to show that, \whp the $\ell_2$-vs-$\ell_4$ distortion of an $n\times d$ Gaussian matrix can be efficiently upper bounded by $O(1)$ given $n\gtrsim d^2$, which we formalize in the following corollary.

\begin{corollary} \label[corollary]{coro:alg-certify-distortion}
  Let $C>0$ be an arbitrary constant.
  Let $\bm A\sim\Gau{0}{1}^{n\times d}$ with $n\geq Cd^2$.
  There exists a polynomial-time algorithm based on sum-of-squares relaxation that outputs an upper bound $\bm{\mathfrak{U}}'$ of the $\ell_2$-vs-$\ell_4$ distortion of $\bm A$, i.e. $\max\Set{n^{1/4}\cdot\Norm{v}_4/\Norm{v}_2 : v\in\col{A}, v\neq0}$,
  which satisfies $\bm{\mathfrak{U}}' \leq C'$ \whp.
  Here $C'>1$ is a constant only depending on $C$.
\end{corollary}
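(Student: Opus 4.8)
The plan is to bound the $\ell_2$-vs-$\ell_4$ distortion of $\col{\bm A}$ by a ratio of two quantities we can get our hands on: the $2$-to-$4$ norm $\Norm{\bm A}_{2\to 4}$, for which \cref{thm:alg-certify-2-to-4-norm} already supplies a certifiable bound, and the least singular value $\sigma_{\min}(\bm A)$, which can be computed exactly. Since a Gaussian matrix with $n\geq Cd^2$ has full column rank with high probability, every nonzero $v\in\col{\bm A}$ is of the form $v=\bm A u$ for a unique $u\neq 0$, so by \cref{def:q/p-distortion},
\[
  \Delta_{2,4}(\bm A)
  = n^{1/4}\max_{u\neq 0}\frac{\Norm{\bm A u}_4}{\Norm{\bm A u}_2}
  \leq n^{1/4}\cdot\frac{\Norm{\bm A}_{2\to 4}}{\sigma_{\min}(\bm A)},
\]
where the inequality uses $\Norm{\bm A u}_4\leq\Norm{\bm A}_{2\to 4}\Norm{u}_2$ and $\Norm{\bm A u}_2\geq\sigma_{\min}(\bm A)\Norm{u}_2$.

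The algorithm then practically writes itself. First I would run the sum-of-squares algorithm of \cref{thm:alg-certify-2-to-4-norm} to obtain a number $\bm{\mathfrak U}\geq\Norm{\bm A}_{2\to 4}$; second, compute $\sigma_{\min}(\bm A)$ exactly, for instance as the square root of the smallest eigenvalue of $\transpose{\bm A}\bm A$; third, output $\bm{\mathfrak U}':=n^{1/4}\,\bm{\mathfrak U}/\sigma_{\min}(\bm A)$ (and $+\infty$ on the measure-zero event $\sigma_{\min}(\bm A)=0$). By the display above, $\bm{\mathfrak U}'$ is a valid upper bound on $\Delta_{2,4}(\bm A)$ whenever $\bm A$ has full column rank, and the whole computation runs in polynomial time.

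What remains is to show $\bm{\mathfrak U}'\leq C'$ with high probability. For the numerator, \cref{thm:alg-certify-2-to-4-norm} combined with $n\geq Cd^2$ (hence $d/\sqrt n\leq C^{-1/2}$ and $d^2/n\leq C^{-1}$) gives $\bm{\mathfrak U}\leq n^{1/4}\Paren{3+c\Paren{C^{-1/2}+C^{-1}}}^{1/4}$ with high probability. For the denominator, I would invoke the standard concentration of the extreme singular values of a Gaussian matrix, $\sigma_{\min}(\bm A)\geq\sqrt n-\sqrt d-t$ with probability at least $1-2e^{-t^2/2}$; since $\sqrt d/\sqrt n\leq C^{-1/4}n^{-1/4}=o(1)$ as $d\to\infty$, choosing $t=\sqrt d$ gives $\sigma_{\min}(\bm A)\geq\tfrac12\sqrt n$ with high probability. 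Multiplying, $\bm{\mathfrak U}'\leq 2\Paren{3+c\Paren{C^{-1/2}+C^{-1}}}^{1/4}=:C'$, a constant depending only on $C$ (and the absolute constant $c$ of \cref{thm:alg-certify-2-to-4-norm}) that is plainly larger than $1$.

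There is no real obstacle here; the argument reduces to the cited bound on the $2$-to-$4$ norm of a Gaussian matrix plus the textbook fact that a tall Gaussian matrix is well conditioned. The only step needing a moment's care is passing from a maximum over the subspace $\col{\bm A}$ to a maximum over all of $\R^d$ through the substitution $v=\bm A u$, which is precisely where injectivity of $\bm A$ (equivalently $\sigma_{\min}(\bm A)>0$) is used; this lives on the same high-probability event as the singular-value bound, so no separate argument is needed.
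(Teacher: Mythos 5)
Your proof is correct and follows essentially the same route as the paper's: bound $\Delta_{2,4}(\bm A)$ by $n^{1/4}\Norm{\bm A}_{2\to 4}/\sigma_{\min}(\bm A)$, certify the numerator via \cref{thm:alg-certify-2-to-4-norm}, compute the denominator exactly and lower-bound it with high probability via singular-value concentration. You merely make the constants a bit more explicit (choosing $t=\sqrt d$ and chasing the $C$-dependence), whereas the paper leaves them implicit; there is no substantive difference.
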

\begin{proof}
  For any non-singular matrix $X\in\mat{n}{d}$, one has
  \begin{align*}
    \Delta_{2,4}(X) 
    &= n^{\frac{1}{4}} \max_{u\neq0} \frac{\Norm{Xu}_4}{\Norm{Xu}_2}
    = n^{\frac{1}{4}} \max_{u\neq0} \frac{\Norm{Xu}_4/\Norm{u}_2}{\Norm{Xu}_2/\Norm{u}_2} \\
    &\leq n^{\frac{1}{4}} \frac{\max_{u\neq0} \Norm{Xu}_4/\Norm{u}_2}{\min_{u\neq0} \Norm{Xu}_2/\Norm{u}_2} \\
    &= \frac{n^{\frac{1}{4}}}{\sigma_{\min}(X)} \cdot \max_{\Norm{u}_2=1} \Norm{Xu}_4.
  \end{align*}
  Now consider $\bm A\sim\Gau{0}{1}^{n \times d}$ which is non-singular almost surely as long as $n\geq d$.
  By \cref{thm:Gaussian-matrix-singular-values-concentrate}, for any $n \gg d$, one has $\sigma_{\min}(\bm A) = \bB{1-o(1)}\sqrt{n}$ \whp. And singular values can be efficiently computed.
  By \cref{thm:alg-certify-2-to-4-norm}, there is an efficiently-computable upper bound $\bm{\mathfrak{U}}$ of $\max_{\Norm{u}_2=1} \Norm{\bm Au}_4$ that satisfies $\bm{\mathfrak{U}} \leq C'' n^{1/4}$ \whp. Here $C''$ only depends on $C$.
  
  Therefore, there exist a constant $C'$ only depending on $C$ and an efficiently-computable upper bound $\bm{\mathfrak{U}}'$ of $\Delta_{2,4}(\bm A)$ such that $\bm{\mathfrak{U}}' \leq C'$ \whp.
\end{proof}

Now, we combine \cref{coro:alg-certify-distortion} and 
\cref{prop:low-distortion-implies-well-spreadness} to prove \cref{thm:alg-certify-WS}.

\begin{proof}
  We first describe the algorithm $\cA$.
  Given an input $\bm{A}\sim\Gau{0}{1}^{n\times d}$, we use the efficient algorithm given by \cref{coro:alg-certify-distortion} to compute an upper bound $\bm{\mathfrak{U}}'$ of the $\ell_2$-vs-$\ell_4$ distortion $\Delta_{2,4}(\bm A)$.
  Let $C'=C'(C)$ be the constant given by \cref{coro:alg-certify-distortion}.
  If $\bm{\mathfrak{U}}'\leq C'$, algorithm $\cA$ outputs YES. Otherwise, algorithm $\cA$ outputs NO.
  
  Then we show algorithm $\cA$ satisfies the two requirements.
  Instantiate \cref{prop:low-distortion-implies-well-spreadness} with $p=2,q=4$ and let $\bm\Delta = \Delta_{2,4}(\bm A)$.
  Then $\bm A$ is $\bB{\delta^4\bm \Delta^{-4}n, \delta}$-spread for any $\delta\in(0,1)$.
  By contrapositivity, if $\bm A$ is not $\bB{(\delta/C')^4 n, \delta}$-spread, then $\bm{\mathfrak{U}}'>C'$ and algorithm $\cA$ will output NO.
  By \cref{coro:alg-certify-distortion}, $\bm{\mathfrak{U}}'\leq C'$ \whp.
  Thus, \whp, $\bm A$ is $\bB{(\delta/C')^4 n, \delta}$-spread and algorithm $\cA$ outputs YES.
\end{proof}

\subsection{Hardness of certifiying well-spreadness} \label{section:avg-case-hardness-well-spreadness}

We provide here formal evidence suggesting the computational hardness of certifiying well-spreadness in average case.
We consider the regime where $d$ is growing and $n\gtrsim d$.

To state our hardness result, we first introduce the noisy Bernoulli-Rademacher distribution (over $\R$) and a distinguishing problem.

\begin{definition}[Noisy Bernoulli-Rademacher distribution] \label[definition]{def:noisy-Ber-Rad-distribution}
  A random variable $\bm x$ following noisy Bernoulli-Rademacher distribution with parameter $\rho\in(0,1)$ and $\sigma\in[0,1/\sqrt{1-\rho})$, denoted by $\bm x\sim\nBR{\rho}{\sigma}$, is defined by
  \[ \bm x =
    \begin{cases} 
      \Gau{0}{\sigma^2}, & \text{with probability} 1-\rho, \\
      +\frac{1}{\sqrt{\rho'}}, & \text{with probability} \frac{\rho}{2}, \\
      -\frac{1}{\sqrt{\rho'}}, & \text{with probability} \frac{\rho}{2},
    \end{cases} \]
  where $\rho' = \frac{\rho}{1-(1-\rho)\sigma^2}$.
\end{definition}

We remark that the particular choice of $\rho'$ in the above definition is to make $\E\bm x^2=1$ for $\bm x\sim\nBR{\rho}{\sigma}$.

\begin{problem}[Distinguishing] \label[problem]{problem:hypothesis-test}
  Let $n,d\in\N$, $\rho\in(0,1)$, and $\sigma\in[0,1/\sqrt{1-\rho})$.
  \begin{itemize}
    \item Under the null distribution $\nu$, observe $\bm A\sim\Gau{0}{1}^{n\times d}$.
    \item Under the planted distribution $\mu$, first sample a hidden vector $\bm v$ whose entries are \iid noisy Bernoulli-Rademacher random variables with parameter $(\rho,\sigma)$.
    Let $\bm Y$ be an $n \times d$ matrix of which the first column is $\bm v$ and the rest entries are independent $\Gau{0}{1}$. Then sample a random orthogonal matrix $\bm Q$ and observe $\bm A = \bm Y \bm Q$. 
  \end{itemize}
  Given a sample $\bm A$ from either $\nu$ or $\mu$, decide from which distribution $\bm A$ is sampled.
\end{problem}

Now we state our computational hardness result.

\begin{theorem} \label{thm:low-deg-hardness-certify-WS}
  Let $\nu$ and $\mu$ be the null an planted distributions defined in \cref{problem:hypothesis-test} respectively.
  Let $C>1$ be an arbitrary constant.
  There exist absolute constants $c_1,c_2,c_3\in(0,1)$ and $C_4>1$ such that the following holds.
  For any $\rho\gg\frac{1}{n}$, $\sigma^2\leq\frac{1}{2}\bB{\log n}^{-C}$, $d\in\bB{C_4\rho^{-1}\sqrt{n}\bB{\log n}^{2C}, c_3n}$, $m\in(1.5\rho n, c_1n)$, constant $\delta\in(c_2,1)$, and $D\leq\bB{\log n}^{C}$, one has
  \begin{enumerate}
    \item $\bm A\sim\nu$ is $(m,\delta)$-spread \whp;
    \item $\bm A\sim\mu$ is not $(m,\delta)$-spread \whp;
    \item $\E_{\nu}\sB{L^{\leq D}(\bm A)^2} \leq O(1)$ where $L^{\leq D}$ is the degree-$D$ likelihood ratio defined in \cref{def:low-deg-likelihood-ratio}.
  \end{enumerate}
\end{theorem}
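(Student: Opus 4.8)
The plan is to prove the three items separately: items~1 and~2 are concentration statements, while item~3, the low‑degree bound, is the heart of the matter. For item~1, since $d\le c_3 n$ the column span $\col{\bm A}$ of $\bm A\sim\nu$ is a uniformly random $d$‑dimensional subspace of $\R^n$, and a standard net argument (equivalently, \cref{thm:sub-Gaussian-matrix-WS} applied to the Gaussian matrix $\bm A$) shows that, for $c_3$ a small enough absolute constant depending on $c_1,c_2$, such a subspace is $(c_1 n,c_2)$‑spread \whp; since $(m,\delta)$‑spreadness only weakens as $m$ decreases or $\delta$ increases, $\bm A$ is then $(m,\delta)$‑spread for every $m\le c_1 n$ and $\delta\ge c_2$. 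For item~2, write $\bm q:=\transpose{\bm Q}e_1\in\R^n$, wait — $\bm q\in\R^d$ is the (transpose of the) first row of $\bm Q$, a uniform unit vector; since $\bm Y=\bm A\transpose{\bm Q}$, the planted vector equals $\bm v=\bm A\bm q\in\col{\bm A}$. By a Chernoff bound the number of coordinates of $\bm v\sim\nBR{\rho}{\sigma}^{\otimes n}$ equal to $\pm1/\sqrt{\rho'}$ is at most $1.5\rho n<m$ \whp; taking $S$ to be this set, $\Normt{\bm v_S}^2=|S|/\rho'$ while $\Normt{\bm v_{\bar S}}^2$ is a sum of $\Gau{0}{\sigma^2}$‑squares and hence at most $n\sigma^2\le\tfrac12 n(\log n)^{-C}=o(n)$ \whp, and since $\E\Normt{\bm v}^2=n$ with concentration we get $\Normt{\bm v_S}^2/\Normt{\bm v}^2=1-o(1)>\delta^2$, so $\col{\bm A}$, and therefore $\bm A$, is not $(m,\delta)$‑spread \whp.

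For item~3, condition on $\bm q$. Under $\mu$ one has $\bm A\bm q=\bm v\sim\nBR{\rho}{\sigma}^{\otimes n}$, while $\bm A\,\Pi_{\bm q^\perp}$ has rows i.i.d.\ $\Gau{0}{\Pi_{\bm q^\perp}}$ — exactly as under $\nu$ — and is independent of $\bm A\bm q$; thus conditionally on $\bm q$ the laws $\mu$ and $\nu$ differ \emph{only} in the distribution of $\bm A\bm q$. Setting $c_k:=\E_{\bm z\sim\nBR{\rho}{\sigma}}[h_k(\bm z)]$, where $h_k$ is the degree‑$k$ normalized Hermite polynomial, and using that the entries of $\bm A\bm q$ are i.i.d.\ $\Gau{0}{1}$ under $\nu$, the conditional degree‑$\le D$ likelihood ratio works out to
\[
  L^{\le D}_{\bm q}(\bm A)=\sum_{\vec k\in\N^n,\ |\vec k|\le D}\Big(\prod_{i=1}^n c_{k_i}\Big)\prod_{i=1}^n h_{k_i}\big((\bm A\bm q)_i\big).
\]
By symmetry $c_k=0$ for odd $k$, $c_0=1$, and crucially $c_2=0$ because $\nBR{\rho}{\sigma}$ is normalized to have unit second moment; the first nonzero higher coefficient is $c_4$, which is of order $\rho\,h_4(1/\sqrt{\rho'})\sim\rho^{-1}$.

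Since $L^{\le D}=\E_{\bm q}[L^{\le D}_{\bm q}]$ (the projection onto degree‑$\le D$ polynomials, read through its Hermite coefficients, commutes with the average over $\bm q$), we obtain $\E_\nu[(L^{\le D})^2]=\E_{\bm q,\bm q'}\E_\nu[L^{\le D}_{\bm q}L^{\le D}_{\bm q'}]$. The rows of $\bm A$ are i.i.d.\ under $\nu$, and $\E[h_a(X)h_b(Y)]=\delta_{ab}\varrho^a$ for standard jointly Gaussian $(X,Y)$ of correlation $\varrho$; with $\varrho=\langle\bm q,\bm q'\rangle$ the inner expectation collapses to $\sum_{|\vec k|\le D}\big(\prod_i c_{k_i}^2\big)\langle\bm q,\bm q'\rangle^{|\vec k|}$. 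Grouping by total degree and using $\langle\bm q,\bm q'\rangle\overset{d}{=}\bm\xi$, the first coordinate of a uniform point on $S^{d-1}$, with $\E[\bm\xi^{2j}]\le(2j-1)!!/d^{j}$ and $\E[\bm\xi^{2j+1}]=0$,
\[
  \E_\nu\big[(L^{\le D})^2\big]=\sum_{j=0}^{\lfloor D/2\rfloor}e_{2j}\,\E[\bm\xi^{2j}],\qquad e_t:=[z^t]\Big(\textstyle\sum_{k\ge0}c_k^2 z^k\Big)^n,
\]
with $e_0=1$ and (since $c_1=c_2=0$) $e_2=0$; it remains to show the $j\ge2$ tail is $o(1)$.

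The hard part is this last estimate, and the essential subtlety is that the generating function $\sum_k c_k^2 z^k$ has radius of convergence only of order $\rho$ — because $|c_k|^2\lesssim C_0^k\rho^{2-k}$ for even $k\ge4$ (the dominant term is $\rho\,h_k(1/\sqrt{\rho'})$, whose leading monomial is $\rho^{1-k/2}/\sqrt{k!}$, while the $\Gau{0}{\sigma^2}$ contribution to $c_k$ is bounded by $1$ since $\sigma^2<1$) — so the truncation at degree $D$ is indispensable: extending the sum to all $j$ would diverge. Instead I would expand $e_{2j}=\sum_{r\ge1}\binom{n}{r}[z^{2j}]f(z)^r$ with $f(z)=\sum_{k\ge4\text{ even}}c_k^2 z^k$, observe that since every part of the relevant compositions is $\ge4$ only $r\le j/2$ factors can contribute, count compositions, and combine with $\E[\bm\xi^{2j}]\le(2j)!/(2^j j!\,d^j)$ and Stirling to get, after simplification, a bound of the shape $e_{2j}\,\E[\bm\xi^{2j}]\lesssim\big(C_1\sqrt{nj}/(\rho\,d)\big)^{j}$. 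Since $j\le D\le(\log n)^C$ and $d\ge C_4\rho^{-1}\sqrt n(\log n)^{2C}$, the base is at most $C_1/\big(C_4(\log n)^{3C/2}\big)\ll1$ once $C_4$ exceeds the absolute constant $C_1$, so the tail is $o(1)$ and $\E_\nu[(L^{\le D})^2]=1+o(1)=O(1)$. One final point to handle carefully throughout: $\mu$ is singular with respect to $\nu$ (the planted column is atomic, and purely so when $\sigma=0$), so $L^{\le D}$ must be taken in the sense $L^{\le D}=\sum_{|S|\le D}\E_\mu[h_S(\bm A)]\,h_S$; every computation above uses only moments of $\mu$ and is unaffected by this.
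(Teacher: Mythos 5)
Your proposal matches the paper's proof in all essentials: items~1 and~2 are established exactly as in \cref{lem:spreadness-property-of-two-distributions} (sub-Gaussian well-spreadness for the null via \cref{thm:sub-Gaussian-matrix-WS}, Chernoff plus Gaussian norm concentration for the planted vector), and item~3 is the Hermite low-degree computation of \cite{mao2021optimal}, which the paper imports as \cref{lem:MW21-lemma-4.23,lem:MW21-lemma-4.25,lem:MW21-lemma-4.26} and assembles in \cref{lem:bounded-chi-squared-divergence}, whereas you re-derive the same identity from scratch by conditioning on the planted direction and decoupling $\bm A\bm q$ from $\bm A\Pi_{\bm q^\perp}$. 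The only (immaterial) difference is in the polynomial-in-degree factor of the tail bound — your composition count gives $\sqrt{j}$ where the paper's cruder bounds give $j^2$ — but both vanish against the $(\log n)^{2C}$ slack built into the lower bound on $d$, so the conclusion $\E_\nu[(L^{\leq D})^2]=1+o(1)$ is identical.
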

\begin{proof}
  By \cref{lem:spreadness-property-of-two-distributions} and \cref{lem:bounded-chi-squared-divergence}.
\end{proof}

\paragraph{Implications of \cref{thm:low-deg-hardness-certify-WS}}
Before proving \cref{thm:low-deg-hardness-certify-WS}, we discuss some of its implications. 
First we set $C=2$, $\rho=1/\log n$, and $m=2n/\log n$ in \cref{thm:low-deg-hardness-certify-WS}. It follows that, in the regime where $\sqrt{n} \ll d \lesssim n$, we have
(i) $\bm A \sim \nu$, i.e. $\bm A \sim \Gau{0}{1}^{n \times d}$, is $\LB{n}$-spread \whp;
(ii) $\bm A \sim \mu$ is $o(n)$-spread \whp; and
(iii) it is very likely that no polynomial-time algorithm can distinguish $\nu$ and $\mu$, based on the discussion of low-degree polynomial method in \cref{sec:low-deg-likelihood-ratio}.

Then we apply \cref{thm:low-deg-hardness-certify-WS} to oblivious linear regression with Gaussian design matrix and thus prove \cref{theorem:main-lower-bound}.
By \cite[Theorem 1.2]{d2021consistentICML}, the sufficent conditions for consistent oblivious regression are (i) $n \gg \frac{d}{\ap^2}$ and (ii) the design matrix is $\LB{\frac{d}{\ap^2}}$-spread.
In the following, we will characterize a regime over $(n,d,\ap)$ where 
(i) $n \gg \frac{d}{\ap^2}$; 
(ii) $\bm A\sim\Gau{0}{1}^{n\times d}$ is $\LB{\frac{d}{\ap^2}}$-spread \whp; and
(iii) there exists strong evidence suggesting certifiying $\LB{\frac{d}{\ap^2}}$-spreadness of $\bm A$ is computationally difficult.
To this end, let $n\gg\frac{d}{\ap^2}$ and fix two arbitrary constants $C>0$ and $\delta\in(0,1)$.
Let $\nu$ and $\mu$ be the null and planted distributions considered in \cref{thm:low-deg-hardness-certify-WS}.
It is not difficult to see from the proof of \cref{thm:sub-Gaussian-matrix-WS} that $\bm A\sim\nu$ is $\bB{C\frac{d}{\ap^2}, \delta}$-spread \whp given $n\gg\frac{d}{\ap^2}$.
From the proof of \cref{lem:spreadness-property-of-two-distributions} we know, if $\rho \gg \frac{1}{n}$, $\sigma=o(1)$, and $\rho n \lesssim \frac{d}{\ap^2}$, then $\bm A\sim \mu$ is not $\bB{C\frac{d}{\ap^2}, \delta}$-spread \whp.
Set $C=2$ in \cref{lem:bounded-chi-squared-divergence} and we have the following:
if $\sigma^2\leq\frac{1}{2}\bB{\log n}^{-2}$, $\rho^{-1} n^{1/2} \bB{\log n}^4 \lesssim d$, then $\E_{\nu}\sB{L^{\leq D}(\bm A)^2} \leq O(1)$ for any $D\leq\bB{\log n}^2$.
Therefore, such a regime over $(n,d,\ap)$ can be characterized by
\begin{align*}
  \Set{(n,d,\ap) : \exists \rho \text{ such that } n\gg\frac{d}{\ap^2}, \rho\gg\frac{1}{n}, \rho n\lesssim\frac{d}{\ap^2}, \rho^{-1} n^{1/2} \polylog(n) \lesssim d},
\end{align*}
or equivalently,
\[ \Set{(n,d,\ap) : n^{3/4} \polylog(n)\ap \lesssim d \ll n\ap^2}. \]

Finally, we remark that the ``noiseless" Bernoulli-Rademacher distribution (i.e. $\sigma=0$) already appeared in the literaure (e.g. \cite{d2020sparse, mao2021optimal}).
In the ``noiseless" setting, \cref{problem:hypothesis-test} can be efficiently solved even when $n$ is only linear in $d$ \cite{zadik2021lattice}.
Although the algorithm proposed in \cite{zadik2021lattice} surpasses the lower bound for low-degree polynomial method, their algorithm relies heavily on the extact and brittle structure of the hidden vector.
If we add a little noise to the hidden vector, like what we did here, then their algorithm is likely to fail.

\paragraph{Proof of \cref{thm:low-deg-hardness-certify-WS}}
The following two lemmas together directly imply \cref{thm:low-deg-hardness-certify-WS}.

\begin{lemma} \label[lemma]{lem:spreadness-property-of-two-distributions}
  Let $\nu$ and $\mu$ be the null and planted distributions defined in \cref{problem:hypothesis-test} respectively.
  There exist absolute constants $c_1,c_2,c_3\in(0,1)$ such that the following holds.
  For any $\rho\gg\frac{1}{n}$, $\sigma=o(1)$, $d\leq c_3n$, $m\in(1.5\rho n, c_1n)$, and constant $\delta\in(c_2,1)$, one has
  \begin{enumerate}
    \item $\bm A\sim\nu$ is $(m,\delta)$-spread \whp;
    \item $\bm A\sim\mu$ is not $(m,\delta)$-spread \whp.
  \end{enumerate}
\end{lemma}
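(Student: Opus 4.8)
The plan is to treat the two assertions separately: assertion~(1) is essentially a restatement of an earlier result, while assertion~(2) is a short concentration argument that exploits the explicit planted sparse vector.

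For assertion~(1), I would invoke \cref{thm:sub-Gaussian-matrix-WS}, which says that a Gaussian (more generally, sub-Gaussian) $N\times d$ matrix with $N$ sufficiently large compared to $d$ has $\Omega(N)$-spread column span \whp. Applying it with $N=n$ and using the hypothesis $d\le c_3 n$ (so that $n$ dominates $d$ once $c_3$ is chosen small enough to meet that theorem's requirement and to control its union-bound exponents), we get that $\bm A\sim\nu=\Gau{0}{1}^{n\times d}$ is $\bB{c_1 n,\delta_0}$-spread \whp for some absolute constants $c_1,\delta_0\in(0,1)$. Since $\bB{c_1 n,\delta_0}$-spreadness immediately implies $(m,\delta)$-spreadness for every $m\le c_1 n$ and every $\delta\ge\delta_0$ (fewer coordinates in $S$, weaker bound), taking $c_2=\delta_0$ yields assertion~(1) for all $m\in(1.5\rho n,c_1 n)$ and all constant $\delta\in(c_2,1)$. (If one preferred to avoid citing \cref{thm:sub-Gaussian-matrix-WS}, the same conclusion follows from a standard $\epsilon$-net argument over the unit sphere of $\R^d$ together with a union bound over the $\binom{n}{m}$ subsets $S$ and a $\chi^2$ tail bound for $\Norm{(\bm Au)_S}_2^2$, but this is exactly the content of that theorem.)

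For assertion~(2), the key point is that right-multiplication by the orthogonal matrix $\bm Q$ leaves the column span unchanged: $\col{\bm A}=\col{\bm Y\bm Q}=\col{\bm Y}$, so the planted vector $\bm v$ (the first column of $\bm Y$) lies in $\col{\bm A}$. Hence it suffices to find a set $S\subseteq[n]$ with $|S|\le m$ and $\Norm{\bm v_S}_2>\delta\Norm{\bm v}_2$, which certifies that $\col{\bm A}$ is not $(m,\delta)$-spread. I take $S$ to be the set of ``spike'' coordinates, i.e. those $i$ with $\bm v_i=\pm 1/\sqrt{\rho'}$. Three routine concentration facts finish the argument. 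First, $|S|\sim\mathrm{Bin}(n,\rho)$, and since $\rho\gg 1/n$ forces $\rho n\to\infty$, a Chernoff bound gives $|S|\le 1.5\rho n<m$ \whp. Second, $\Norm{\bm v_S}_2^2=|S|/\rho'=(1\pm o(1))\,\rho n/\rho'=(1\pm o(1))\bB{1-(1-\rho)\sigma^2}\,n=(1-o(1))n$, using $\sigma=o(1)$ and the definition of $\rho'$. Third, $\Norm{\bm v_{\bar S}}_2^2$ is a sum of at most $n$ independent $\Gau{0}{\sigma^2}$ squares, hence at most $2n\sigma^2=o(n)$ \whp by a $\chi^2$ tail bound. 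Combining, $\Norm{\bm v}_2^2=(1\pm o(1))n$, so $\Norm{\bm v_S}_2^2/\Norm{\bm v}_2^2=1-o(1)$, which exceeds $\delta^2$ for all large $n$ because $\delta<1$ is a fixed constant. Thus $\bm v$, and therefore $\col{\bm A}$, is not $(m,\delta)$-spread \whp.

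The only real work is bookkeeping the absolute constants in assertion~(1): $c_2$ must be large enough that the spreadness constant coming out of \cref{thm:sub-Gaussian-matrix-WS} clears it, and $c_1,c_3$ must be small enough that that theorem applies and its failure probability is $o(1)$. This is entirely decoupled from assertion~(2), where the sole requirement on $\delta$ is that it is a constant strictly below $1$, and every estimate (Chernoff for the binomial count, Laurent--Massart-type tails for the $\chi^2$ quantities) is standard. I do not expect any genuinely hard step.
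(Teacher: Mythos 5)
Your proposal is correct and follows essentially the same route as the paper: part (1) invokes \cref{thm:sub-Gaussian-matrix-WS} together with the monotonicity of $(m,\delta)$-spreadness, and part (2) certifies non-spreadness via the planted vector $\bm v\in\col{\bm A}$, taking $S$ to be its spike support and combining a Chernoff bound on $|S|$ with a Gaussian/$\chi^2$ tail bound on the residual part. The only cosmetic difference is that you compare $\Norm{\bm v_S}_2^2$ and $\Norm{\bm v}_2^2$ directly (and make the $\rho'$ normalization explicit) whereas the paper writes the ratio as $\Norm{\bm b}_2/(\Norm{\bm b}_2+\Norm{\bm\ep}_2)$; this is the same estimate.
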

\begin{proof}
  The existence of absolute constants $c_1,c_2,c_3$ is guaranteed by \cref{thm:sub-Gaussian-matrix-WS}.
  That is, if $\bm A\sim\Gau{0}{1}^{n \times d}$ and $d \leq c_3n$, then $\bm A$ is $\bB{c_1n,c_2}$-spread with high probability.
  Observe that $(m_1,\delta_1)$-spreadness implies $(m_2,\delta_2)$-spreadness for any $m_2\leq m_1$ and $\delta_2\geq\delta_1$.
  Thus for any $m\leq c_1n$ and $\delta\geq c_2$, $\bm A\sim\nu$ is $(m,\delta)$-spread \whp.
  
  Now consider $\bm A \sim \mu$ and let $\bm v$ be the hidden vector of $\mu$. Clearly, $\bm v \in \col{\bm A}$.
  We decompose $\bm v$ into two parts with disjoint supports, $\bm v = \bm b + \bm \ep$, where $\bm b$ is the Bernoulli-Rademacher part and $\bm \ep$ is the Gaussian part.
  Let $S=\supp{\bm b}$. Then,
  \[ \frac{\Norm{\bm v_S}_2}{\Norm{\bm v}_2} = \frac{\Norm{\bm b}_2}{\Norm{\bm b}_2 + \Norm{\bm \ep}_2}. \]
  By \cref{thm:Chernoff}, 
  \[ \prob{0.5\rho n \leq \Norm{\bm b}_0 \leq 1.5\rho n} \geq 1-2\exp\bB{-\frac{\rho n}{12}}. \]
  By \cref{thm:Gaussian-vector-norm-concentrate}, 
  \[ \prob{\Norm{\bm\ep}_2 \geq 2\sigma\sqrt{n}} \leq \exp\bB{-\frac{n}{2}}. \]
  If $\rho\gg1/n$ and $\sigma=o(1)$, then \whp, we have $|S| \leq 1.5\rho n$ and
  \[ \frac{\Norm{\bm v_S}_2}{\Norm{\bm v}_2} \geq \frac{1}{1+4\sigma} = 1-o(1). \]
  Thus for any $m\geq1.5\rho n $ and any constant $\delta<1$, $\bm A$ is not $(m,\delta)$-spread \whp.
\end{proof}

\begin{lemma} \label[lemma]{lem:bounded-chi-squared-divergence}
  Let $\nu$ and $\mu$ be the null and planted distributions defined in \cref{problem:hypothesis-test} respectively.
  Let $C>1$ be an arbitrary constant.
  For any $D\leq\bB{\log n}^{C}$, $\sigma^2 \leq \frac{1}{2}\bB{\log n}^{-C}$, and $d \geq C_4\rho^{-1}\sqrt{n}\bB{\log n}^{2C}$, one has
  \[ \E_{\nu}\sB{L^{\leq D}(\bm A)^2} \leq O(1), \]
  where $C_4>1$ is an absolute constant and $L^{\leq D}$ is the degree-$D$ likelihood ratio defined in \cref{def:low-deg-likelihood-ratio}.
\end{lemma}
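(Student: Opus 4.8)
The plan is to compute $\E_\nu\sB{L^{\leq D}(\bm A)^2}$ exactly as a one‑variable power series and then estimate it term by term.

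\emph{Unpacking the planted law.} Condition on the random rotation $\bm Q$ of \cref{problem:hypothesis-test} and set $\bm w:=\transpose{\bm Q}e_1\in S^{d-1}$, a uniformly random unit vector. In the splitting $\R^d=\R\bm w\oplus\bm w^{\perp}$, conditionally on $\bm Q$ the matrix $\bm A$ has exactly the law it has under $\nu$ except that its $\bm w$–marginal $\bm A\bm w\in\R^n$ is distributed as $n$ i.i.d.\ copies of $\nBR{\rho}{\sigma}$ (see \cref{def:noisy-Ber-Rad-distribution}) rather than $\Gau01^n$; the $d-1$ directions in $\bm w^{\perp}$ remain an independent standard Gaussian. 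Let $c_k:=\E_{\bm x\sim\nBR{\rho}{\sigma}}\sB{h_k(\bm x)}$, where $h_k$ are the Hermite polynomials with $\E_{\bm g\sim\Gau01}\sB{h_j(\bm g)h_k(\bm g)}=k!\,\delta_{jk}$. Then the degree‑$\le D$ part of the conditional likelihood ratio is $P^{\leq D}_{\bm w}(\bm A)=\sum_{\mathbf k\in\N^n:\,|\mathbf k|\leq D}\prod_{i=1}^{n}\tfrac{c_{k_i}}{k_i!}\,h_{k_i}\!\big((\bm A\bm w)_i\big)$, which I would check by expanding an arbitrary $f\in\R[\bm A]_{\leq D}$ in the Hermite basis adapted to an orthonormal frame whose first vector is $\bm w$. (Since $\nBR{\rho}{\sigma}$ has atoms, $\mu$ need not be absolutely continuous w.r.t.\ $\nu$, so $P^{\leq D}_{\bm w}$ is to be read as the Riesz representative in $\R[\bm A]_{\leq D}\subset L^2(\nu)$ of $f\mapsto\E_{\mu\mid\bm w}[f]$; only finiteness of the moments of $\nBR{\rho}{\sigma}$ is used, and the conclusion of the lemma — and of \cref{prop:optimality-low-deg-likelihood-ratio} — should be read accordingly. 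Note the atomic part makes $\sum_k c_k^2/k!$ diverge, so the full likelihood ratio is not in $L^2(\nu)$ and the truncation is essential.)

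\emph{Collapsing to a scalar series.} Averaging over $\bm Q$ and using that $(\bm Q,\bm Q')$ are independent gives $\E_\nu\sB{L^{\leq D}(\bm A)^2}=\E_{\bm w,\bm w'}\sB{\iprod{P^{\leq D}_{\bm w},P^{\leq D}_{\bm w'}}_\nu}$ with $\bm w,\bm w'$ i.i.d.\ uniform on $S^{d-1}$. Using row‑independence of $\bm A$, orthogonality of Hermite polynomials, and $\E\sB{h_k(X)h_{k'}(Y)}=\delta_{kk'}\,k!\,t^k$ for a unit‑variance Gaussian pair of correlation $t$, one obtains $\iprod{P^{\leq D}_{\bm w},P^{\leq D}_{\bm w'}}_\nu=\sum_{|\mathbf k|\leq D}\big(\prod_i c_{k_i}^2/k_i!\big)\iprod{\bm w,\bm w'}^{|\mathbf k|}$, hence
\[
  \E_\nu\sB{L^{\leq D}(\bm A)^2}=\sum_{\ell=0}^{D}\alpha_\ell\,\E_{\bm w,\bm w'}\sB{\iprod{\bm w,\bm w'}^{\ell}},\qquad \alpha_\ell:=[z^\ell]\,\Phi(z)^n,\qquad \Phi(z):=\sum_{k\geq0}\frac{c_k^2}{k!}z^k .
\]
Three facts to record: (i) $\Phi$ has non‑negative Taylor coefficients and, by the Mehler‑kernel identity $\Phi(z)=\E_{\bm x,\bm x'\sim\nBR{\rho}{\sigma}}\sB{K_z(\bm x,\bm x')}$, radius of convergence exactly $1$; (ii) $c_0=1$, $c_k=0$ for odd $k$ (symmetry), and $c_2=0$ (because $\E_{\bm x}\sB{\bm x^2}=1$), so $\Phi(z)-1=\sum_{k\geq4}c_k^2z^k/k!$ and $\alpha_0=1$, $\alpha_1=\alpha_2=\alpha_3=0$; (iii) hence, since all these coefficients are non‑negative, $\Phi(z)-1\leq (z/\rho)^4\big(\Phi(\rho)-1\big)$ for $0\leq z\leq\rho$.

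\emph{Two estimates and assembly.} First, $\E_{\bm w,\bm w'}\sB{\iprod{\bm w,\bm w'}^{\ell}}=0$ for odd $\ell$ and $\leq (\ell-1)!!/d^{\ell/2}$ for even $\ell$. Second, with $r:=1/\sqrt{\rho'}$, write $c_k=(1-\rho)\,\E_{\Gau0{\sigma^2}}[h_k]+\rho\,h_k(r)$ for even $k$, so $c_k^2\leq 2(\E_{\Gau0{\sigma^2}}h_k)^2+2\rho^2h_k(r)^2$; using the closed forms $\sum_k\tfrac{\rho^k}{k!}(\E_{\Gau0{\sigma^2}}h_k)^2=(1-(1-\sigma^2)^2\rho^2)^{-1/2}$ (whose degree‑$\le2$ truncation is $1+\tfrac{(1-\sigma^2)^2}{2}\rho^2$, leaving the $k\geq4$ part $=O(\rho^4)$) and $\sum_k\tfrac{\rho^k}{k!}h_k(r)^2=(1-\rho^2)^{-1/2}e^{r^2\rho/(1+\rho)}=O(1)$ (since $r^2\rho=1-(1-\rho)\sigma^2\leq1$ and $\rho$ is at most an absolute constant $<1$), I get $\Phi(\rho)-1=\sum_{k\geq4}c_k^2\rho^k/k!\leq O(\rho^4)+O(\rho^2)=C\rho^2$ for an absolute constant $C$. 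Now take the single radius $R:=(\rho^2/(Cn))^{1/4}$, which satisfies $R<1$ and $R\leq\rho$ (the latter because in the regime of interest $\rho\gtrsim(\log n)^{2C}/\sqrt n$, a consequence of $d=O(n)$ together with $d\geq C_4\rho^{-1}\sqrt n(\log n)^{2C}$, so $\rho\geq(C/n)^{1/2}$). By the Cauchy coefficient bound and facts (i),(iii), $\alpha_{2m}\leq\Phi(R)^n R^{-2m}\leq e^{\,n(R/\rho)^4(\Phi(\rho)-1)}R^{-2m}\leq e\cdot(Cn)^{m/2}\rho^{-m}$, so the $\ell=2m$ term is at most
\[
  e\,(Cn)^{m/2}\rho^{-m}\cdot\frac{(2m-1)!!}{d^m}\ \leq\ e\Big(\tfrac{C^{1/2}\,2m\sqrt n}{\rho d}\Big)^{m}\ \leq\ e\Big(\tfrac{2mC^{1/2}}{C_4(\log n)^{2C}}\Big)^{m}\ \leq\ e\Big(\tfrac{C^{1/2}}{C_4(\log n)^{C}}\Big)^{m},
\]
using $(2m-1)!!\leq(2m)^m$, $\rho d\geq C_4\sqrt n(\log n)^{2C}$, and $2m\leq D\leq(\log n)^C$. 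Summing the geometric series over $m\geq2$ (and recalling $\alpha_0=1$, $\alpha_2=0$) yields $\E_\nu\sB{L^{\leq D}(\bm A)^2}\leq 1+o(1)=O(1)$ once $C_4$ is a large enough absolute constant.

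\emph{Main obstacle.} I expect the crux to be the near‑$1$ estimate $\Phi(\rho)-1\leq C\rho^2$ and, more broadly, controlling $\Phi$ near its singularity at $z=1$: because $\nBR{\rho}{\sigma}$ is atomic, $\Phi(1)=\sum_k c_k^2/k!=\infty$, so one cannot crudely bound the tail, and the argument hinges on the vanishing low‑order coefficients $c_1=c_2=c_3=0$ — which is exactly where the unit‑second‑moment normalization of $\nBR{\rho}{\sigma}$ enters — so that $\Phi-1$ begins at order $z^4$. The factor $\rho^{-2}$ in this estimate (equivalently, that $\alpha_4\approx n\,c_4^2/24\approx n/(24\rho^2)$ because $\E_{\bm x}\bm x^4\approx1/\rho$) is precisely what fixes the threshold $d\gtrsim\sqrt n/\rho$, the polylog slack in the hypothesis being what lets the geometric series over $m$ close. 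A secondary bookkeeping point, flagged above, is that $\mu$ need not be absolutely continuous w.r.t.\ $\nu$, so $L^{\leq D}$ must be handled via the variational characterization / Riesz representation rather than as a genuine Radon–Nikodym derivative.
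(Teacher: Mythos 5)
Your proof is correct, and it takes a genuinely different route from the paper's. The paper reuses the machinery of \cite{mao2021optimal}: it cites Lemma 4.23 for the exact moment expansion, then bounds each Hermite coefficient individually via an adapted Lemma 4.26 (telephone numbers, Stirling, with the hypothesis $\sigma^2\le\tfrac1{k-1}$ entering exactly there), classifies the surviving multi-indices by support size $m$, counts $|S(k,m)|\le\binom{n}{m}m^{k/2}$, and closes the geometric sums directly. You instead package the multi-index sum as $[z^k]\Phi(z)^n$ with $\Phi(z)=\sum_k c_k^2z^k/k!$ and control $\alpha_\ell$ via a Cauchy coefficient bound at a well-chosen radius $R=(\rho^2/(Cn))^{1/4}$; the term-by-term Hermite-coefficient estimate is replaced by two exact Mehler-type closed forms and the single aggregate inequality $\Phi(\rho)-1\lesssim\rho^2$, with the crucial cancellation $c_1=c_2=c_3=0$ feeding into $\Phi(z)-1\le(z/\rho)^4(\Phi(\rho)-1)$. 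This is cleaner: it avoids the adapted combinatorial lemma, and it makes transparent why the threshold sits at $d\gtrsim\sqrt n/\rho$ (namely $\alpha_4\approx n/\rho^2$ against $\E\langle\bm w,\bm w'\rangle^4\approx d^{-2}$). You also flag, which the paper does not, that $\mu$ is not absolutely continuous with respect to $\nu$ because $\nBR{\rho}{\sigma}$ has atoms, so $L^{\le D}$ must be read via the variational/Riesz-representation characterization of \cref{prop:optimality-low-deg-likelihood-ratio}; this is a real (if benign) point of rigor. Two small remarks for alignment with the paper: (i) your use of $R\le\rho$ needs $\rho\gtrsim1/\sqrt n$, which you derive from the theorem's ambient constraint $d=O(n)$ rather than from the lemma's literal hypotheses; the paper's proof has the same implicit need ($n\rho^2\ge2$ in its geometric-sum step), so this is a shared, not a new, gap; (ii) your constant in $\Phi(\rho)-1\le C\rho^2$ is absolute only if $\rho$ is bounded away from $1$, which again holds in the regime where the lemma is invoked ($\rho<c_1/1.5$) but is not stated in the lemma itself.
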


The proof of \cref{lem:bounded-chi-squared-divergence} is an adaptation of the proof of \cite[Theorem 3.4]{mao2021optimal}\footnote{A related result was previously shown in \cite[Theorem 6.7]{d2020sparse}.} which we include here for completeness.
The proof relies on the following three lemmas.

\begin{lemma}[\cite{mao2021optimal}, Lemma 4.23] \label[lemma]{lem:MW21-lemma-4.23}
  Let $\nu$ and $\mu$ be the null and planted distributions defined in \cref{problem:hypothesis-test} respectively.
  Let $\bm u, \bm u'$ be independent uniformly random vectors on the unit sphere in $\R^d$ and $\bm x\sim\nBR{\rho}{\sigma}$.
  Then,
  \[ \E_{\nu} \sB{L^{\leq D}(\bm A)^2} 
  = \sum_{k=0}^{D} \E\inner{\bm u}{\bm u'}^k \sum_{\substack{\ap\in\N^n \\ |\ap|=k}} \prod_{i=1}^{n} \bB{\E h_{\ap_i}(\bm x)}^2, \]
  where $h_k:\R\to\R$ is the $k$-th normalized Hermite polynomial and where $L^{\leq D}$ is the degree-$D$ likelihood ratio defined in \cref{def:low-deg-likelihood-ratio}.
\end{lemma}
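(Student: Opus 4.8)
The plan is to compute the second moment of the degree-$D$ likelihood ratio directly from its Hermite expansion. Since $\nu=\Gau{0}{1}^{n\times d}$, the normalized multivariate Hermite polynomials $\bar h_\alpha(A):=\prod_{i\in[n],\,j\in[d]}h_{\alpha_{ij}}(A_{ij})$, indexed by multi-indices $\alpha\in\N^{n\times d}$, form an orthonormal basis of $L^2(\nu)$, and the degree-$D$ likelihood ratio $L^{\leq D}$ has Hermite coefficients $\iprod{L,\bar h_\alpha}_\nu=\E_\mu[\bar h_\alpha(\bm A)]$ for $\abs\alpha\leq D$. Hence $\E_\nu[L^{\leq D}(\bm A)^2]=\sum_{\abs\alpha\leq D}(\E_\mu[\bar h_\alpha(\bm A)])^2$, and the task reduces to (i) evaluating each moment $\E_\mu[\bar h_\alpha(\bm A)]$ and (ii) carrying out the resulting sum of squares.

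For step (i), I would first condition on the hidden orthogonal matrix $\bm Q$, or more precisely only on its first row $\bm u$, which is uniformly distributed on the sphere $S^{d-1}$; the remaining rows of $\bm Q$ merely supply an orthonormal basis of $\bm u^\perp$, and since the Gaussian block of $\bm Y$ is rotation-invariant, conditionally on $\bm u$ the matrix $\bm A$ has independent rows, the $i$-th being distributed as $\bm v_i\bm u+\bm z_i$ with $\bm v_i\sim\nBR{\rho}{\sigma}$ and $\bm z_i\sim\Gau{0}{\Id-\bm u\bm u^\top}$, all independent. By independence of rows, $\E_\mu[\bar h_\alpha(\bm A)]=\E_{\bm u}\prod_{i\in[n]}\E[\bar h_{\alpha_i}(\bm a_i)\mid\bm u]$, where $\alpha_i\in\N^d$ is the $i$-th row of $\alpha$. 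For a single row I would use the behaviour of Hermite polynomials under an orthogonal change of variables, cleanest via the generating identity $\exp(\iprod{t,x}-\tfrac12\Norm{t}_2^2)=\sum_{\alpha\in\N^d}\frac{t^\alpha}{\sqrt{\alpha!}}\bar h_\alpha(x)$ composed with a rotation carrying $\bm u$ onto the first coordinate axis: in the rotated coordinates the row has a single ``spiked'' coordinate distributed as $\bm v_i$ and $d-1$ standard Gaussian coordinates, so every Hermite monomial containing a non-constant factor in a Gaussian coordinate integrates to zero. Matching the coefficient of $t^{\alpha_i}$ then gives $\E[\bar h_{\alpha_i}(\bm a_i)\mid\bm u]=\binom{\abs{\alpha_i}}{\alpha_i}^{1/2}\bm u^{\alpha_i}\,\E h_{\abs{\alpha_i}}(\bm x)$, where $\bm u^{\alpha_i}:=\prod_j\bm u_j^{\alpha_{ij}}$, the binomial symbol denotes the multinomial coefficient, and $\bm x\sim\nBR{\rho}{\sigma}$.

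For step (ii), multiplying the row factors and writing $\beta\in\N^d$ for the column-degree vector $\beta_j:=\sum_i\alpha_{ij}$ yields $\E_\mu[\bar h_\alpha(\bm A)]=\E_{\bm u}[\bm u^\beta]\cdot\prod_i\binom{\abs{\alpha_i}}{\alpha_i}^{1/2}\cdot\prod_i\E h_{\abs{\alpha_i}}(\bm x)$. Squaring, I would replace $(\E_{\bm u}[\bm u^\beta])^2$ by $\E_{\bm u,\bm u'}[\prod_j(\bm u_j\bm u'_j)^{\beta_j}]$ with $\bm u,\bm u'$ independent and uniform on $S^{d-1}$, and then regroup the outer sum over $\alpha$ according to the tuple of row degrees $(k_1,\dots,k_n)$ with $k_i:=\abs{\alpha_i}$. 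For fixed row degrees the remaining sum $\sum_\alpha\prod_i\binom{k_i}{\alpha_i}\prod_j(\bm u_j\bm u'_j)^{\alpha_{ij}}$ factors over the rows, and the multinomial theorem collapses each row factor to $\iprod{\bm u,\bm u'}^{k_i}$, so the sum equals $\E_{\bm u,\bm u'}\iprod{\bm u,\bm u'}^{k_1+\cdots+k_n}$. Summing over $(k_1,\dots,k_n)$ with prescribed total $k$ and then over $k\leq D$ reproduces exactly the claimed formula.

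The arithmetic above is routine; the step I expect to be most delicate is the passage from $\bm A\mid\bm Q$ to the rotated ``single-spiked-column'' description together with the Hermite orthogonal-invariance bookkeeping, since this is precisely what turns the matrix-indexed sum over $\alpha\in\N^{n\times d}$ into the vector-indexed sum over row degrees appearing on the right-hand side. I would also note that although $\mu$ need not be absolutely continuous with respect to $\nu$ (the Bernoulli--Rademacher component of $\bm v$ contributes atoms), the quantities $\E_\mu[\bar h_\alpha(\bm A)]$ are finite because $\nBR{\rho}{\sigma}$ has moments of every order, so they serve as well-defined Hermite coefficients of $L^{\leq D}$ and the identity above is the correct interpretation of its second moment.
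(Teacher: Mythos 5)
The paper states this lemma as a citation to \cite{mao2021optimal} and supplies no proof of its own, so there is no in-paper argument to compare against; I am assessing your derivation on its merits. It is correct, and it is almost certainly the argument in \cite{mao2021optimal}: the reduction to $\E_\nu\bigl[L^{\leq D}(\bm A)^2\bigr]=\sum_{|\alpha|\leq D}\bigl(\E_\mu[\bar h_\alpha(\bm A)]\bigr)^2$, the conditioning on the spike direction $\bm u=\bm Q_1$ which makes the rows of $\bm A$ conditionally independent with $\bm A_i=\bm v_i\bm u+\bm z_i$, the generating-function computation giving $\E[\bar h_{\alpha_i}(\bm A_i)\mid\bm u]=\binom{|\alpha_i|}{\alpha_i}^{1/2}\bm u^{\alpha_i}\,\E h_{|\alpha_i|}(\bm x)$, the replica doubling $\bigl(\E_{\bm u}\bm u^\beta\bigr)^2=\E_{\bm u,\bm u'}\prod_j(\bm u_j\bm u'_j)^{\beta_j}$, and the regrouping of the sum over $\alpha\in\N^{n\times d}$ by row degrees $k_i=|\alpha_i|$ via the multinomial theorem all check out and reproduce exactly the stated formula. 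Your closing remark about absolute continuity is not just a hedge and is worth keeping: in the regime of interest ($n\geq d$), conditioned on the event that every $\bm v_i$ lands in the Rademacher atom, the law of $\bm A$ is supported on an $(n+1)(d-1)$-dimensional set of Lebesgue measure zero, so $\mu$ is singular with respect to $\nu$, $L=\dif\mu/\dif\nu$ does not exist, and the paper's \cref{def:low-deg-likelihood-ratio} does not literally apply; the only sound reading of $L^{\leq D}$ is the truncated Hermite expansion $\sum_{|\alpha|\leq D}\E_\mu[\bar h_\alpha(\bm A)]\,\bar h_\alpha$, whose $L^2(\nu)$ norm is precisely what your identity computes.
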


\begin{lemma}[\cite{mao2021optimal}, Lemma 4.25] \label[lemma]{lem:MW21-lemma-4.25}
  Let $\bm u$ and $\bm u'$ be independent uniformly random vectors on the unit sphere in $\R^d$.
  For odd $k\in\N$, $\E\inner{\bm u}{\bm u'}^k = 0$.
  For even $k\in\N$, 
  \[ \E\inner{\bm u}{\bm u'}^k \leq (k/d)^{k/2}. \]
\end{lemma}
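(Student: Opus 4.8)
The plan is to handle the two cases separately: the odd case is immediate from symmetry, and the even case reduces to a one-dimensional moment bound. For \emph{odd} $k$ I would invoke the reflection symmetry of the uniform law on the sphere: since $-\bm u'$ has the same distribution as $\bm u'$, we get $\E\iprod{\bm u,\bm u'}^{k}=\E\iprod{\bm u,-\bm u'}^{k}=(-1)^{k}\E\iprod{\bm u,\bm u'}^{k}=-\E\iprod{\bm u,\bm u'}^{k}$, so this moment vanishes.

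For \emph{even} $k=2j$, the first step is to condition on $\bm u$ and use rotational invariance of the law of $\bm u'$: rotating so that $\bm u$ becomes the first standard basis vector shows that $\iprod{\bm u,\bm u'}$ is distributed like the first coordinate $\bm w_1$ of a uniformly random unit vector $\bm w\in\R^{d}$, hence $\E\iprod{\bm u,\bm u'}^{2j}=\E\,\bm w_1^{2j}$. The second step is to evaluate this moment by writing $\bm w=\bm g/\Normt{\bm g}$ with $\bm g\sim\Gau{0}{1}^{d}$ and using the classical fact that the direction $\bm g/\Normt{\bm g}$ and the radius $\Normt{\bm g}$ are independent. Taking $2j$-th powers and expectations in the identity $\bm g_1=\Normt{\bm g}\cdot\bm w_1$ and dividing, this yields
\[
  \E\,\bm w_1^{2j}=\frac{\E\,\bm g_1^{2j}}{\E\,\Normt{\bm g}^{2j}}=\frac{(2j-1)!!}{\prod_{i=0}^{j-1}(d+2i)}\,,
\]
where $\E\,\bm g_1^{2j}=(2j-1)!!$ is the standard Gaussian moment and $\E\,\Normt{\bm g}^{2j}=\E\bigl[(\chi^{2}_{d})^{j}\bigr]=\prod_{i=0}^{j-1}(d+2i)$ is the $j$-th moment of a chi-squared. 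The last step is the elementary estimate $(2j-1)!!=\prod_{i=1}^{j}(2i-1)\leq(2j)^{j}$ together with $\prod_{i=0}^{j-1}(d+2i)\geq d^{j}$, which gives $\E\iprod{\bm u,\bm u'}^{2j}\leq(2j/d)^{j}=(k/d)^{k/2}$, as claimed.

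I do not expect a serious obstacle here; the only points requiring a little care are the independence of the angular and radial components of a Gaussian vector (equivalently, one can instead observe $\bm w_1^{2}\sim\mathrm{Beta}(1/2,(d-1)/2)$ and read its moments off directly) and checking that the crude factorial bound $(2j-1)!!\leq(2j)^{j}$ lies on the correct side of the inequality.
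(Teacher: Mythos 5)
The paper does not give its own proof of this lemma; it is cited directly from \cite{mao2021optimal} (Lemma 4.25), so there is no internal argument to compare against. Your proof is correct and self-contained: the odd case follows from invariance of the uniform law on the sphere under $\bm u'\mapsto -\bm u'$; for even $k=2j$ the reduction to $\E\,\bm w_1^{2j}$ by rotational invariance, the Gaussian representation $\bm w=\bm g/\Normt{\bm g}$ with independence of direction and radius, the exact formula
\[
  \E\,\bm w_1^{2j}=\frac{(2j-1)!!}{\prod_{i=0}^{j-1}(d+2i)}\,,
\]
and the elementary bounds $(2j-1)!!\le (2j)^{j}$ and $\prod_{i=0}^{j-1}(d+2i)\ge d^{j}$ give precisely $(k/d)^{k/2}$. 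The Beta-moment alternative you mention in passing works just as well; either route yields the same exact moment and the same final bound.
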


\begin{lemma}[Adapted from \cite{mao2021optimal}, Lemma 4.26] \label[lemma]{lem:MW21-lemma-4.26}
  For a noisy Bernoulli-Rademacher random variable $\bm x \sim \nBR{\rho}{\sigma}$, we have 
  \begin{enumerate}
    \item $\E h_k(\bm x)=0$ for odd $k\in\N$;
    \item $\E h_0(\bm x)=1$;
    \item $\E h_2(\bm x)=0$;
    \item $\bB{\E h_k(\bm x)}^2 \leq 8^k\rho^{2-k}$ for $k\geq4$ and $\sigma^2 \leq \frac{1}{k-1}$.
  \end{enumerate}
\end{lemma}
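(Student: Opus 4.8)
The plan is to dispatch the four claims in increasing order of difficulty. Parts~(1)--(3) are essentially bookkeeping: the mixture $\nBR{\rho}{\sigma}$ is symmetric about $0$ (its Gaussian component is symmetric and the two atoms $\pm1/\sqrt{\rho'}$ carry equal mass) while $h_k$ is an odd function for odd $k$, which gives~(1); part~(2) is just $h_0\equiv1$; and since $h_2(x)=(x^2-1)/\sqrt2$, part~(3) reduces to the identity $\E\bm x^2=1$, which is exactly what the normalization $\rho'=\rho/(1-(1-\rho)\sigma^2)$ enforces, namely $\E\bm x^2=(1-\rho)\sigma^2+\rho/\rho'=1$.

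For part~(4) I would expand the expectation over the three components; for even $k$ (the odd case being part~(1)) this reads $\E h_k(\bm x)=(1-\rho)\,\E_{\bm g\sim\Gau{0}{\sigma^2}}h_k(\bm g)+\rho\,h_k\bB{1/\sqrt{\rho'}}$, and I would bound the two terms separately. The Gaussian term has a closed form coming from the Hermite generating function $\sum_k He_k(x)\,t^k/k!=e^{tx-t^2/2}$ (recall $h_k=He_k/\sqrt{k!}$): taking $\E$ over $\bm g\sim\Gau{0}{\sigma^2}$ gives, for $k=2j$, $\E h_{2j}(\bm g)=(-1)^j 2^{-j}\sqrt{\binom{2j}{j}}\,(1-\sigma^2)^j$, whose absolute value is at most $1$ because $\binom{2j}{j}\le4^j$ and $\sigma^2\le\frac{1}{k-1}\le1$. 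For the atom term, the hypothesis $\sigma^2<1$ forces $1-(1-\rho)\sigma^2\in(0,1]$, hence $\rho\le\rho'\le1$, so the evaluation point $x:=1/\sqrt{\rho'}$ satisfies $1\le x\le\rho^{-1/2}$; then the complex-Gaussian representation $He_k(x)=\E_{\bm g\sim\Gau{0}{1}}(x+i\bm g)^k$ (with $i^2=-1$, which follows from the same generating function) yields $|He_k(x)|\le\E(x^2+\bm g^2)^{k/2}\le x^k\,\E(1+\bm g^2)^{k/2}$ for $x\ge1$, and the crude bounds $\E(1+\bm g^2)^{k/2}\le2^{k/2+1}(k-1)!!$ together with $(k-1)!!/\sqrt{k!}\le1$ (for even $k$) give $|h_k(x)|\le2^{k/2+1}x^k$.

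Combining the two estimates with $x^k\le\rho^{-k/2}$ and $\rho^{1-k/2}\ge1$,
\[
  \bigabs{\E h_k(\bm x)}\le(1-\rho)+\rho\cdot2^{k/2+1}\rho^{-k/2}\le1+2^{k/2+1}\rho^{1-k/2}\le2^{k/2+2}\rho^{1-k/2},
\]
and squaring gives $\bB{\E h_k(\bm x)}^2\le2^{k+4}\rho^{2-k}\le8^k\rho^{2-k}$ for every $k\ge4$, which is the claim. The only genuine content is the polynomial-growth bound for $h_k$ at the large argument $1/\sqrt{\rho'}$; the complex-Gaussian representation makes this painless (one only has to be sure that $1/\sqrt{\rho'}\ge1$, which is where $\sigma^2<1$ is used, so that $x^{k-2i}\le x^k$ may be applied termwise), and the generous constant $8$ absorbs all of the crude estimates. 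I therefore expect no serious obstacle: the argument parallels the proof of Lemma~4.26 in~\cite{mao2021optimal}, the main adaptation being the presence of the Gaussian noise component, which, as we have just seen, contributes at most $1$ in absolute value to $\E h_k(\bm x)$.
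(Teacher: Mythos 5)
Your proof is correct, but it takes a genuinely different route from the paper's. The paper bounds $\E h_k(\bm x)$ by writing $h_k$ out as a polynomial, estimating each moment $\E\bm x^r$ for even $r\le k$ by $2\rho^{1-k/2}$, and then controlling the resulting sum of absolute coefficients of $h_k$ via the telephone-number recurrence $T(n)=T(n-1)+(n-1)T(n-2)$ together with Stirling's formula. You instead split the expectation over the two mixture components: for the Gaussian part you use the Hermite generating function to obtain the exact closed form $\E_{\bm g\sim\Gau{0}{\sigma^2}}h_{2j}(\bm g)=(-1)^j2^{-j}\sqrt{\tbinom{2j}{j}}(1-\sigma^2)^j$, which is bounded by $1$ using only $\sigma^2\le1$; for the atomic part you invoke the complex-Gaussian representation $He_k(x)=\E_{\bm g\sim\Gau{0}{1}}(x+i\bm g)^k$ to get the polynomial-growth bound $|h_k(x)|\le 2^{k/2+1}x^k$ for $x\ge1$. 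This sidesteps the telephone numbers and Stirling entirely, and in fact yields the slightly sharper intermediate bound $\bB{\E h_k(\bm x)}^2\le 2^{k+4}\rho^{2-k}$ before relaxing to $8^k\rho^{2-k}$. Your argument also makes it visible that the full strength of $\sigma^2\le1/(k-1)$ is never used for the Gaussian term (only $\sigma^2\le1$ is needed there), whereas the paper's moment bound $\sigma^r(r-1)!!\le\sigma^2$ does rely on $\sigma^2\le1/(k-1)$. Both approaches are valid; yours is arguably cleaner, while the paper's is a more direct adaptation of the cited Lemma~4.26 of \cite{mao2021optimal}.
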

\begin{proof}
  Since the noisy Bernoulli-Rademacher distribution is symmetric and odd-degree Hermite polynomials are odd functions, one has $\E h_k(\bm x)=0$ for odd $k\in\N$. 
  It is straightforward to check by definition that 
  \[ \E h_0(\bm x)=1, \quad \E h_2(\bm x) = \frac{1}{\sqrt{2}}\expect{\bm x^2-1} = 0. \]
  Fix an even integer $k\geq4$ and let $\sigma^2 \leq \frac{1}{k-1}$. Then for any even integer $r\in[k]$,
  \[ \E\bm{x}^r 
  = (1-\rho)\sigma^r(r-1)!! + \rho(\rho')^{-r/2} 
  \leq \sigma^r(r-1)!! + \rho^{1-r/2} 
  \leq \sigma^2 + \rho^{1-k/2}
  \leq 2\rho^{1-k/2}. \]
  Also, $\E\bm{x}^0 = 1 \leq 2\rho^{1-k/2}$.
  Let $c_r$ be the coefficient of $z^r$ in the polynomial $\sqrt{k!}\cdot h_k(z)$. Then,
  \[ \abs{\E h_k(\bm x)} 
  = \frac{1}{\sqrt{k!}} \Abs{\sum_{r=0}^{k} c_r \E\bm{x}^r}
  \leq \frac{2\rho^{1-k/2}}{\sqrt{k!}} \sum_{r=0}^{k} \Abs{c_r}. \]
  Define $T(k) := \sum_{r=0}^{k} \Abs{c_r}$. 
  Note that $T(k)$ is the $k$-th telephone number which satisfies the following recurrence,
  \[ T(n) = T(n-1) + (n-1)\cdot T(n-2) \quad \forall n\geq2, \] 
  and $T(0)=T(1)=1$.
  It is easy to show by induction that,
  \[ T(n) \leq C^n n^{n/2}, \quad \forall n\geq1, \quad \forall C\geq\frac{1+\sqrt{5}}{2}. \]
  Now fix some $C\geq\frac{1+\sqrt{5}}{2}$. Using Stirling's approximation, $n!\geq\sqrt{2\pi n}(n/e)^n$ for any $n\geq1$, we have
  \begin{align*}
    \bB{\E h_k(\bm x)}^2
    \leq 4 \rho^{2-k} \cdot \frac{T(k)^2}{k!}
    \leq \frac{4}{\sqrt{2\pi k}} \bB{C^2e}^k \rho^{2-k}.
  \end{align*}
  Therefore, for $k\geq4$ and $\sigma^2 \leq \frac{1}{k-1}$, we have
  \[ \bB{\E h_k(\bm x)}^2 \leq 8^k \rho^{2-k}. \]
\end{proof}

Now we are ready to prove \cref{lem:bounded-chi-squared-divergence}.

\begin{proof}
  Let $\bm x \sim \nBR{\rho}{\sigma}$ be a noisy Bernoulli-Rademacher random variable.
  Given $\ap\in\N^n$, if there exists $i\in[n]$ such that $\ap_i$ is odd or $\ap_i=2$, then $\E h_{\ap_i}(\bm x)=0$ by \cref{lem:MW21-lemma-4.26}.
  Thus, we define the following set
  \[ S(k,m) := \Set{\ap\in\N^n : |\ap|=k, \Norm{\ap}_0=m, \ap_i\in\Set{0}\cup\Set{4,6,8,...} \text{ for all } i\in[n]}. \]
  As $\sigma^2 \leq \frac{1}{2}\bB{\log n}^{-C}$ and $D \leq \bB{\log n}^{C}$, we have $\sigma^2\leq1/(k-1)$ for any $k\leq D$.
  Using \cref{lem:MW21-lemma-4.26}, for $\ap\in S(k,m)$, we have
  \[ \prod_{i=1}^{n} \bB{\E h_{\ap_i}(\bm x)}^2 
  \leq \prod_{\ap_i\neq0} 8^{\ap_i} \rho^{2-\ap_i}
  \leq 8^{k} \rho^{2m-k}. \]
  Note that $S(k,m)$ is empty if $m>\floor{k/4}$. 
  And it is easy to see 
  \[ \Abs{S(k,m)} 
  \leq {n \choose m} m^{k/2}
  \leq n^m \bB{k/4}^{k/2}. \]
  Then, for $k\geq4$, we have
  \begin{align*}
    \sum_{\substack{\ap\in\N^n \\ |\ap|=k}} \prod_{i=1}^{n} \bB{\E h_{\ap_i}(\bm x)}^2
    &= \sum_{m=1}^{\floor{k/4}} \sum_{\ap\in S(k,m)} \prod_{i=1}^{n} \bB{\E h_{\ap_i}(\bm x)}^2 
      \leq \sum_{m=1}^{\floor{k/4}} n^m \bB{k/4}^{k/2} 8^k \rho^{2m-k} \\
    &= \bB{k/4}^{k/2} 8^k \rho^{-k} \frac{\bB{n\rho^2}^{\floor{k/4}+1}-n\rho^2}{n\rho^2-1} 
      \leq \bB{k/4}^{k/2} 8^k \rho^{-k} \frac{\bB{n\rho^2}^{k/4+1}}{n\rho^2/2} \\
    &\leq 2\cdot k^{k/2} n^{k/4} \rho^{-k/2} 4^k.
  \end{align*}
  Let $\bm u$ and $\bm u'$ be independent uniformly random vectors on the unit sphere in $\R^d$.
  Using \cref{lem:MW21-lemma-4.25}, for $k\geq4$, we have
  \[ \E\inner{\bm u}{\bm u'}^k \sum_{\substack{\ap\in\N^n \\ |\ap|=k}} \prod_{i=1}^{n} \bB{\E h_{\ap_i}(\bm x)}^2 
  \leq (k/d)^{k/2} \cdot 2\cdot k^{k/2} n^{k/4} \rho^{-k/2} 4^k 
  = \bB{\frac{512k^4 n}{d^2 \rho^2}}^{k/4}. \]
  Finally, by \cref{lem:MW21-lemma-4.23}, we have
  \begin{align*}
    \E_{\nu} \sB{L^{\leq D}(\bm A)^2} 
    &= \sum_{k=0}^{D} \E\inner{\bm u}{\bm u'}^k \sum_{\substack{\ap\in\N^n \\ |\ap|=k}} \prod_{i=1}^{n} \bB{\E h_{\ap_i}(\bm x)}^2 
      = 1 + \sum_{k\geq4}^{D} \E\inner{\bm u}{\bm u'}^k \sum_{\substack{\ap\in\N^n \\ |\ap|=k}} \prod_{i=1}^{n} \bB{\E h_{\ap_i}(\bm x)}^2 \\
    &\leq 1 + \sum_{k\geq4}^{D} \bB{\frac{512k^4 n}{d^2 \rho^2}}^{k/4}
      \leq 1 + \sum_{k\geq4}^{\infty} \bB{\frac{512n\bB{\log n}^{4C}}{d^2 \rho^2}}^{k/4}.
  \end{align*}
  If there exists a constant $c\in(0,1)$ such that
  $\frac{512n\bB{\log n}^{4C}}{d^2 \rho^2} \leq c$, i.e. $d\geq\sqrt{512/c}\sqrt{n}\bB{\log n}^{2C}$,
  then we have 
  \[ \E_{\nu} \sB{L^{\leq D}(\bm A)^2} \leq O(1). \]
\end{proof}

\section*{Acknowledgments}
We thank David Steurer and Zhihan Jin for helpful discussions.

\phantomsection
\addcontentsline{toc}{section}{Bibliography}
\bibliographystyle{amsalpha}
\bibliography{bib/custom}

\clearpage
\appendix
\section{Concentration bounds}

\begin{theorem}[Chernoff bound] \label{thm:Chernoff}
  Let $\bm X_1, ..., \bm X_n$ be independent Bernoulli ranodm variables with parameter $p$.
  Then for any $t\in[0,np]$,
  \[ \prob{\Abs{\sum_{i=1}^{n}\bm X_i-np} \geq t} \leq 2\exp\bB{-\frac{t^2}{3np}}. \]
\end{theorem}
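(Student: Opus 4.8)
The plan is to use the standard exponential–moment (Chernoff) method, treating the upper and lower deviations separately and combining them by a union bound to produce the factor $2$. Write $\bm S = \sum_{i=1}^{n}\bm X_i$ and $\mu = np$. It suffices to prove $\prob{\bm S - \mu \geq t} \leq \exp\Paren{-t^2/(3np)}$ and $\prob{\mu - \bm S \geq t} \leq \exp\Paren{-t^2/(3np)}$ for every $t\in[0,np]$; adding the two bounds then gives the claim.

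For the upper tail I would fix $\theta>0$ and apply Markov's inequality to $e^{\theta\bm S}$, using independence to factor $\E e^{\theta\bm S} = \prod_{i=1}^n \E e^{\theta\bm X_i}$ together with the elementary estimate $\E e^{\theta\bm X_i} = 1 + p(e^{\theta}-1) \leq \exp\Paren{p(e^{\theta}-1)}$. This yields $\prob{\bm S \geq \mu+t} \leq \exp\Paren{\mu(e^{\theta}-1) - \theta(\mu+t)}$. Optimizing over $\theta$ — that is, taking $e^{\theta} = 1+\delta$ with $\delta := t/\mu \in [0,1]$ — gives $\prob{\bm S \geq \mu+t} \leq \exp\Paren{-\mu\,\Paren{(1+\delta)\log(1+\delta)-\delta}}$. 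The lower tail is handled symmetrically with $\theta<0$ (equivalently $e^{\theta}=1-\delta$), producing $\prob{\mu-\bm S \geq t} \leq \exp\Paren{-\mu\,\Paren{(1-\delta)\log(1-\delta)+\delta}}$.

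The remaining ingredient is the pair of one-variable inequalities $(1+\delta)\log(1+\delta)-\delta \geq \delta^2/3$ and $(1-\delta)\log(1-\delta)+\delta \geq \delta^2/2 \geq \delta^2/3$, valid for $\delta\in[0,1]$; both are routine calculus (differentiate twice on $[0,1]$, or compare power series). Substituting $\delta = t/(np)$ turns each exponent into $-t^2/(3np)$, and the union bound finishes the argument. There is no genuine conceptual obstacle here — the only mildly delicate point is checking the elementary inequalities with the stated constants (in particular that $1/3$ is admissible on all of $[0,1]$, which is exactly why the hypothesis is restricted to $t\leq np$, i.e. $\delta\leq 1$).
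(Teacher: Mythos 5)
Your argument is correct: the exponential-moment method, optimization at $e^{\theta}=1\pm\delta$ with $\delta=t/(np)$, and the two elementary inequalities $(1+\delta)\log(1+\delta)-\delta\geq\delta^2/3$ and $(1-\delta)\log(1-\delta)+\delta\geq\delta^2/2\geq\delta^2/3$ on $[0,1]$ all check out (and the restriction $\delta\leq1$ is indeed needed, since the first inequality fails around $\delta\approx1.9$). The paper simply states this Chernoff bound in its appendix on concentration inequalities without proof, treating it as standard, so there is no in-paper argument to compare against; your proof is a valid and standard derivation of the stated result.
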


\begin{theorem} \label{thm:Gaussian-vector-norm-concentrate}
  Let $\bm v\sim\Gau{0}{\Id_n}$. Then for any $t\geq0$, one has
  \[ \prob{\norm{\bm v}_2 \geq \sqrt{n}+t} \leq \exp(-t^2/2). \]  
\end{theorem}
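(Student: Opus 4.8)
The plan is to realize $\norm{\cdot}_2$ as a $1$-Lipschitz function on $\R^n$ and invoke Gaussian concentration of measure, together with the elementary estimate $\E\norm{\bm v}_2\leq\sqrt n$.

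First I would observe that, by the reverse triangle inequality, $\bigabs{\norm{u}_2-\norm{v}_2}\leq\norm{u-v}_2$ for all $u,v\in\R^n$, so $f(v):=\norm{v}_2$ is $1$-Lipschitz with respect to the Euclidean metric. Next I would apply the Gaussian concentration inequality for Lipschitz functions (the Borell--TIS inequality, which follows from the Gaussian isoperimetric inequality, or from the Gaussian log-Sobolev inequality via Herbst's argument): for any $1$-Lipschitz $f:\R^n\to\R$ and $\bm v\sim\Gau{0}{\Id_n}$ one has $\Pr\Paren{f(\bm v)\geq\E f(\bm v)+t}\leq\exp(-t^2/2)$ for every $t\geq0$. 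Finally, since $\E\norm{\bm v}_2^2=\sum_{i=1}^n\E\bm v_i^2=n$, Jensen's inequality yields $\E\norm{\bm v}_2\leq\sqrt{\E\norm{\bm v}_2^2}=\sqrt n$; hence $\Set{\norm{\bm v}_2\geq\sqrt n+t}\subseteq\Set{\norm{\bm v}_2\geq\E\norm{\bm v}_2+t}$, and applying the concentration inequality to $f=\norm{\cdot}_2$ gives $\Pr\Paren{\norm{\bm v}_2\geq\sqrt n+t}\leq\exp(-t^2/2)$, as desired.

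There is essentially no obstacle here, since the statement is classical; the only non-elementary input is the Gaussian Lipschitz-concentration inequality, which I would simply cite from a standard reference. If one wanted a self-contained argument, an alternative is a Chernoff bound on $\norm{\bm v}_2^2=\sum_i\bm v_i^2$ via the moment generating function of a chi-squared random variable, but that produces a bound of the shape $\Pr\Paren{\norm{\bm v}_2^2\geq n+s}\leq\exp\Paren{-cs^2/n}$ in the small-deviation regime (and $\exp(-cs)$ in the large-deviation regime), which takes some extra bookkeeping to convert into the clean form $\exp(-t^2/2)$ with sharp constant — so the Lipschitz-concentration route is the cleaner path.
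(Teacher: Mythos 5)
Your proof is correct and is the standard argument. The paper simply states this bound without proof in an appendix of collected concentration inequalities, so there is no paper-side proof to compare against. Your route — realizing $\norm{\cdot}_2$ as a $1$-Lipschitz function, applying Gaussian concentration for Lipschitz functions, and replacing $\E\norm{\bm v}_2$ by its upper bound $\sqrt{n}$ via Jensen — is exactly the argument one would cite, and you are right that the chi-squared/Chernoff route gives the wrong shape of tail with worse constants without extra work, so the Lipschitz approach is the cleanest.
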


\begin{theorem} \label{thm:Gaussian-matrix-singular-values-concentrate}
  Let $\bm A\sim\Gau{0}{1}^{n\times d}$.
  Then for any $t\geq0$, with probability at least $1-2\exp(-t^2/2)$,
  \[ \sqrt{n}-\sqrt{d}-t \leq \sigma_{\min}(\bm A) \leq \sigma_{\max}(\bm A) \leq \sqrt{n}+\sqrt{d}+t. \]
\end{theorem}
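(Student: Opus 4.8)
This is the classical non-asymptotic singular-value estimate of Davidson and Szarek, and I would prove it in the usual two stages: (i) bound the \emph{expectations} $\E\,\sigma_{\min}(\bm A)$ and $\E\,\sigma_{\max}(\bm A)$ by $\sqrt n-\sqrt d$ and $\sqrt n+\sqrt d$ via a Gaussian comparison inequality, and (ii) upgrade these mean bounds to the stated tail estimate by Gaussian concentration of measure plus a union bound. I would carry out (ii) first since it is routine and fixes the structure, then fill in (i).

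For the reduction: identify $\bm A$ with a standard Gaussian vector in $\R^{n\times d}$ under the Frobenius (= Euclidean) inner product. Weyl's perturbation inequality for singular values gives $\bigabs{\sigma_{\max}(M)-\sigma_{\max}(M')}\le\normop{M-M'}\le\normf{M-M'}$, and the same for $\sigma_{\min}$, so both $M\mapsto\sigma_{\max}(M)$ and $M\mapsto\sigma_{\min}(M)$ are $1$-Lipschitz. The Gaussian Lipschitz concentration inequality (Borell--TIS) then yields, for every $t\ge0$,
\[
  \Pr\bigl[\sigma_{\max}(\bm A)\ge\E\,\sigma_{\max}(\bm A)+t\bigr]\le e^{-t^2/2},\qquad
  \Pr\bigl[\sigma_{\min}(\bm A)\le\E\,\sigma_{\min}(\bm A)-t\bigr]\le e^{-t^2/2}.
\]
A union bound over these two events, together with the mean bounds, gives $\sqrt n-\sqrt d-t\le\sigma_{\min}(\bm A)\le\sigma_{\max}(\bm A)\le\sqrt n+\sqrt d+t$ off an event of probability at most $2e^{-t^2/2}$, which is exactly the claim.

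For the mean bounds (assume $n\ge d$; otherwise the $\sigma_{\min}$ bound is vacuous and $\sigma_{\max}$ follows by transposition), write $\sigma_{\max}(M)=\sup_{u\in S^{n-1},\,v\in S^{d-1}}\iprod{u,Mv}$ and $\sigma_{\min}(M)=\inf_{v\in S^{d-1}}\sup_{u\in S^{n-1}}\iprod{u,Mv}$, and introduce the auxiliary Gaussian process $Y_{u,v}=\iprod{\bm g,u}+\iprod{\bm h,v}$ with $\bm g\sim\Gau{0}{\Id_n}$, $\bm h\sim\Gau{0}{\Id_d}$ independent. The one computation that matters is the increment variance: $\E\bigl(\iprod{u,\bm A v}-\iprod{u',\bm A v'}\bigr)^2=2-2\iprod{u,u'}\iprod{v,v'}$ while $\E(Y_{u,v}-Y_{u',v'})^2=4-2\iprod{u,u'}-2\iprod{v,v'}$, and their difference is $2\,(1-\iprod{u,u'})(1-\iprod{v,v'})\ge0$, with equality precisely when $u=u'$. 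This is exactly the hypothesis of the Sudakov--Fernique inequality for the unrestricted supremum and of Gordon's Gaussian min--max comparison theorem for the $\inf_v\sup_u$ functional, so $\E\,\sigma_{\max}(\bm A)\le\E\sup_{u,v}Y_{u,v}=\E\norm{\bm g}_2+\E\norm{\bm h}_2\le\sqrt n+\sqrt d$ and $\E\,\sigma_{\min}(\bm A)\ge\E\inf_v\sup_uY_{u,v}=\E\norm{\bm g}_2-\E\norm{\bm h}_2\ge\sqrt n-\sqrt d$, the last step being the standard estimates on the expected norms of Gaussian vectors.

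The main obstacle is the correct invocation of Gordon's theorem for the \emph{restricted} minimum over $v\in S^{d-1}$ appearing in the $\sigma_{\min}$ bound: the $\sigma_{\max}$ half is only Sudakov--Fernique (a textbook comparison), whereas the $\sigma_{\min}$ half requires the two-sided increment conditions of Gordon's min--max theorem and a little care to land the sharp constant $\sqrt n-\sqrt d$ rather than merely $\Omega(\sqrt n)-O(\sqrt d)$. Since this theorem is used in the paper only in the regime $n\gg d$ (where one just needs $\sigma_{\min}(\bm A)=(1-o(1))\sqrt n$), one could alternatively bypass the comparison argument and cite the Davidson--Szarek estimate directly.
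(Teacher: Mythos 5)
The paper states this theorem without proof: it is the classical Davidson--Szarek singular value concentration bound, placed in the appendix as background material. So there is no paper proof to compare against, and your offer to simply cite the result is exactly what the paper does (and, as you note, the paper only uses the estimate in the regime $n\gg d$, where the weaker conclusion already follows from the sub-Gaussian bound stated immediately below it, \cref{thm:sub-Gaussian-matrix-singular-values-concentrate}). Your reconstruction of the standard argument is structurally correct: the reduction to expectations via $1$-Lipschitzness of $\sigma_{\min},\sigma_{\max}$ in Frobenius norm plus Borell--TIS plus a union bound is right, and the Sudakov--Fernique bound on $\E\,\sigma_{\max}$ with the auxiliary process $Y_{u,v}=\iprod{\bm g,u}+\iprod{\bm h,v}$ is also right, including the increment computation $\E(X-X')^2-\E(Y-Y')^2=-2(1-\iprod{u,u'})(1-\iprod{v,v'})\le 0$.

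There is, however, a real gap in the way you invoke Gordon for the $\sigma_{\min}$ half, and it is larger than ``a little care.'' First, the side remark ``with equality precisely when $u=u'$'' is not an iff: the difference vanishes when $u=u'$ \emph{or} $v=v'$. More importantly, the increment inequality you verified --- $\E(X-X')^2\le\E(Y-Y')^2$ for \emph{all} pairs --- is not the hypothesis of Gordon's min--max comparison theorem. In the standard formulations (Gordon 1985; Ledoux--Talagrand, Thm.~3.16; Vershynin, Ex.~7.2.15), writing $v$ for the inf-variable and $u$ for the sup-variable, one needs \emph{two-sided} conditions: $\E(X_{v,u}-X_{v,u'})^2\le\E(Y_{v,u}-Y_{v,u'})^2$ when the inf-variable is held fixed, but $\E(X_{v,u}-X_{v',u'})^2\ge\E(Y_{v,u}-Y_{v',u'})^2$ when $v\neq v'$. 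Your computation gives $\le$ in \emph{both} cases (with equality at $v=v'$), so the second condition is violated in the very case it must hold. In other words, the check you performed only establishes the Slepian/Sudakov--Fernique hypothesis, not Gordon's, and the naive auxiliary process $Y_{v,u}=\iprod{\bm g,u}+\iprod{\bm h,v}$ does not satisfy Gordon's increment conditions on the spheres even after adding the usual compensating Gaussian $\|u\|\|v\|\gamma$ to $X$. The conclusion $\E\,\sigma_{\min}(\bm A)\ge\E\normt{\bm g}-\E\normt{\bm h}$ is nonetheless a theorem (Gordon's Corollary 1.2 / the Chevet--Gordon min--max bound), but its proof is a more delicate comparison than a direct application of the min--max theorem with this auxiliary process. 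You do partially flag this difficulty in the last paragraph, and your fallback of citing Davidson--Szarek directly is correct and matches what the paper does; just be aware that the parenthetical ``this is exactly the hypothesis \dots of Gordon's theorem'' is not accurate as written.
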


\begin{definition}[Sub-Gaussian norm] \label[definition]{def:sub-Gaussian-norm}
  The sub-Gaussian norm of a $d$-dimensional random vector $\bm x$ is defined by
  \[ \Orlicz{2}{\bm x} := \sup_{\substack{v\in\R^d \\ \norm{v}_2=1}} \inf\Set{t>0 : \E\exp\bB{\frac{\inner{\bm x}{v}^2}{t^2}} \leq 2}. \]
\end{definition}

\begin{theorem}[\cite{vershynin2018high}, Theorem 4.6.1] \label{thm:sub-Gaussian-matrix-singular-values-concentrate}
  Let $\bm A$ be an $n \times d$ random matrix with independent rows $\bm A_1, ..., \bm A_n$. 
  Suppose $\bm A_i$'s have zero mean, identity covariance matrix, and $K := \max_{i\in[n]} \Orlicz{2}{\bm A_i} < \infty$ (see \cref{def:sub-Gaussian-norm}). 
  Then for any $t\geq0$, with probability at least $1-2\exp(-t^2)$,
  \[ \sqrt{n}-CK^2\bB{\sqrt{d}+t} \leq \sigma_{\min}(\bm A) \leq \sigma_{\max}(\bm A) \leq \sqrt{n}+CK^2\bB{\sqrt{d}+t}, \]
  where $C>0$ is an absolute constant.
\end{theorem}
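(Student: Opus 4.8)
The plan is to run the classical $\epsilon$-net plus Bernstein argument for the sample second-moment matrix. It suffices to prove that, with probability at least $1-2\exp(-t^2)$,
\[
  \Normop{\tfrac1n\transpose{\bm A}\bm A-\Id}\;\leq\;\max\bigl(\delta,\delta^2\bigr),
  \qquad \delta:=C'K^2\Bigl(\sqrt{d/n}+t/\sqrt n\Bigr),
\]
for a suitable absolute constant $C'$. Granting this, the theorem follows from elementary estimates: $\sigma_{\max}(\bm A)^2/n=\lmax(\tfrac1n\transpose{\bm A}\bm A)\leq 1+\max(\delta,\delta^2)$ and $\sigma_{\min}(\bm A)^2/n=\lmin(\tfrac1n\transpose{\bm A}\bm A)\geq 1-\max(\delta,\delta^2)$, combined with the inequality $\sqrt{1+\max(\delta,\delta^2)}\leq 1+\delta$ (valid for all $\delta\geq0$) for the upper bound, and with $\sqrt{1-\delta}\geq 1-\delta$ — or triviality of the bound when $\delta\geq1$ — for the lower bound; in both cases the error term equals $\sqrt n\,\delta=C'K^2(\sqrt d+t)$, which matches the statement after absorbing absolute constants into $C$.

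First I would fix a unit vector $x\in\R^d$ and write $\tfrac1n\Norm{\bm A x}_2^2-1=\tfrac1n\sum_{i=1}^n\bigl(\iprod{\bm A_i,x}^2-1\bigr)$. By hypothesis each $\iprod{\bm A_i,x}$ is mean zero with unit variance (identity covariance) and $\Orlicz{2}{\iprod{\bm A_i,x}}\leq K$, hence $\iprod{\bm A_i,x}^2-1$ is centered and sub-exponential with $\psi_1$-norm at most $cK^2$. The general Bernstein inequality for sums of independent sub-exponentials then gives, for every $s\geq0$,
\[
  \Pr\Bigl(\Bigabs{\tfrac1n\Norm{\bm A x}_2^2-1}\geq s\Bigr)
  \;\leq\;2\exp\Bigl(-c'n\min\bigl(s^2/K^4,\,s/K^2\bigr)\Bigr),
\]
with absolute constants $c,c'>0$.

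Next I would take a $\tfrac14$-net $\cN$ of the sphere $S^{d-1}$ with $\card{\cN}\leq 9^d$, apply a union bound over $\cN$ in the last display, and choose $s$ to be the smallest value for which $c'n\min(s^2/K^4,s/K^2)\geq d\log9+t^2+\log2$. Splitting the minimum into its two branches yields $s\asymp K^2\max(u,u^2)$ with $u\asymp\sqrt{d/n}+t/\sqrt n$, and this is at most $\max(\delta,\delta^2)$ once $C'$ is taken large enough; so off an event of probability $\leq 2\exp(-t^2)$ we have $\bigabs{\transpose x(\tfrac1n\transpose{\bm A}\bm A-\Id)x}\leq\max(\delta,\delta^2)$ for every $x\in\cN$. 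Finally, the standard transfer estimate $\Normop{M}\leq 2\max_{x\in\cN}\bigabs{\transpose x Mx}$, valid for any symmetric $M$ and any $\tfrac14$-net $\cN$, upgrades this to a bound on all of $S^{d-1}$ at the cost of a factor $2$ absorbed into $C'$. This establishes the displayed inequality and hence the theorem.

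The main obstacle is purely bookkeeping around the sub-exponential tail: because the square of a sub-gaussian variable is only sub-exponential, Bernstein's bound has two regimes, which is exactly why one ends up with the $\max(\delta,\delta^2)$ shape rather than a clean additive deviation, and one must check that after converting back (via $\sqrt{1+\max(\delta,\delta^2)}\leq 1+\delta$) the final error collapses to the single term $CK^2(\sqrt d+t)$, using that the unit-variance assumption forces $K\geq c_0$ for an absolute constant $c_0$. A matrix-Bernstein route — writing $\tfrac1n\transpose{\bm A}\bm A-\Id=\tfrac1n\sum_i(\dyad{\bm A_i}-\Id)$ — is also possible but messier, since the summands are unbounded and need a sub-exponential matrix concentration bound; the scalar net argument above is the cleanest path.
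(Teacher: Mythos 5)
Your proof is correct, but note that the paper does not actually prove this statement: it is imported verbatim as Theorem~4.6.1 from Vershynin's \emph{High-Dimensional Probability}, so there is no in-paper argument to compare against. The route you took — reduce to bounding $\bignorm{\tfrac1n\transpose{\bm A}\bm A-\Id}_{\mathrm{op}}$, concentrate $\iprod{\bm A_i,x}^2-1$ via Bernstein for sub-exponential variables, union-bound over a $\tfrac14$-net of $S^{d-1}$, and transfer back to the operator norm with the factor-$2$ net lemma, then convert $\max(\delta,\delta^2)$ to an additive singular-value deviation — is exactly the proof given in the cited reference, and your bookkeeping (the two Bernstein regimes, the $\sqrt{1+\max(\delta,\delta^2)}\leq 1+\delta$ step, the use of $K\geq c_0$ from unit variance to absorb constants) is all sound.
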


\begin{theorem}[Well-spreadness of sub-Gaussian matrices] \label{thm:sub-Gaussian-matrix-WS}
  Let $\bm A$ be an $n \times d$ random matrix with independent rows $\bm A_1, ..., \bm A_n$. 
  Suppose $\bm A_i$'s have zero mean, identity covariance, and $K:=\max_{i\in[n]} \Orlicz{2}{\bm A_i} \leq O(1)$ (see \cref{def:sub-Gaussian-norm}). 
  Then there exist absolute constants $c_1,c_1,c_3\in(0,1)$ such that $\bm A$ is $\bB{c_1n, c_2}$-spread with probability at least $1-\exp\bB{-\LB{n}}$ for $d \leq c_3n$.
\end{theorem}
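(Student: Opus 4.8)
The plan is to deduce the well-spread property from uniform control of extreme singular values, invoking \cref{thm:sub-Gaussian-matrix-singular-values-concentrate} twice: once for $\bm A$ itself and once, via a union bound, for each of its row-submatrices. Write $C$ for the absolute constant in that theorem; since $K=O(1)$, the product $CK^2$ is a fixed constant. I will take $m=c_1 n$ with $c_1$ pinned down at the very end. Because $\Normt{v_S}\le\Normt{v_{S'}}$ whenever $S\subseteq S'$, it suffices to produce a constant $c_2<1$ such that, with probability $1-\exp(-\LB{n})$, every $u\in\R^d$ and every $S\subseteq[n]$ with $|S|=m$ obey $\Normt{(\bm A u)_S}\le c_2\Normt{\bm A u}$; this is exactly the assertion that $\col{\bm A}$ — equivalently $\bm A$ — is $\bB{m,c_2}$-spread.

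First I would lower-bound the denominator. Applying \cref{thm:sub-Gaussian-matrix-singular-values-concentrate} to $\bm A$ with deviation parameter $t=\eps_1\sqrt n$ gives, with probability $1-2\exp(-\eps_1^2 n)$, that $\sigma_{\min}(\bm A)\ge\sqrt n-CK^2(\sqrt d+\eps_1\sqrt n)\ge\bigl(1-CK^2(\sqrt{c_3}+\eps_1)\bigr)\sqrt n$, using $d\le c_3 n$. Since $CK^2$ is fixed, taking $c_3$ and $\eps_1$ small makes $\theta:=CK^2(\sqrt{c_3}+\eps_1)<1$, so on this event $\Normt{\bm A u}\ge(1-\theta)\sqrt n\,\Normt u$ for every $u$.

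Next I would upper-bound the numerator. For a \emph{fixed} $S$ with $|S|=m$, the submatrix $\bm A_S$ formed by those rows is $m\times d$ with independent, mean-zero, identity-covariance rows of $\psi_2$-norm at most $K$, so \cref{thm:sub-Gaussian-matrix-singular-values-concentrate} applies to it: with deviation $t=\eps_2\sqrt n$, except on an event of probability $2\exp(-\eps_2^2 n)$ one has $\sigma_{\max}(\bm A_S)\le\sqrt m+CK^2(\sqrt d+\eps_2\sqrt n)\le\bigl(\sqrt{c_1}+CK^2(\sqrt{c_3}+\eps_2)\bigr)\sqrt n$, and hence $\Normt{(\bm A u)_S}=\Normt{\bm A_S u}\le\sigma_{\max}(\bm A_S)\Normt u$. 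Then I would union-bound over the $\binom{n}{m}\le(e/c_1)^{c_1 n}$ choices of $S$, for a total failure probability at most $2\exp\bigl((c_1\log(e/c_1)-\eps_2^2)n\bigr)$. The point that makes this go through is that $c_1\log(e/c_1)\to0$ as $c_1\to0$: having already fixed $\eps_2$, one picks $c_1$ small enough that the exponent is negative, which at the same time makes $\sqrt{c_1}$ negligible.

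Intersecting the two events (which then holds with probability $1-\exp(-\LB{n})$), for all $u$ and all $|S|=m$,
\[
  \frac{\Normt{(\bm A u)_S}}{\Normt{\bm A u}}
  \ \le\ \frac{\sigma_{\max}(\bm A_S)}{\sigma_{\min}(\bm A)}
  \ \le\ \frac{\sqrt{c_1}+CK^2\bigl(\sqrt{c_3}+\eps_2\bigr)}{1-CK^2\bigl(\sqrt{c_3}+\eps_1\bigr)}\ =:\ c_2 ,
\]
and choosing $\eps_1,\eps_2,c_3$ small and then $c_1$ small (in that order) forces $c_2<1$ — in fact one can arrange $c_2\le\tfrac12$ — which finishes the proof. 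I expect the main obstacle to be precisely this calibration of constants in the union-bound step: the entropy $\log\binom{n}{m}$ of choosing a coordinate subset must be swallowed by the $\exp(-\eps_2^2 n)$ tail, and this is what forces $m$ to be a \emph{small} constant fraction of $n$ (rather than, say, $n/2$); the singular-value estimates themselves are off-the-shelf from \cref{thm:sub-Gaussian-matrix-singular-values-concentrate}.
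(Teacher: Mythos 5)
Your argument is correct and is essentially the same as the paper's: both apply \cref{thm:sub-Gaussian-matrix-singular-values-concentrate} once to lower-bound $\sigma_{\min}(\bm A)$ and once (with a union bound over all $\binom{n}{m}$ row-subsets) to upper-bound $\sigma_{\max}(\bm A_S)$, then calibrate $c_1,c_3$ and the deviation parameters so that the ratio of singular values is a constant below $1$ and the $\binom{n}{m}$ entropy term is dominated by the exponential tail. The only differences are cosmetic (your $\eps_1,\eps_2$ play the role of the paper's $\sqrt{c_4},\sqrt{c_5}$).
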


\begin{proof}
  In the following, $c_1,c_2,c_3,c_4,c_5\in(0,1)$ are sufficiently small constants that only depend on $K$ and the absolute constant $C$ in \cref{thm:sub-Gaussian-matrix-singular-values-concentrate}.
  Suppose $d\leq c_3n$ and let $k=c_1n$. 
  We will show that, \whp, for any nonzero $v\in\R^{d}$ and any $S\subset[n]$ with $|S|=k$, one has $\Norm{\bm A_Sv}_2 \leq c_2\Norm{\bm Av}_2$, where $\bm A_S$ is a $|S|\times d$ submatrix of $\bm A$ with rows indexed by $S$. 
  
  Fix a set $S\subset[n]$ with $|S|=k$. By \cref{thm:sub-Gaussian-matrix-singular-values-concentrate}, with probability at least $1-2\exp\bB{-c_4n}$,
  \[ 
    \sigma_{\max}(\bm A_S) 
    \leq \sqrt{k} + C'\bB{\sqrt{d}+\sqrt{c_4n}} 
    \leq \bB{\sqrt{c_1} + C'\bB{\sqrt{c_3}+\sqrt{c_4}}} \sqrt{n}, 
  \]
  where $C' = CK^2$ is a constant.
  Using ${n \choose k} \leq \bB{\frac{en}{k}}^k$ and applying union bound, we have with probability at least $1-2\exp\hB{\bB{-c_4+c_1(1-\log{c_1})}n}$ that,
  \[ 
    \sigma_{\max}(\bm A_S)
    \leq \bB{\sqrt{c_1} + C'\bB{\sqrt{c_3}+\sqrt{c_4}}} \sqrt{n}
  \]
  for any $S\subset[n]$ with $|S|=k$.
  Using \cref{thm:sub-Gaussian-matrix-singular-values-concentrate} again, we have
  \[ 
    \sigma_{\min}(\bm A) 
    \geq \sqrt{n}-C'\bB{\sqrt{d}+\sqrt{c_5n}}
    \geq \bB{1 - C'\bB{\sqrt{c_3}+\sqrt{c_5}}} \sqrt{n}
  \]
  with probability at least $1-2\exp\bB{-c_5n}$.
  
  Given any constant $C'>0$, we can always choose sufficiently small constants $c_1,c_2,c_3,c_4,c_5\in(0,1)$ such that 
  (i) $\frac{\sqrt{c_1} + C'\bB{\sqrt{c_3}+\sqrt{c_4}}}{1-C'\bB{\sqrt{c_3}+\sqrt{c_5}}} \leq c_2$ and
  (ii) $-c_4+c_1(1-\log{c_1})<0$.
  Then with probability at least $1-\exp\bB{-\LB{n}}$, one has for any nonzero $v\in\R^{d}$ and any $S\subset[n]$ with $|S|=k$ that,
  \[ \frac{\Norm{\bm A_Sv}_2}{\Norm{\bm Av}_2} 
  \leq \frac{\sigma_{\max}(\bm A_S)}{\sigma_{\min}(\bm A)} \leq c_2. \]
\end{proof}

\begin{remark}
  For a random matrix with \iid standard Gaussian or Rademacher random variables, it is easy to check $K\leq O(1)$.
\end{remark}
\section{NP-hardness of deciding well-spreadness} \label{sec:NP-hardness-deciding-well-spreadness}

We prove \cref{thm:decide-spreadness-is-np-hard} that shows deciding whether a matrix satisfies a given well-spreadness condition is NP-hard.
To cope with computational complexity issues with numbers, we will assume all input numbers to be rational.
For a rational number $r\in\Q$, let $\aB{r}$ denote its encoding length, i.e. the length of its representation.
For a rational matrix $A\in\Q^{n \times d}$, let $\aB{A} := \sum_{i=1}^{n} \sum_{j=1}^{d} \aB{A_{ij}}$ denote its encoding length.

\begin{problem} \label[problem]{problem:decide-well-spreadness}
  Given as input $A\in\Q^{n\times d}$, $m\in[n]$, and $\delta\in\Q$, decide whether $A$ is $(m,\delta)$-spread.
\end{problem}

\begin{theorem} \label{thm:decide-spreadness-is-np-hard}
  \cref{problem:decide-well-spreadness} is NP-hard.
\end{theorem}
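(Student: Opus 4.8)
The plan is to reduce from a classical NP-hard problem about rational matrices: given $B\in\Q^{k\times n}$ and $m\in[n]$, decide whether some set of at most $m$ columns of $B$ is linearly dependent, i.e.\ whether $\mathrm{spark}(B)\le m$ (this decision problem is NP-complete; the hardness goes back to the analysis of the spark/cogirth by McCormick and appears in the compressed-sensing complexity work of Tillmann and Pfetsch). Passing in polynomial time to a rational basis $A\in\Q^{n\times d}$ of the null space of $B$, this is the same as deciding whether $\mathrm{colspan}(A)$ contains a nonzero vector $v$ with $\Norm{v}_0\le m$. One direction of the link to spreadness is immediate: if such a $v$ exists, set $S:=\supp(v)$, so $|S|\le m$ and $\Norm{v_S}_2=\Norm{v}_2>\delta\Norm{v}_2$ for \emph{every} $\delta\in(0,1)$, so $A$ is not $(m,\delta)$-spread. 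Hence the only real work is to produce, in polynomial time, a rational $\delta=\delta(A)\in(0,1)$ of polynomial bit-length for which the converse also holds: if $\mathrm{colspan}(A)$ contains no nonzero vector of support at most $m$, then $A$ \emph{is} $(m,\delta)$-spread. Granting this, $(A,m,\delta)$ is a ``not $(m,\delta)$-spread'' instance exactly when $\mathrm{spark}(B)\le m$, completing the reduction. (Strictly, this shows deciding non-spreadness is NP-hard --- equivalently, deciding $(m,\delta)$-spreadness is coNP-hard; we state it as NP-hardness following the usual convention.)

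For the converse I would use a root-separation estimate. Assume $\mathrm{colspan}(A)$ has no nonzero vector of support $\le m$; we may take $A$ to have full column rank $d$, and then $m\le n-d$ necessarily (otherwise such a vector would exist). Fix $S\subseteq[n]$ with $|S|\le m$ and write $\bar S=[n]\setminus S$. The hypothesis says the map $v\mapsto v_{\bar S}$ is injective on $\mathrm{colspan}(A)$, i.e.\ $A^{\top}\Pi_{\bar S}A\succ 0$, where $\Pi_{\bar S}$ is the coordinate projection that zeroes the entries in $S$; in particular the smallest root $\mu_S$ of $p_S(\lambda):=\det\bigl(A^{\top}\Pi_{\bar S}A-\lambda\,A^{\top}A\bigr)$ is strictly positive. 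Then for every $v=Ax\in\mathrm{colspan}(A)$,
\[
  \Norm{v_{\bar S}}_2^2=x^{\top}A^{\top}\Pi_{\bar S}A\,x \ge \mu_S\,x^{\top}A^{\top}A\,x=\mu_S\Norm{v}_2^2,
\]
hence $\Norm{v_S}_2^2=\Norm{v}_2^2-\Norm{v_{\bar S}}_2^2\le(1-\mu_S)\Norm{v}_2^2$. Thus $A$ is $\bigl(m,\sqrt{1-\mu}\,\bigr)$-spread with $\mu:=\min_{|S|\le m}\mu_S>0$. It remains to lower-bound $\mu$: the coefficients of each $p_S$ are sums of products of $O(d)$ entries of $A$, hence rationals of bit-length $\poly(\aB{A})$ with a bound independent of $S$, and $p_S(0)=\det(A^{\top}\Pi_{\bar S}A)\ne 0$ is a nonzero rational of bit-length $\poly(\aB{A})$; so Cauchy's bound on the moduli of the roots of a polynomial gives $\mu_S\ge 2^{-\poly(\aB{A})}$, uniformly in $S$. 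Setting $\eta:=2^{-\poly(\aB{A})}$ to be such an explicit, polynomial-time-computable lower bound and $\delta:=1-\eta/4\in\Q$, we get $\delta^2\ge 1-\eta\ge 1-\mu$, so $A$ is $(m,\delta)$-spread and $\delta$ has polynomial bit-length.

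The crux is this last quantitative step: promoting the \emph{exact} statement ``$\mathrm{colspan}(A)$ has no $m$-sparse vector'' to the \emph{robust} statement ``$A$ is $(m,\delta)$-spread for an explicit $\delta$ bounded away from $1$'', uniformly over the exponentially many subsets $S$. The Cauchy-bound bookkeeping is routine for hardness results over $\Q$, but it must be done carefully, since careless bit-length estimates would yield a $\delta$ of super-polynomial size. As a secondary matter, if one prefers a self-contained proof not invoking the NP-hardness of spark, the obstacle moves to re-deriving that fact: the natural route is a reduction from \textsc{exact cover by 3-sets}, assembling $A$ (after homogenizing the affine constraint ``each element is covered exactly once'') from the set--element incidence matrix so that the support-$q$ vectors of $\mathrm{colspan}(A)$ are exactly the exact covers; the delicate point there is to replace the $0/1$ incidence entries by rationals of bounded bit-length in sufficiently general position, so that the incidence pattern creates no spurious small-support vectors beyond the intended ones. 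In either case the root-separation argument above then converts the exact combinatorial statement into the required $(m,\delta)$ certificate.
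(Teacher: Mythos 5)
Your proposal is correct and follows the same overall strategy as the paper's proof: reduce from the matrix--spark decision problem (via a polynomial-time computation of a null-space basis, matching the paper's \cref{problem:decide-well-spreadness-kernel} $\to$ \cref{problem:decide-well-spreadness} step), observe that the easy direction holds for any $\delta\in(0,1)$, and then do the real work of producing a rational $\delta$ of polynomial bit-length so that ``$\spark>m$'' forces $(m,\delta)$-spreadness. The one place you genuinely diverge is the mechanism for that quantitative step: the paper bounds $\norm{x_S}_2/\norm{x_{\bar S}}_2$ by the ratio $\sigma_{\max}(A_{\bar S})/\sigma_{\min}(A_S)$, controls $\sigma_{\max}$ by a Frobenius-norm estimate and $\sigma_{\min}$ by an explicit determinant lower bound $\sigma_{\min}(A_S)^2 \geq (pnP^2)^{1-p}$ imported from the RIP-certification literature, and uses a sort-and-take-top-$m$ trick to avoid ranging over subsets; you instead set up the generalized eigenproblem $\det(A^{\top}\Pi_{\bar S}A - \lambda A^{\top}A)$ and invoke a Cauchy-type root-separation bound to get $\mu_S \geq 2^{-\poly(\aB{A})}$ uniformly in $S$. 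Both are standard algebraic bit-complexity arguments that yield a $\delta = 1 - 2^{-\poly}$, and both correctly flag (as your parenthetical does, and as the paper's closing remark also does) that the resulting $\delta$ is $1-o(1)$, so the result says nothing about constant $\delta$.
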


To prove \cref{thm:decide-spreadness-is-np-hard}, we will show the following problem is NP-hard and there exists a polynomial-time reduction from this problem to \cref{problem:decide-well-spreadness}.

\begin{problem} \label[problem]{problem:decide-well-spreadness-kernel}
  Given as input $A\in\Q^{p\times n}$, $m\in[n]$, and $\delta\in\Q$, decide whether $\ker{A}$ is $(m,\delta)$-spread.
\end{problem}

Following \cite{bandeira2013certifying, tillmann2014computational}, our proof of the NP-hardness of \cref{problem:decide-well-spreadness-kernel} is based on a reduction from the problem of deciding \emph{matrix spark} (\cref{problem:spark}).

\begin{definition}[Matrix spark] \label[definition]{def:spark}
  The spark of a matrix $A$ is the smallest number $k$ such that there exists a set of $k$ columns of $A$ that are linearly dependent. 
  Equivalently,
  \[ \spark{A} := \min\Set{\Norm{x}_0 : Ax=0, x\neq0}. \]
\end{definition}

\begin{problem} \label[problem]{problem:spark} 
  Given as input $A\in\Q^{p\times n}$ and $m\in\N$, decide whether $\spark{A}>m$.
\end{problem}

By a reduction from the NP-complete $k$-clique problem, i.e. deciding whether a given simple graph has a clique of size $k$,  \cref{problem:spark} is proven to be NP-hard in \cite{tillmann2014computational}.
Moreover, the matrices in the hard instances of \cref{problem:spark} are integer matrices whose entry-wise encoding length is bounded by a polynomial in $p$ and $n$.

\begin{theorem} \label{thm:decide-spreadness-kernel-is-np-hard}
  \cref{problem:decide-well-spreadness-kernel} is NP-hard.
\end{theorem}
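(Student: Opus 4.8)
The plan is to exhibit a polynomial-time many-one reduction from \cref{problem:spark}, whose NP-hardness (via a reduction from $k$-clique) is established in \cite{tillmann2014computational}. On an input $(A,m)$ of \cref{problem:spark} I may assume, using the structure of the hard instances of \cite{tillmann2014computational}, that $A\in\Z^{p\times n}$ with $\aB{A}\le\poly(p,n)$; after disposing of the degenerate cases $m\notin[n]$ (where $\spark{A}>m$ reduces to deciding whether $A$ has full column rank), assume $1\le m\le n$. The reduction outputs the triple $(A,m,\delta)$ as an instance of \cref{problem:decide-well-spreadness-kernel}, where $\delta\in\Q$ is a rational slightly below $1$, of bit length polynomial in $\aB{A}$ and $n$, specified below. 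The equivalence to establish is
\[ \spark{A}>m \iff \ker{A}\ \text{is}\ (m,\delta)\text{-spread}. \]

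One direction is immediate: if $\spark{A}\le m$, pick $v\in\ker{A}\setminus\{0\}$ with $\Norm{v}_0\le m$ and let $S:=\supp(v)$, so $|S|\le m$ and $\Norm{v_S}_2=\Norm{v}_2>\delta\Norm{v}_2$; hence $\ker{A}$ is not $(m,\delta)$-spread, for any rational $\delta<1$. The substance is the converse. Here I would prove that when $\spark{A}>m$ the worst-case concentration
\[ S^\star := \sup_{v\in\ker{A}\setminus\{0\}}\frac{\max_{|S|\le m}\Norm{v_S}_2^2}{\Norm{v}_2^2} \]
satisfies $S^\star\le 1-2^{-q}$ for an explicit exponent $q=\poly(\aB{A},n)$ that the reduction can compute; then setting $\delta:=1-2^{-q-2}$, a rational with $1-2^{-q}\le\delta^2<1$, makes $\ker{A}$ be $(m,\delta)$-spread precisely when $\spark{A}>m$, completing the equivalence.

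To bound $S^\star$, write $V:=\ker{A}$, let $\Pi_V$ be the orthogonal projector onto $V$ --- a rational matrix of polynomially bounded bit-complexity, e.g.\ $\Pi_V=B(\transpose{B}B)^{-1}\transpose{B}$ for an integer basis $B$ of $V$ read off a row-echelon form of $A$ --- and for $S\subseteq[n]$ let $\Pi_S$ be the diagonal $0/1$ projector onto the coordinates of $S$. A Rayleigh-quotient computation gives $\sup_{v\in V\setminus\{0\}}\Norm{v_S}_2^2/\Norm{v}_2^2=\lmax(\Pi_V\Pi_S\Pi_V)$, so $S^\star=\max_{|S|\le m}\lmax(\Pi_V\Pi_S\Pi_V)$. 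Moreover $\lmax(\Pi_V\Pi_S\Pi_V)=1$ holds iff $V$ contains a nonzero vector supported inside $S$; hence $\spark{A}>m$ forces $\lmax(\Pi_V\Pi_S\Pi_V)<1$ for every $S$ with $|S|\le m$. Fixing such an $S$ and setting $M_S:=\Pi_V\Pi_S\Pi_V$ --- a rational symmetric matrix with eigenvalues in $[0,1]$ and polynomially bounded bit-complexity --- the matrix $\Id-M_S$ is symmetric with eigenvalues in $[0,1]$ and, because $\lmax(M_S)<1$, positive definite, whence $1-\lmax(M_S)=\lmin(\Id-M_S)\ge\det(\Id-M_S)$ (the remaining eigenvalues are at most $1$). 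Since $\det(\Id-M_S)$ is a positive rational whose denominator is at most $2^{q}$ for some $q=\poly(\aB{A},n)$, we get $\det(\Id-M_S)\ge 2^{-q}$, and taking the maximum over the at most $\binom{n}{m}$ eligible $S$ gives $S^\star\le 1-2^{-q}$ whenever $\spark{A}>m$. As $q$ is computable from $(A,m)$ in polynomial time, $(A,m)\mapsto(A,m,1-2^{-q-2})$ is a polynomial-time reduction, and \cref{thm:decide-spreadness-kernel-is-np-hard} follows.

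The main obstacle is precisely this effective-gap estimate. A single rational $\delta<1$, applied with no instance-dependent tuning, cannot suffice for a direct ``$\spark{A}>m$'' reduction: a subspace with $\spark{A}>m$ may still contain vectors with $m$ large coordinates and a single tiny $(m{+}1)$-st coordinate, driving the concentration ratio arbitrarily close to --- though strictly below --- $1$. The reduction genuinely relies on the fact that for the specific integer matrix it receives this ratio is separated from $1$ by a reciprocal-polynomial amount, and turning this into a bound requires careful propagation of bit-complexity through $\Pi_V$, the products $\Pi_V\Pi_S\Pi_V$, and their determinants, together with the elementary observation that a nonzero integer has absolute value at least $1$. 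I expect this bookkeeping, rather than any conceptual leap, to be the bulk of the work.
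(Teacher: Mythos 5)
Your proposal is correct and uses the same top-level reduction as the paper (from the NP-hard spark problem of \cite{tillmann2014computational}, on the same family of integer hard instances), but it establishes the quantitative gap --- the existence of a rational $\delta<1$ with polynomially bounded bit length that works uniformly over all $S$ with $|S|\le m$ when $\spark{A}>m$ --- by a genuinely different route. The paper works with submatrices of $A$ directly: from $A_S x_S = -A_{\bar S}x_{\bar S}$ it bounds $\Norm{x_S}_2/\Norm{x_{\bar S}}_2 \le \sigma_{\max}(A_{\bar S})/\sigma_{\min}(A_S)$, controls $\sigma_{\max}(A_{\bar S})$ by the Frobenius norm, and imports an explicit lower bound $\sigma_{\min}(A_S)^2 \ge (pmP^2)^{1-m}$ on the least singular value of any $m$ linearly independent integer columns from \cite[Theorem~4]{bandeira2013certifying}, yielding $\delta = 1 - \tfrac{1}{2((pnP^2)^p+1)}$. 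You instead pass to the projector formulation, identifying the worst concentration ratio with $\max_{|S|\le m}\lmax(\Pi_V\Pi_S\Pi_V)$, then bound the gap to $1$ via $\lmin(\Id - M_S)\ge\det(\Id-M_S)$ and the observation that, after clearing the (polynomially bounded) common denominator of $\Pi_V$, the determinant is a positive integer over $D^{2n}$ and hence at least $2^{-q}$ for computable $q=\poly(\aB{A},n)$. Your argument is self-contained (it avoids the citation to \cite{bandeira2013certifying} for the $\sigma_{\min}$ estimate, replacing it with the elementary ``a nonzero integer is at least $1$'' device applied to a determinant), at the mild cost of having to construct $\Pi_V$ and track its bit-complexity through Gaussian elimination; the paper's argument sidesteps $\Pi_V$ entirely by working with submatrices of $A$ itself, at the cost of invoking an external singular-value lower bound. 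Both are sound and give a polynomial-time reduction, so the theorem follows either way.
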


\begin{proof}
  Let $(A,m)$ be a hard instance of \cref{problem:spark} given by \cite{tillmann2014computational}.
  Let $P=\Norm{A}_{\infty}$. It is known that $\aB{P}$ is bounded by some polynomial in $p$ and $n$.
  Our strategy is to choose an appropriate rational number $\delta\in(0,1)$ with $\aB{\delta}$ bounded by some polynomial in $p$ and $n$ such that the following is true.
  When we give the instance ($A,m,\delta$) to an oracle of \cref{problem:decide-well-spreadness-kernel}, 
  if the answer is YES, then $\spark{A}>m$; 
  if the answer is NO, then $\spark{A}\leq m$.
  If such a $\delta$ exists, then we have a polynomial-time reduction from \cref{problem:spark} to \cref{problem:decide-well-spreadness-kernel}, and as a result, \cref{problem:decide-well-spreadness-kernel} is NP-hard.

  In the following, we show how to construct such a $\delta\in(0,1)$.
  For the case when the oracle answers YES, it is straightforward to see $\spark{A}>m$ for any $\delta\in(0,1)$.
  For the case when the oracle answers NO, we consider the contrapositive. 
  Assume $\spark{A}>m$.
  We want a $\delta\in(0,1)$ such that $\Norm{v_{S}}_2\leq\delta\Norm{v}_2$ for any nonzero $v\in\ker{A}$ and any $S\subseteq[n]$ with $|S|\leq m$.
  
  Take an arbitrary nonzero vector $x\in\ker{A}$. Without loss of generality, assume $\abs{x_1} \geq \abs{x_2} \geq \cdots \geq \abs{x_n}$. Let $S=[m]$ and $\bar{S}=[n]\sm S$. Then it suffices to upper bound $\Norm{x_{S}}_2/\Norm{x}_2$ by $\delta$.
  Let $A_S$ be the $m\times k$ submatrix of $A$ with columns indexed by $S$ and define $A_{\bar{S}}$ likewise. 
  Since $x\in\ker{A}$, we have
  \begin{align*}
    Ax=0 &\iff A_Sx_S + A_{\bar{S}}x_{\bar{S}} = 0  \\
    &\implies \Norm{A_Sx_S}_2 = \Norm{A_{\bar{S}}x_{\bar{S}}}_2 \\
    &\implies \frac{\Norm{x_{S}}_2}{\Norm{x_{\bar{S}}}_2} \leq \frac{\sigma_{\max}\bB{A_{\bar{S}}}}{\sigma_{\min}\bB{A_S}}.
  \end{align*} 
  It is easy to see
  \[ \sigma_{\max}\bB{A_{\bar{S}}} \leq \Norm{A_{\bar{S}}}_F \leq \Norm{A}_F \leq \sqrt{pn}\cdot P. \]
  From the proof of \cite[Theorem 4]{bandeira2013certifying}, we know 
  \[ \sigma_{\min}\bB{A_{S}}^2 \geq \bB{pmP^2}^{1-m} \geq \bB{pnP^2}^{1-p}. \]
  Therefore,
  \begin{align*}
    \frac{\Norm{x_S}_2}{\Norm{x}_2} 
    \leq \sqrt{\frac{1}{1+\sigma_{\min}\bB{A_S}^2/\sigma_{\max}\bB{A_{\bar{S}}}^2}}
    \leq \sqrt{\frac{1}{1+\bB{pnP^2}^{-p}}}
    \leq 1 - \frac{1}{2\bB{\bB{pnP^2}^p+1}}.
  \end{align*}
  Set $\delta = 1 - \frac{1}{2\bB{\bB{pnP^2}^p+1}}$. 
  Then $\ker{A}$ is $(m,\delta)$-spread.
  Moreover, $\aB{\delta}\leq f(p,n)$ for some polynomial $f$.
\end{proof}

Now we are ready to prove \cref{thm:decide-spreadness-is-np-hard}.

\begin{proof}
  Given \cref{thm:decide-spreadness-kernel-is-np-hard}, it only remains to show there exists a polynomial-time reduction from \cref{problem:decide-well-spreadness-kernel} to \cref{problem:decide-well-spreadness}.
  It is well-known that, given as input a matrix $X\in\Q^{p\times n}$, Gaussian elimination is able to produce in polynomial time a matrix $Y\in\Q^{n\times(n-p)}$ such that $\ker{X}=\col{Y}$ and $\aB{Y}$ is polynomial in $\aB{X}$.
\end{proof}

\cref{thm:decide-spreadness-is-np-hard} establishes the NP-hardness of deciding whether a given matrix is $(m,\delta)$-spreadness when $m$ and $\delta$ are also inputs.
Nevertheless, this result has a major limitation in the conext of oblivious regression.
That is, the parameter $\delta\in(0,1)$ used in the above proof is $1-o(1)$ that this result reveals almost nothing about the hardness of the more interesting case when $\delta$ is a constant.

\end{document}